\documentclass{article}
\usepackage{Style/arxiv}
\usepackage[utf8]{inputenc} 
\usepackage[T1]{fontenc}    
\usepackage{hyperref}       
\usepackage{url}            
\usepackage{booktabs}       
\usepackage{amsfonts}       
\usepackage{nicefrac}       
\usepackage{microtype}      
\usepackage{lipsum}		
\usepackage{graphicx}
\usepackage{doi}
\AtBeginDocument{%
  }

\usepackage[utf8]{inputenc} 
\usepackage[T1]{fontenc}    
\usepackage{hyperref}       
\usepackage{url}            
\usepackage{booktabs}       
\usepackage{amsfonts}       
\usepackage{nicefrac}       
\usepackage{microtype}      
\usepackage{xcolor}         

\usepackage{graphicx}%
\usepackage{multirow}%
\usepackage{amsmath,amssymb,amsfonts}%
\usepackage{amsthm}%
\usepackage{mathrsfs}%
\usepackage{appendix}%
\usepackage{xcolor}%
\usepackage{textcomp}%
\usepackage{manyfoot}%
\usepackage{booktabs}%
\usepackage{algorithm}%
\usepackage{algorithmicx}%
\usepackage{algpseudocode}%
\usepackage{listings}%
\usepackage{todonotes}%
\usepackage{array}
\usepackage{bm}
\usepackage{MnSymbol}

\usepackage[utf8]{inputenc} 
\usepackage[T1]{fontenc}    
\usepackage{hyperref}       
\usepackage{url}            
\usepackage{booktabs}       
\usepackage{amsfonts}       
\usepackage{nicefrac}       
\usepackage{microtype}      
\usepackage{xcolor}         
\usepackage{multicol}

\newtheorem{assumption}{Assumption}
\newtheorem{lemma}{Lemma}
\newtheorem{proposition}{Proposition}

\newcommand{\R}{\mathbb{R}}

\newcommand{\cW}{\mathcal{W}}
\newcommand{\cWe}{\cW_\varepsilon}
\newcommand{\sWe}{\mathscr{W}_\varepsilon}
\newcommand{\cG}{\mathcal{G}}
\newcommand{\sG}{\mathscr{G}}

\newcommand{\Dt}{{\delta t}}
\newcommand{\rme}{\mathrm{e}}
\newcommand{\Id}{I} 

\usepackage{adjustbox}
\definecolor{lightblue}{rgb}{0.68, 0.85, 0.9} 
\definecolor{lilac}{rgb}{0.78, 0.64, 0.78} 
\definecolor{lightlilac}{rgb}{0.87, 0.75, 0.87} 
\usepackage{subcaption} 

\usepackage{thmtools, amsthm}

\title{Adaptive Momentum and Nonlinear Damping for Neural Network Training}

\author{
Aikaterini Karoni\\
	School of Mathematics\\
	University of Bristol\\
    \texttt{ve24580@bristol.ac.uk}
\And
Rajit Rajpal\\
	School of Mathematics\\
	University of Edinburgh\\
    \texttt{s2592586@ed.ac.uk}
\And
Benedict Leimkuhler        \\
	School of Mathematics\\
	University of Edinburgh\\
\texttt{b.leimkuhler@ed.ac.uk} 
\And
Gabriel Stoltz        \\
	 CERMICS, CNRS, \'Ecole des Ponts, Institut Polytechnique de Paris; Inria Paris, France\\
\texttt{gabriel.stoltz@enpc.fr} 
}

\renewcommand{\shorttitle}{Adaptive Momentum and Nonlinear Damping for Neural Network Training}

\hypersetup{
pdftitle={Adaptive Momentum and Nonlinear Damping for Neural Network Training},
pdfauthor={Aikaterini Karoni, Rajit Rajpal, Benedict Leimkuhler, Gabriel Stoltz},
}
\date{}
\begin{document}
\maketitle
\begin{abstract}
    Momentum Stochastic Gradient Descent (mSGD) relies on a fixed momentum coefficient shared across all parameters, failing to account for the heterogeneous structure of modern loss landscapes. In this work, we adopt a continuous-time formulation to introduce individual, adaptive momentum coefficients regulated by the kinetic energy of each model parameter. This mechanism automatically adjusts to evolving training dynamics to maintain stability without sacrificing convergence speed. We demonstrate that this adaptive friction is inextricably linked to cubic damping, a suppression mechanism from structural dynamics. We additionally introduce two optimization schemes by augmenting the continuous dynamics of mSGD and Adam with a cubic damping term. Empirically, our methods demonstrate robustness and match or outperform Adam on training ViT, BERT, and GPT2 tasks where mSGD typically struggles. We further provide theoretical results establishing the exponential convergence of the proposed schemes.
\end{abstract}

\keywords{Optimization, Neural Networks, Large Language Models, Dynamical Systems, Transformers}

\section{Introduction}
\label{sec:Introduction}
Neural network loss landscapes are highly non-convex, multi-modal and difficult to navigate \cite{choromanska2015loss, li2018visualizing}. Improvements in training dynamics directly impact
convergence speed, generalization performance, and computational cost. Momentum-based stochastic gradient methods are widely used to address these challenges. By maintaining exponentially weighted averages of past gradients, momentum stochastic gradient descent (mSGD) can accelerate progress along low-curvature directions and help control oscillations caused by stochastic gradients and ill-conditioning \cite{sutskever2013importance}. Recent work also shows that momentum can reduce stochastic-gradient bias \cite{shaw2025randomised}. In practice, this allows for larger and more robust learning rate choices than standard SGD. However, if not carefully tuned, the inertia introduced through the momentum mechanism can itself become a source of oscillations in directions of high curvature or lead to excessive damping and slow convergence.

The standard practice in deep learning is to keep the momentum coefficient fixed during training, despite the presence of highly coordinate-dependent curvature. In this work, we argue that the momentum coefficient should not remain constant during training. Instead, it should be dynamically adapted to reflect the evolving kinetic energy of each parameter. Importantly, this adaptation should happen on a per-parameter basis to account for heterogeneity in the loss landscape. 

The persistent performance gap between mSGD and Adam in transformer training has motivated several recent works seeking to explain this discrepancy \cite{zhang2024transformers, kunstner2023noise, tomihari2025understanding, zhang2020adaptive}. While one proposed explanation is that Adam is more robust to heavy-tailed gradient noise \cite{zhang2020adaptive}, recent work \cite{kunstner2023noise} demonstrates that Adam's advantage persists even in the full gradient setting, where it behaves similarly to sign descent with momentum. This suggests that Adam's strength lies in its coordinate-wise normalization, which allows it to tackle the Hessian heterogeneity and anisotropic curvature that often cause mSGD to struggle \cite{zhang2020adaptive}. As illustrated in Figure~\ref{fig:intro_gap_plot}, our proposed methods substantially reduce this gap on the transformer tasks considered, without requiring explicit per-parameter adaptive learning rates.

\begin{figure}
    \centering
    \includegraphics[width=0.72\linewidth]{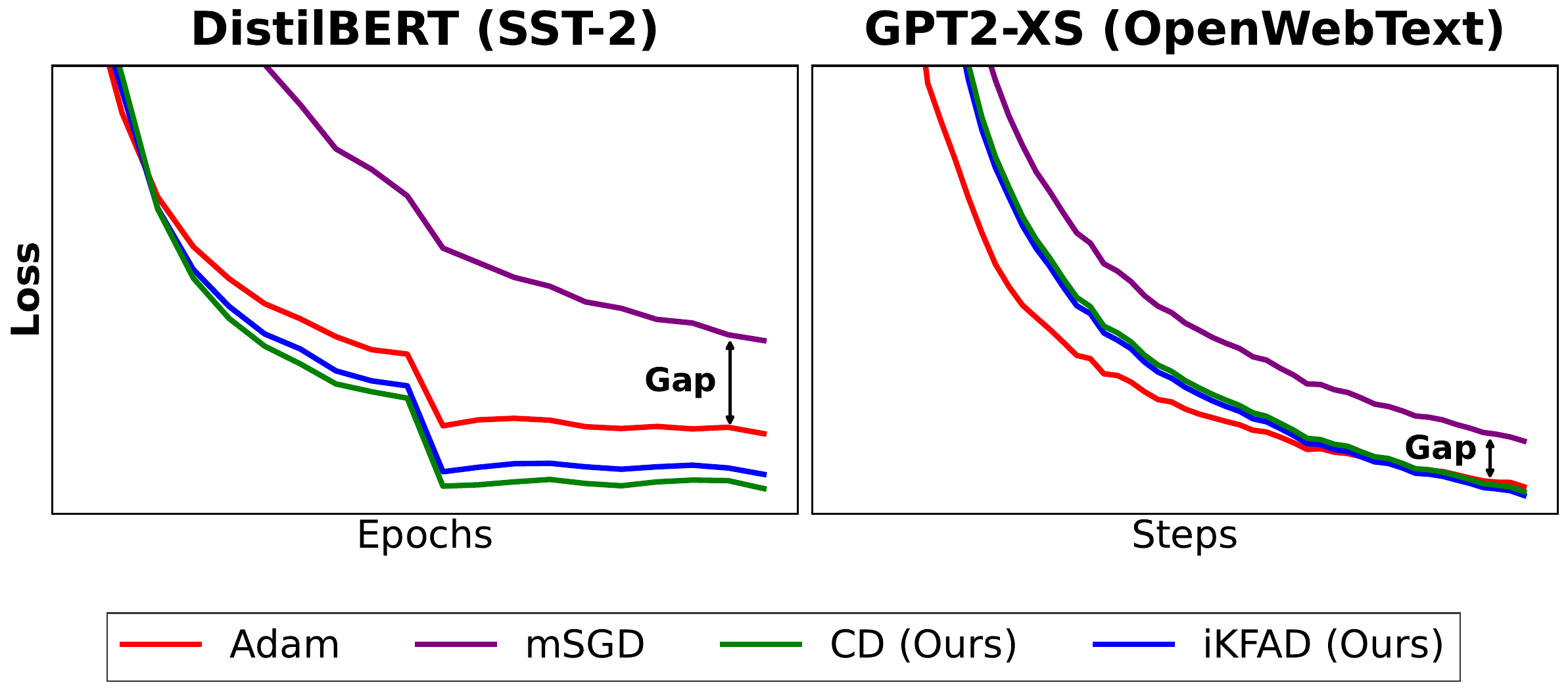}
    \caption{\textbf{Adam-mSGD gap}: Selected experiments demonstrating that CD and iKFAD can reduce the gap between Adam and mSGD without per-parameter adaptive learning rates on language modeling tasks using Transformers.}
    \label{fig:intro_gap_plot}
\end{figure}

To clarify the mechanics of our approach, we first establish the connection between momentum and friction (or damping). We begin with the discrete dynamics of mSGD \cite{MR169403}, which can be written as \footnote{Here $p_n$ denotes the optimization momentum buffer, which differs by a sign from the physical momentum variable in \eqref{eq:ldhd_p}. }
\begin{subequations} \label{eq:mgd_discr}
\begin{align}
    p_{n+1} &= \mu p_n + \nabla f(x_n), \label{eq:discr_mgd_p} \\
    x_{n+1} &= x_n - \delta t p_{n+1}. \label{eq:discr_mgd_x}
\end{align}
\end{subequations}
In the zero–learning rate limit, these equations correspond to Linearly Dissipative Hamiltonian Dynamics (LDHD) \cite{gao2018m, maddison2018hamiltonian, simsekli2020fractional}:
\begin{subequations} \label{eq:ldhd}
\begin{align}
\dot{x} &= p, \label{eq:ldhd_x} \\
\dot{p} &= -\nabla f(x) - \gamma p, \label{eq:ldhd_p}
\end{align}
\end{subequations}
where $\gamma > 0$ is the friction coefficient. The "$-\gamma p$"term introduces linear damping or friction into the dynamics. Conversely, applying an Euler discretization to the system \eqref{eq:ldhd} yields the discrete dynamics in \eqref{eq:mgd_discr} suggesting that the linear dissipation manifests as the exponentially weighted averaging of gradients characteristic of classical momentum. This relationship establishes a direct equivalence between the friction coefficient $\gamma$ and the momentum coefficient $\mu$ through the relation
\begin{equation} \label{eq:mu_gamma_correspondence}
   \mu = 1- \gamma \sqrt{\delta t}
\end{equation}
(see Appendix~\ref{app:ldhd}). Consequently, the momentum coefficient $\mu$ in the discrete setting performs the exact same role as the friction $\gamma$ in continuous time: both regulate the momenta $p$ and by extension, the kinetic energy of the optimizer.

In continuous time, the optimization process can be viewed as the motion of a particle navigating a potential field defined by the loss function $f(x)$. The optimizer state is defined by its position $x$ and momentum $p$, with the total energy $H(x, p) = f(x) + \frac{1}{2}\|p\|^2$ corresponding to the sum of the current loss (potential energy) and kinetic energy. In the absence of dissipation ($\gamma = 0$), energy is conserved and the dynamics oscillate around the minimum without settling. Dissipation is necessary to remove kinetic energy and allow the dynamics to converge. While the momentum mechanism is often viewed as introducing exponentially weighted averages of past gradients, this Hamiltonian perspective reframes it as a dissipative mechanism that regulates the optimizer dynamics. The above $\mu-\gamma$ equivalence shows that a fixed momentum coefficient $\mu$ corresponds to applying uniform damping across all coordinates, which  may be insufficient in highly anisotropic loss landscapes.

Here, it is also useful to present the continuous dynamics of Adam \cite{da2020general}, \cite{karoni2024higher}. \footnote{We note that the continuous-time dynamics in Eq.~\eqref{eq:adam} are not intended as an exact continuous limit of the discrete Adam equations \cite{kingma2017adammethodstochasticoptimization}, as they do not include bias correction.} 
\begin{subequations}  \label{eq:adam}
    \begin{align}
        \dot{x} &= \dfrac{p}{\sqrt{\zeta+\epsilon}}, \\
        \dot{p} &= -\nabla f(x) - \gamma p,  \\
        \dot{\zeta} &= [\nabla f(x)]^2 - \alpha \zeta,
    \end{align}
\end{subequations}
where ``$[\cdot]^k$'' denotes element-wise exponentiation. Adam \cite{kingma2017adammethodstochasticoptimization} combines the momentum mechanism with per-parameter adaptive learning rates. Adam performs this adaptation by maintaining an exponential moving average of squared gradients. While this quantity does not directly estimate curvature, it provides a crude diagonal approximation to the gradient second moment, enabling adaptive per-parameter learning rates that dampen updates in directions with high gradient variability.  

\begin{figure*}[t]
    \centering
    \includegraphics[width=1.0\linewidth]{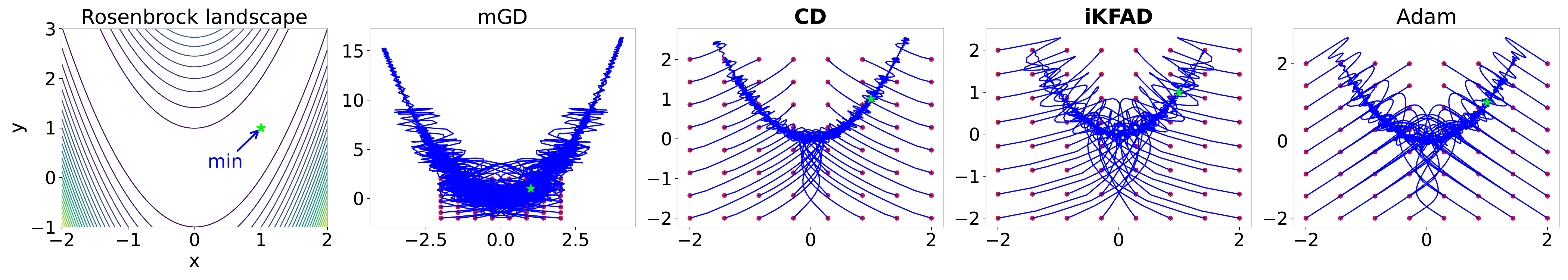}
    \caption{\textbf{Phase portraits for a two-dimensional Rosenbrock function}: Each red grid point corresponds to a different initialization in the $(x,y)$ domain and each blue line corresponds to a different optimization trajectory. The minimum $(x_{\text{min}}, y_{\text{min}}) = (1,1)$, is denoted by a green star. We set $\gamma=1$, $h=0.005$, and $\alpha=\rho=c=1$. mGD exhibits significant oscillations and overshooting of the minimum (note the different scale of the mGD y-axis). In contrast, our per-parameter, adaptive friction method iKFAD and cubic damping method CD lead to smoother and much more direct trajectories towards the minimum, similarly to Adam.}
    \label{fig:phase_portraits}
\end{figure*}

Instead of employing per-parameter learning rates to account for anisotropy in the optimization dynamics, our approach relies on coordinate-wise friction coefficients. Our iKFAD and CD methods correspond to replacing the constant friction coefficient in Equation~\eqref{eq:ldhd_p} with a coordinate-wise adaptive friction term. The resulting dynamics are of the following general form
\begin{subequations}
\label{eq:general_nonlinear_damping}
    \begin{align}
        \dot{x} &= p, \\
        \dot{p} &= - \nabla f(x) - g(p)\, \star p,
    \end{align}
\end{subequations}
where "$\star$" denotes element-wise vector multiplication. As we will see, $g(p)$ for iKFAD will be equal to an exponentially weighted average of past kinetic energies,  whereas for CD, $g(p)$ will be directly proportional to the instantaneous kinetic energy.
As a preliminary demonstration of these dynamics, we examine the behaviour of our optimizers in two low-dimensional examples (see Figures \ref{fig:phase_portraits} and \ref{fig:200_dim_quad_fft} and observe that our methods are more stable than mGD and exhibit faster convergence in these simple settings.
\section{Methodology}\label{sec:methodology}
The linear dissipation term $-\gamma p$ in Equation \eqref{eq:ldhd_p} acts as a damping force on the momenta, with the value of the friction coefficient $\gamma$ directly affecting their magnitude. Insufficient damping (low values of the friction parameter $\gamma$) can lead to high momenta, which can in turn result in oscillations and instability \cite{MR3348171}. Excessive damping on the other hand (high values of $\gamma$), leads to excessive energy dissipation, small momenta and slow convergence. Having a friction or momentum coefficient that dynamically adapts based on the kinetic energy of the system  would allow us to increase the damping when momenta are getting too high and decrease it when momenta are lower. This acts like a gentle thermostat on the system, allowing us to control the magnitude of the momenta. This adaptive friction mechanism was first introduced in the context of optimization in \cite{fad_2023} with only a single adaptive coefficient for all parameters. Here, we modify the original dynamics, by introducing individual adaptive friction coefficients for each of the model parameters. 

We refer to this new optimizer as \emph{Individual Kinetic Friction Adaptive Descent (iKFAD)} and describe its dynamics as
\begin{subequations} \label{eq:ikfad}
\begin{align}
    \dot{x} &= p, \label{eq:iKFAD_x} \\
    \dot{p} &= -\nabla f(x) - \gamma p - \xi \star p, \label{eq:iKFAD_p} \\
    \dot{\xi} &= \frac{[p]^2}{\rho} - \alpha \xi, \label{eq:iKFAD_xi}
\end{align}
\end{subequations}
where $\gamma, \alpha, \rho >0$,  $x,p, \xi \in \R^N$.

Although Equation \eqref{eq:iKFAD_p} includes a fixed friction $\gamma$, as well as the adaptive friction mechanism, we find that in practice, setting $\gamma = 0$ still maintains good performance. Retaining a fixed friction term, however, is helpful in obtaining theoretical convergence guarantees. 

Note that, while in the discrete setting, Adam is viewed as performing per-parameter learning rate adaptation through the second-moment estimate $\zeta$, its continuous time formulation can also be interpreted as performing per-parameter momentum adaptation. This provides a useful parallel with iKFAD. Both methods regulate the momenta $p$ on a per-parameter basis: Adam performs this adaptation in the position equation, $\dot{x} = \dfrac{p}{\sqrt{\zeta+\epsilon}}$, whereas iKFAD introduces this control directly through the momentum equation, $ \dot{p} = -\nabla f(x) - \gamma p - \xi \star p$.

To better understand the role of the adaptive friction coefficient $\xi$, we look at the exact solution of equation \eqref{eq:iKFAD_xi}, given as
\[
\xi(t) = \mathrm{e}{-\alpha t}\xi(0) + \frac{1}{\rho}\int_{0}^{t} [p(s)]^2 \mathrm{e}{-\alpha(t-s)}\,\mathrm{d}s.
\]
We see that $\xi$ is an exponentially weighted average of the squares of past momenta, i.e. kinetic energies. Thus, a history of high momenta leads to an increased damping coefficient $\xi$, which helps ``cool down'' the system and control oscillatory behaviour which might otherwise arise in the presence of high momenta. Conversely, a history of low momenta leads to less damping, allowing the optimizer to advance along low gradient, low-momenta regions.  



Our individual adaptive friction scheme addresses coordinate-wise anisotropy in the optimization dynamics in a manner analogous to the per-parameter learning rates used in adaptive optimizers such as Adam. While iKFAD does not explicitly rescale the gradient, the use of a separate friction coefficient for each parameter in the momentum update \eqref{eq:iKFAD_p} effectively regulates the update velocity on a per-coordinate basis. This is particularly relevant for architectures like Transformers, where parameters in different layers or attention heads exhibit vastly different update scales and noise characteristics, often limiting the effectiveness of mSGD's global momentum. By introducing parameter-specific damping, iKFAD adaptively modulates the inertia of each parameter, allowing the optimizer to suppress oscillations in high-variance directions while maintaining progress in flatter regions, achieving a similar stabilizing effect to Adam.

Next, we consider the near–equilibrium behaviour of \eqref{eq:iKFAD_xi}. When $\alpha$ is sufficiently large and upon setting $\dot{\xi} \approx 0$, we obtain
\[
\xi \approx (\alpha\rho)^{-1}[p]^2.
\]
Substituting this approximation into \eqref{eq:iKFAD_p} yields
\[
\dot{p} \approx -\nabla f(x) - (\alpha\rho)^{-1}[p]^3,
\]
indicating that iKFAD behaves like a \emph{cubically damped} momentum method in this regime. This motivates the study of the effect of cubic damping in state of the art optimizers such as mSGD and Adam. Cubic damping has long been studied as an effective vibration-suppression technique in engineering applications \cite{panananda2012effect} \cite{peng2010transmissibility}\cite{zhu2022}\cite{Ba1974}. Higher odd-order damping (such as quintic damping) would also be possible, provided that we use an odd power, so that the sign of the momenta is maintained.

We first introduce \emph{Cubically Damped mSGD (CD)}, the dynamics of which are obtained by augmenting the momentum equation \eqref{eq:ldhd_p} in LDHD by a cubic damping term.
\begin{subequations}\label{eq:cd}
\begin{align}
    \dot{x} &= p, \label{eq:cd:a}\\
    \dot{p} &= -\nabla f(x) - \gamma p - c [p]^3. \label{eq:cd:b}
\end{align}
\end{subequations}

An advantage of the cubic damping mechanism is that it is more gentle than its linear counterpart for low momenta and becomes more aggressive than linear damping as momentum increases. This allows for gentler damping and therefore enhanced exploration along directions where gradients and momenta are low, while allowing for increased damping and oscillation control along high-momentum directions.

\begin{figure}
    \centering \includegraphics[width=0.8\linewidth]{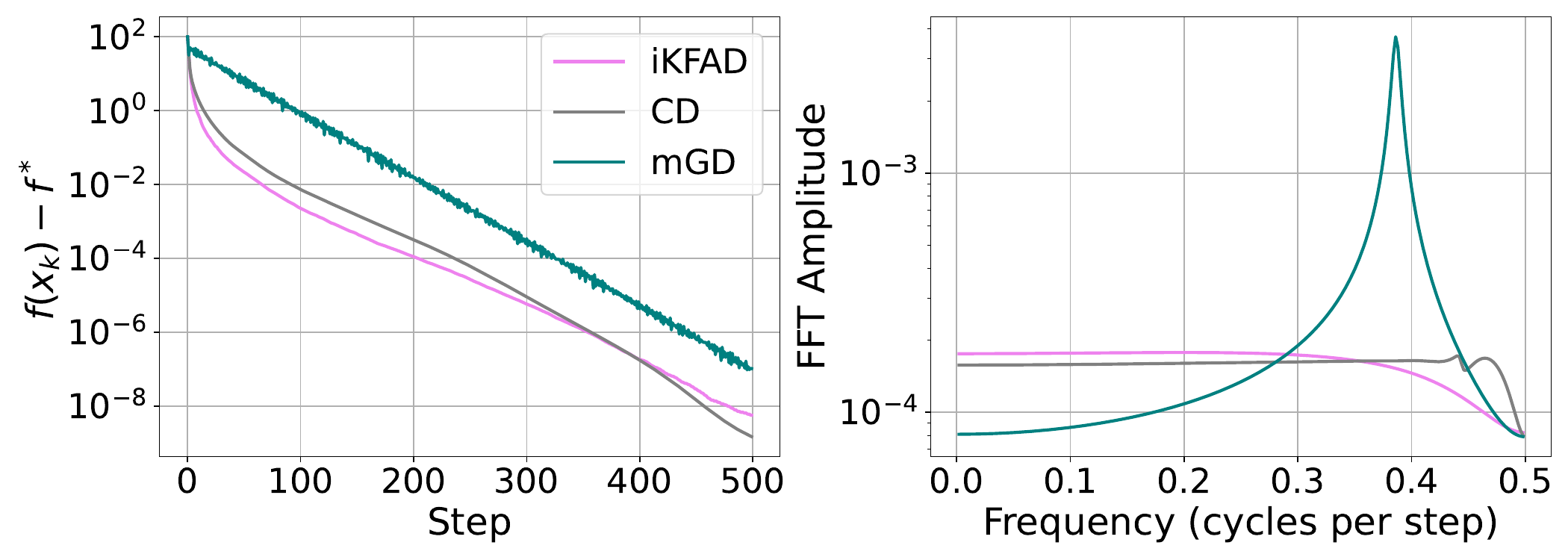}
    \caption{\textbf{Effect of adaptive friction and cubic damping on a two-hundred dimensional anisotropic quadratic}:  $f(\mathbf{x}) = \frac{1}{2} \, \mathbf{x}^T \mathbf{A} \mathbf{x}$, with eigenvalues ranging between $1$ and $10^4$. For momentum gradient descent (mGD), we choose the theoretically optimal learning rate $h = 2/\sqrt{M}$ and friction $\gamma = 2 \sqrt{m}$ \cite{MR4267491}, where $m=1$ and $M=10^4$ are the minimum and maximum eigenvalues of $A$ respectively. 
    \textbf{Left}: Objective function value $f(x)$ across optimization steps. Note that both iKFAD and CD outperform mGD in this setting. \textbf{Right}: Fourier spectrum of trajectory in direction of highest eigenvalue.}
    \label{fig:200_dim_quad_fft}
\end{figure}

Figure \ref{fig:200_dim_quad_fft} shows CD and iKFAD converging faster than theoretically optimal mGD on a 200-dimensional anisotropic quadratic function. Interestingly, mGD exhibits a pronounced spectral peak in the direction of the highest eigenvalue, while iKFAD and CD are able to significantly suppress oscillatory modes. Figure \ref{fig:phase_portraits} shows phase portraits of various optimizers on the Rosenbrock function. For each optimizer, we initialize a grid of starting points (red dots) and plot the corresponding deterministic trajectories (blue lines). CD and iKFAD display strong suppression of oscillations akin to Adam, whereas mGD exhibits significant oscillations and overshooting of the minimum. 

We now introduce our third optimizer, \emph{Cubically damped Adam (CADAM)}, the dynamics of which are obtained by augmenting the momentum equation in \eqref{eq:adam} by a cubic damping term:
\begin{subequations}  \label{eq:cadam}
    \begin{align}
        \dot{x} &= \dfrac{p}{\sqrt{\zeta+\epsilon}}, \\
        \dot{p} &= -\nabla f(x) - \gamma p  - c [p]^3,\\
        \dot{\zeta} &= [\nabla f(x)]^2 - \alpha \zeta.
    \end{align}
\end{subequations}

To numerically integrate the continuous-time dynamics, we employ an \emph{operator splitting} scheme. The key idea of operator splitting is to decompose the full vector field of the ODE into a sum of simpler sub-operators, each of which can be integrated exactly. The full update over one time step is then obtained by composing these partial flows, a type of splitting method (see \cite{Leimkuhler_Reich_2005}), although in practice an Euler-type discretization can be used of suitable modification of the dissipation coefficient is used. The explicit update rules for each step are provided in Appendix~\ref{app:split_updates}.

In terms of memory and compute, CD maintains an identical memory footprint of $2N$ optimizer states (position and momentum) as mSGD, while adding only negligible computational overhead for the element-wise cubing of the momentum vector. Similarly, iKFAD and CADAM have the same memory requirements as Adam ($3N$) as they each maintain one additional auxiliary vector state variable (adaptive friction ($\xi$) for iKFAD and second-moment estimate ($\zeta$) for CADAM).
\section{Related Work} 
Although adaptation of the momentum hyperparameter $\mu$ has been suggested in prior works, the area remains relatively unexplored. The authors of \cite{sutskever2013importance} advocated for a gradual increase in the momentum coefficient as training progresses, based on the theoretical convergence results of \cite{nesterov1983method, nesterov2013introductory}. We provide a comparison of iKFAD against two known momentum schedules on FashionMNIST training with a ResNet18 in Appendix \ref{app:sustkever}. The authors of \cite{MR3555044} derive a second-order ODE for Nesterov's accelerated gradient method, which also involves a time-dependent friction coefficient. In \cite{fad_2023}, an adaptive momentum coefficient based on the kinetic energy of the system was demonstrated (on molecular optimization) to lead to faster convergence and increased robustness with respect to the choice of learning rate compared to the use of a fixed momentum parameter. Adaptive friction has been studied in Langevin sampling, known as the Adaptive Langevin method \cite{NIPS2014_b610047c, NIPS2015_6f4922f4}, which employs a thermostat to regulate noisy gradients. Controlling the momenta $p$ to stabilize the optimization process and avoid oscillations has also been explored in \cite{MR3348171}, where the authors propose a momentum restart mechanism that resets the momenta to zero when oscillatory behaviour emerges, typically once the momenta exceed a critical threshold. Rather than resetting momenta, iKFAD employs a continuous control mechanism that adapts the friction to keep the momenta within a stable range. 
In terms of theoretical work, \cite{MR4877379} prove that choosing the friction parameter based on the smallest eigenvalue of the Hessian leads to convergence speed-ups in the case of strongly convex potentials.

\section{Theoretical Results}
\label{sec:theory}
This section establishes the convergence properties of iKFAD and CD. For strongly convex objective functions $f \in C^2$ with bounds on the highest and lowest eigenvalues of the Hessian, we prove that the continuous-time dynamics of iKFAD and CD exhibit exponential convergence to the global minimum. We also demonstrate that this exponential convergence rate is preserved in the discrete-time schemes for a sufficiently small $\delta t$. We leave the convergence analysis of CADAM for future work.

\subsection{iKFAD Convergence Analysis}
\begin{restatable}{theorem}{ikfadconttheorem}\label{thm:ikfad_continuous_theorem}
Consider a function $f \in C^2$ and assume that there exist $ a,b > 0$ such that
\begin{equation}
\label{eq:condition_f_exp_cv_ikfad_1}
\begin{split}
a\big[f(x)-f(x^*)\big] &+ b \|x-x^*\|^2 \\
&\leq (x-x^*)\cdot \left( \nabla f(x)-\nabla f(x^*) \right),
\end{split}
\end{equation}
where $x^*$ is the unique global minimum of $f$. Then, for any initial condition~$(x_0,p_0,\xi_0) \in \R^d \times \R^d \times \R_+^d$, and considering~$\gamma > 0$, there exist~$\kappa > 0$ and $C \in \R_+$ such that the solution of \eqref{eq:ikfad} satisfies
\[
f(x(t))-f(x^*) + \|p(t)\| + \|\xi(t)\| \leq C \rme^{-\kappa t}.
\]
\end{restatable}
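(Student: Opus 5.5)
We plan a Lyapunov argument with a modified energy that includes a cross term $\varepsilon (x-x^*)\cdot p$ and a term penalizing $\xi$. Without loss of generality set $x^*=0$, $f(x^*)=0$, and write $H(x,p)=f(x)+\frac12\|p\|^2$. Along \eqref{eq:ikfad},
\[
\frac{\mathrm{d}}{\mathrm{d}t}H = -\gamma\|p\|^2 - \sum_i \xi_i p_i^2 \le -\gamma\|p\|^2 ,
\]
since $\xi_i(t)\ge 0$ for all $t$. This positivity follows from the variation of constants formula for $\xi$ stated above, together with $\xi_0\in\R_+^d$. So $H$ is nonincreasing. This gives a priori bounds $f(x(t))\le H(0)$ and $\|p(t)\|^2\le 2H(0)$. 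Condition \eqref{eq:condition_f_exp_cv_ikfad_1} with $f\ge 0$ gives $b\|x\|^2\le x\cdot\nabla f(x)$, and integrating along rays then yields $f(x)\ge \frac b2\|x\|^2$. Hence $x$ stays bounded as well. Finally, $\xi$ is bounded: $\xi_i(t)\le \xi_i(0)+\frac{2H(0)}{\alpha\rho}=:\Xi$ for all $t$. These bounds, which depend on the initial condition, are what make the constants $C,\kappa$ depend on $(x_0,p_0,\xi_0)$.

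Next we introduce
\[
L=H+\varepsilon\, x\cdot p+\frac{\varepsilon\gamma}{2}\|x\|^2 .
\]
The quadratic term is optional but can be used to absorb the friction contribution. Its derivative is
\[
\varepsilon\big(\|p\|^2 - x\cdot\nabla f(x) - \gamma x\cdot p - x\cdot(\xi\star p)\big)+\varepsilon\gamma x\cdot p .
\]
The $\gamma$ cross terms cancel with the added quadratic term. By \eqref{eq:condition_f_exp_cv_ikfad_1}, $-x\cdot\nabla f\le -a f - b\|x\|^2$. The only problematic term is $-\varepsilon x\cdot(\xi\star p)$. We bound it by Young's inequality using $\xi_i\le\Xi$:
\[
|x\cdot(\xi\star p)|\le \frac b2\|x\|^2+\frac{\Xi^2}{2b}\|p\|^2 .
\]
This gives
\[
\dot L\le -(\gamma-\varepsilon(1+\Xi^2/(2b)))\|p\|^2-\varepsilon a f-\varepsilon\frac b2\|x\|^2 .
\]
For $\varepsilon$ small enough (depending on $\Xi$), $L$ is equivalent to $f+\|p\|^2+\|x\|^2$. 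Here we use $f\ge \frac b2\|x\|^2$ and $|x\cdot p|\le \frac12(\|x\|^2+\|p\|^2)$. Thus $\dot L\le -\kappa_1 L$, and $f(x(t))+\|p(t)\|^2\le C_1\rme^{-\kappa_1 t}$. The proof genuinely needs $\gamma>0$ here, since the $\|p\|^2$ term from $\frac{\mathrm d}{\mathrm dt}(x\cdot p)$ must be dominated by the friction.

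It remains to treat $\xi$. From $\dot\xi_i=p_i^2/\rho-\alpha\xi_i$ and the formula for $\xi(t)$,
\[
\xi_i(t)\le \rme^{-\alpha t}\xi_i(0)+\frac{C_1}{\rho}\int_0^t \rme^{-\kappa_1 s}\rme^{-\alpha(t-s)}\,\mathrm ds \le C_2 \rme^{-\min(\alpha,\kappa_1)t/2}
\]
(take half the rate to avoid the resonant case $\alpha=\kappa_1$). Since $\|p\|\le \sqrt{C_1}\rme^{-\kappa_1t/2}$, we may take $\kappa=\frac12\min(\kappa_1,\alpha)$ and sum the three bounds to conclude.

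The main obstacle is the cross term $x\cdot(\xi\star p)$. It is cubic in the state, so a global quadratic Lyapunov function cannot control it uniformly. This is why we first extract the a priori bound on $\xi$ from the monotonicity of $H$. The resulting $\varepsilon$, and hence $\kappa$, depend on the initial data, consistent with the statement's "there exist $\kappa$ and $C$" for each initial condition.
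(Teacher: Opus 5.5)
Your argument is correct and is essentially the paper's proof. Both use a modified energy $f+\frac12\|p\|^2+\varepsilon\,(x-x^*)\cdot p+(\text{quadratic in } x-x^*)$, an a priori bound on $\xi$, and a Young estimate on $\varepsilon\,(x-x^*)\cdot(\xi\star p)$ followed by Gr\"onwall. The differences are minor: the paper keeps $\frac{\rho}{2}\|\xi\|^2$ inside the Lyapunov function, where $-(\xi\star p)\cdot p$ and $\xi\cdot[p]^2$ cancel to leave $-\alpha\rho\|\xi\|^2$, so $\xi$ decays simultaneously, and it uses $\varepsilon\|x-x^*\|^2$ rather than your $\frac{\varepsilon\gamma}{2}\|x\|^2$. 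You instead cancel the $\gamma\,x\cdot p$ terms exactly and recover the decay of $\xi$ afterwards from Duhamel's formula, which also handles the unsquared norms in the statement explicitly.
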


The condition~\eqref{eq:condition_f_exp_cv_ikfad_1} is satisfied for~$f \in C^2$ with $0 < m \leq \nabla^2 f(x) \leq M < +\infty$. A similar condition is considered in~\cite{mattingly2002ergodicity}.

\begin{proof}
See Appendix \ref{app:ikfad_convergence_exp} for the proof, which follows the same reasoning as the FAD continuous convergence proof in \cite{fad_2023}.
\end{proof}

Consider the splitting \eqref{eq:iKFAD_splitting_alt} of the continuous iKFAD dynamics and the integration scheme CDBA (see Table \ref{tab:ode_splittings}), found in appendices \ref{app:split_updates}  and \ref{app:ikfad_convergence_disc} respectively. The resulting discrete iKFAD dynamics can be written as 
\begin{subequations}\label{eq:discr_ikfad}
\begin{align}
    p_{n+1}   &= \beta_{n,\delta t} p_n -\delta t \nabla f(x_n) \label{eq:discr_ikfad_p},\\
    \begin{split}
    x_{n+1}   &= x_n + \delta t \beta_{n,\delta t} p_n  - \delta t^2 \nabla f(x_n) 
    \end{split}\label{eq:discr_ikfad_x}, \\
    \begin{split}
    \xi_{n+1} &= \mathrm{e}^{-\alpha \delta t} \bigg( [\xi_n]^2 
               + \frac{(I - \mathrm{e}^{-2 \Xi_n \delta t})[p_n]^2}{\rho} \bigg)^{1/2}, 
    \end{split}\label{eq:discr_ikfad_xi}
\end{align}
\end{subequations}
where $\Xi = \text{diag}(\xi)$ and $\beta_{n,\delta t} = \mathrm{e}^{-(\gamma I + \Xi_n)\delta t}$ (see Appendix \ref{app:ikfad_convergence_disc} for more details). We can then state the following convergence result. 

\begin{restatable}{theorem}{ikfaddiscrtheorem}\label{thm:ikfad_discrete_theorem}
\label{prop:cv_discretized}
Consider a function $f \in C^2$ satisfying $0 < m \leq \nabla^2 f(x) \leq M < +\infty$, for any $x \in \R^d$. Assume that~$\gamma > 0$ and consider $L>0$. Then, for any initial condition~$(x_0,p_0,\xi_0) \in \R^d \times \R^d \times \R_+^d$ such that
\[
\|x_0 \|+\|p_0\|+\|\xi_0\| \leq L,
\]
there exist~$\Dt^\star>0$, $\kappa>0$ and~$C > 0$ (depending on~$L$) such that for any $ \Dt \in (0,\Dt^\star) $ and any $ n\geq 0,$
\begin{equation*}
    f(x_n) -f(x^*) + \|p_n\|^2 + \|\xi_n\|^2 \leq C \rme^{-\kappa n\Dt}.
\end{equation*}
\end{restatable}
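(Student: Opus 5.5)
We prove Theorem~\ref{thm:ikfad_discrete_theorem} by building a discrete Lyapunov function that mirrors the one used for Theorem~\ref{thm:ikfad_continuous_theorem}. Take a modified energy with a cross term and a friction term,
\[
\mathcal{L}(x,p,\xi) = f(x)-f(x^*) + \tfrac12\|p\|^2 + \varepsilon\,(x-x^*)\cdot p + \tfrac{\rho}{2}\|\xi\|^2 ,
\]
with $\varepsilon>0$ small. The parameter $\varepsilon$ is chosen as in the continuous proof, using the strong convexity bound $m$ and the fixed friction $\gamma>0$. In continuous time the terms $\pm\,\xi\cdot[p]^2$ cancel between $\frac{d}{dt}\tfrac12\|p\|^2$ and $\frac{d}{dt}\tfrac\rho2\|\xi\|^2$. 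The bound $f(x)-f(x^*)\ge \tfrac m2\|x-x^*\|^2$ together with condition~\eqref{eq:condition_f_exp_cv_ikfad_1}, which follows from $m\le\nabla^2 f\le M$, then gives $\dot{\mathcal L}\le -2\kappa_0\mathcal L + (\text{cubic remainder in }\xi,p\text{ from the cross term})$. For $\varepsilon$ small, $\mathcal L$ is equivalent to $f(x)-f(x^*)+\|p\|^2+\|\xi\|^2$. Note that the $\xi$-part of $\mathcal L$ only decays at rate governed by $\alpha$, which is why the conclusion is stated for squared norms.

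\textbf{Step 1: a priori bounds.} We first show that the discrete iterates stay in a compact set $K_L$ determined by $L$, uniformly in $\Dt\in(0,1)$. Since $\xi_n\ge 0$ componentwise, $0<\beta_{n,\Dt}\le \rme^{-\gamma\Dt}$. Squaring \eqref{eq:discr_ikfad_xi} gives an exact discrete analogue of the energy balance: $[\xi_{n+1}]^2 = \rme^{-2\alpha\Dt}\big([\xi_n]^2 + \rho^{-1}(I-\rme^{-2\Xi_n\Dt})[p_n]^2\big)$. The term $(I-\rme^{-2\Xi_n\Dt})[p_n]^2$ is exactly the kinetic energy removed by the factor $\beta_{n,\Dt}$ beyond the $\gamma$ part, which is why this CDBA splitting was chosen. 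Hence $\tfrac12\|p\|^2+\tfrac\rho2\|\xi\|^2+f(x)$ is nonincreasing up to an $O(\Dt^2)$ error from the position and gradient substeps. A discrete Gronwall argument on a finite horizon, followed by the contraction in Step 2, yields a uniform bound, and every constant is then taken on $K_L$.

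\textbf{Step 2: one-step contraction.} We expand $\mathcal L(x_{n+1},p_{n+1},\xi_{n+1})-\mathcal L(x_n,p_n,\xi_n)$ in $\Dt$, using $\beta=I-(\gamma I+\Xi_n)\Dt+O(\Dt^2)$, the $C^2$ Taylor bound $f(x_{n+1})\le f(x_n)+\nabla f(x_n)\cdot(x_{n+1}-x_n)+\tfrac M2\|x_{n+1}-x_n\|^2$, and the exact squared $\xi$ identity above. The first-order term reproduces $\Dt\,\dot{\mathcal L}\le -2\kappa_0\Dt\,\mathcal L$. The remainder is bounded by $C_K\Dt^2\,\mathcal L$, using that every increment is linear in $(x_n-x^*,p_n,\xi_n\star p_n,\nabla f(x_n))$ with $\|\nabla f(x_n)\|\le M\|x_n-x^*\|$, and that $\xi$ and $p$ are bounded on $K_L$. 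For $\Dt<\Dt^\star:=\kappa_0/C_K$ this gives $\mathcal L_{n+1}\le(1-\kappa_0\Dt)\mathcal L_n\le \rme^{-\kappa_0\Dt}\mathcal L_n$. Iterating and using the norm equivalence proves the claim with $\kappa=\kappa_0$.

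\textbf{Main obstacle.} The hardest step is the cubic cross term $-\varepsilon(x-x^*)\cdot(\xi\star p)$. It is not controlled by $\mathcal L$ globally, so it must be absorbed using the local bound $\|\xi\|_\infty\le C_L$ from Step 1: Young's inequality bounds it by $\tfrac\varepsilon 2 (C_L\|x-x^*\|^2\eta + \eta^{-1}C_L\|p\|^2)$. This forces $\varepsilon$, and therefore $\kappa$ and $\Dt^\star$, to depend on $L$. If the invariance argument in Step 1 is delicate, the first fallback is to restrict to the sublevel set $\{\mathcal L\le \mathcal L_0\}$ and show by induction that the contraction keeps the iterates inside it, which closes the argument self-consistently.
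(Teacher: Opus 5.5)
Your proposal is correct and follows essentially the same route as the paper. Both arguments use a discrete analogue of the continuous Lyapunov function (energy plus an $\varepsilon$ cross term plus a friction term), the exact squared identity for $\xi_{n+1}$, and a bound $\|\xi_n\|\le R$ obtained by induction on a sublevel set, leading to a one-step estimate $\mathcal{W}_{n+1}\le(1-\kappa\delta t+CK\delta t^2)\mathcal{W}_n$; that sublevel-set induction is your fallback, and it is exactly how the paper closes the argument, more cleanly than the finite-horizon Gronwall of your Step~1. The differences are cosmetic: the paper weights the friction term by $A_{\delta t}=\tfrac12\mathrm{e}^{2(\alpha-\gamma)\delta t}$ so the $\Xi_n$-dependent kinetic terms cancel exactly, whereas you absorb that mismatch into the $O(\delta t^2)$ remainder using the $\xi$ bound, and the paper keeps an $\varepsilon\|x-x^*\|^2$ term that you can safely drop under strong convexity.
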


\begin{proof}
See Appendix~\ref{app:ikfad_convergence_disc}. The proof follows the strategy of the FAD discrete convergence analysis in~\cite{fad_2023}.
\end{proof}

\subsection{CD Convergence Analysis}

\begin{restatable}{theorem}{cdconttheorem}\label{thm:cd_continuous_theorem}
Consider a function $f \in C^2$. Assume that $\gamma, c > 0$ and that there exist $a,b > 0$ such that 
\begin{equation}
\label{eq:condition_f_exp_cv}
\begin{split}
a\big[f(x)-&f(x^*)\big] + b \|x-x^*\|^2 \\
&\leq (x-x^*)\cdot \left( \nabla f(x)-\nabla f(x^*) \right).
\end{split}
\end{equation}
Then, for any initial condition $(x_0,p_0) \in \R^d \times \R^d$, there exist $\kappa > 0$ and $C \in \R_+$ such that the solution of \eqref{eq:cd} satisfies
\[
f(x(t))-f(x^*) + \|p(t)\|^2 \leq C \rme^{-\kappa t}.
\]
\end{restatable}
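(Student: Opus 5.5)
We build a modified Lyapunov functional with a cross term, as is standard for underdamped/dissipative Hamiltonian dynamics. Set $x^*$ with $\nabla f(x^*)=0$, and define
\[
H(x,p) = f(x)-f(x^*) + \tfrac12\|p\|^2, \qquad \mathcal{L}_\varepsilon(x,p) = H(x,p) + \varepsilon\,(x-x^*)\cdot p,
\]
for a small parameter $\varepsilon>0$ to be chosen depending on the initial condition. Along \eqref{eq:cd},
\[
\frac{\mathrm{d}}{\mathrm{d}t} H = -\gamma\|p\|^2 - c\sum_i p_i^4 \le -\gamma \|p\|^2 .
\]
In particular $H$ is nonincreasing, so $H(t)\le H_0 := H(x_0,p_0)$. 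Condition~\eqref{eq:condition_f_exp_cv} gives $b\|x-x^*\|^2\le (x-x^*)\cdot\nabla f(x)$, and by Cauchy--Schwarz $\|x-x^*\|\le \|\nabla f(x)\|/b$. I would rather use it as follows: $f(x)-f(x^*)\ge 0$, and the condition with $a,b$ bounds $\|x-x^*\|^2 \le \frac{1}{b}(x-x^*)\cdot\nabla f$. To get a uniform a priori bound on $\|x-x^*\|$ and $\|p\|$ along the trajectory, I need $f(x)-f(x^*)\gtrsim \|x-x^*\|^2$ (coercivity). I would derive this from the condition by integrating along the ray $x^*+s(x-x^*)$: setting $\phi(s)=f(x^*+s(x-x^*))-f(x^*)$, the condition reads $s\phi'(s)\ge a\phi(s)+bs^2\|x-x^*\|^2$, which yields $\phi(1)\ge \frac{b}{2+a}\|x-x^*\|^2$ after solving the differential inequality. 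Hence the sublevel set $\{H\le H_0\}$ is bounded: $\|x-x^*\|^2\le C_0 H_0$ and $\|p\|^2\le 2H_0$. Also, for $\varepsilon$ small, $\mathcal{L}_\varepsilon$ is equivalent to $H$: $\tfrac12 H\le \mathcal{L}_\varepsilon\le \tfrac32 H$, using $|\varepsilon (x-x^*)\cdot p|\le \varepsilon(\|x-x^*\|^2+\|p\|^2)/2$ and coercivity.

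Next compute the cross-term derivative:
\[
\frac{\mathrm{d}}{\mathrm{d}t}\big[(x-x^*)\cdot p\big] = \|p\|^2 - (x-x^*)\cdot\nabla f(x) - \gamma (x-x^*)\cdot p - c\,(x-x^*)\cdot[p]^3 .
\]
Using the hypothesis, $-(x-x^*)\cdot\nabla f(x)\le -a[f(x)-f(x^*)]-b\|x-x^*\|^2$. The linear friction term is handled by Young: $\gamma|(x-x^*)\cdot p|\le \frac b4\|x-x^*\|^2+\frac{\gamma^2}{b}\|p\|^2$. The cubic term is the main obstacle, since it is not quadratically controlled globally. Here I use the a priori bound: $|(x-x^*)\cdot[p]^3| \le \|x-x^*\|\,\|p\|_\infty^2\|p\| \le 2H_0\,\|x-x^*\|\,\|p\|$, and then Young again, $c\,2H_0\|x-x^*\|\|p\|\le \frac b4\|x-x^*\|^2 + \frac{4c^2H_0^2}{b}\|p\|^2$. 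This is where the dependence of $\kappa$ and $C$ on the initial condition enters, which is why the statement says ``for any initial condition there exist $\kappa,C$''.

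Collecting terms,
\[
\frac{\mathrm{d}}{\mathrm{d}t}\mathcal{L}_\varepsilon \le -\Big(\gamma - \varepsilon K\Big)\|p\|^2 - \varepsilon a[f(x)-f(x^*)] - \varepsilon\tfrac b2\|x-x^*\|^2,
\qquad K = 1+\tfrac{\gamma^2}{b}+\tfrac{4c^2H_0^2}{b}.
\]
Choosing $\varepsilon\le \gamma/(2K)$ (and small enough for the equivalence above) gives $\frac{\mathrm{d}}{\mathrm{d}t}\mathcal{L}_\varepsilon\le -\kappa' H\le -\frac{2\kappa'}{3}\mathcal{L}_\varepsilon$ with $\kappa'=\min(\gamma,\varepsilon a)/1$ up to constants. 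Gr\"onwall then yields $\mathcal{L}_\varepsilon(t)\le \mathcal{L}_\varepsilon(0)\rme^{-\kappa t}$, and by the equivalence $f(x(t))-f(x^*)+\|p(t)\|^2\le 2H(t)\le 4\mathcal{L}_\varepsilon(t)\le C\rme^{-\kappa t}$. Global existence of the solution follows from the boundedness of $H$ together with local Lipschitz continuity of the vector field, since $f\in C^2$.
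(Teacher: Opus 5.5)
Your proof is correct and follows essentially the same route as the paper. Both arguments use a Lyapunov function with an $\varepsilon\,(x-x^*)\cdot p$ cross term, the energy-decay bound $\|p(t)\|^2\le 2H_0$ to control the cubic cross term $(x-x^*)\cdot[p]^3$ by a multiple of $\|x-x^*\|\,\|p\|$, Young's inequality combined with the hypothesis, and Gr\"onwall.

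The only difference is that the paper adds $\varepsilon\|x-x^*\|^2$ to the Lyapunov function, which makes the two-sided bounds automatic for $\varepsilon\le 1/2$, whereas you omit it and instead derive quadratic growth of $f-f(x^*)$ from the hypothesis. Your constant $b/(2+a)$ is not what the differential inequality yields, but dropping $a\phi\ge 0$ gives $\phi'(s)\ge bs\|x-x^*\|^2$ and hence the larger constant $b/2$, so your bound still holds.
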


\begin{proof}
See Appendix \ref{app:cd_convergence_exp}
\end{proof}

We consider an Euler discretization of \eqref{eq:cd}, which is simpler to work with than the splitting scheme proposed in Table \ref{tab:ode_splittings}: 
\begin{subequations} \label{eq:cd_discr_eqs}
\begin{align} 
    \begin{split}
    p_{n+1} &= (1-\gamma \delta t) p_n - c \delta t [p_n]^3  - \delta t \nabla f(x_n), 
    \end{split}\label{eq:CD_discr_Euler_p}\\
    x_{n+1} &= x_n + \delta t p_{n}, \label{eq:CD_discr_Euler_x}
\end{align}
\end{subequations}
where we assume that $\gamma \delta t$ is sufficiently small.

\begin{restatable}{theorem}{cddiscrtheorem}\label{thm:cd_discr_theorem}
    Consider $f \in C^2$ and assume that there exist $m,M \in \R_+$ such that $m \leq \nabla^2 f(x) \leq M$ for any $x \in \R^d$. Fix $L>0$. Then, for any initial condition $(x_0,p_0)$ such that $\|x_0\| + \|p_0\| \leq L, $ there exist $\delta t^* >0, r > 0$ and $C >0$ for which, for any $ n\geq 0 \text{ and for any } \delta t \in (0,\delta t^*)$,
    \begin{equation*}
    \begin{split}
     f(&x_n)- f(x^*) + \|p_n\|^2 \leq C \mathrm{e}^{-\kappa n \delta t}.
    \end{split}
    \end{equation*}
\end{restatable}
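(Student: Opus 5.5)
We prove Theorem~\ref{thm:cd_discr_theorem} with a discrete Lyapunov function that copies the one behind the continuous result, Theorem~\ref{thm:cd_continuous_theorem}. Without loss of generality take $x^*=0$ and $f(x^*)=0$. Set
\[
\mathcal{H}(x,p) = f(x) + \tfrac12\|p\|^2,\qquad \mathcal{L}(x,p) = \mathcal{H}(x,p) + \varepsilon\, x\cdot p ,
\]
with $\varepsilon>0$ small. The bounds $m\le\nabla^2 f\le M$ give $\tfrac m2\|x\|^2\le f(x)\le \tfrac M2\|x\|^2$. Hence, for $\varepsilon$ small enough, $\mathcal{L}$ is equivalent to $\|x\|^2+\|p\|^2$, and also to $f(x)+\|p\|^2$, with constants depending only on $m,M,\varepsilon$.

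\textbf{Step 1: a priori bound.} The first job is to show that the iterates stay in a compact set $K_L$ fixed by $L$, independently of $\delta t$. The cubic term needs care here: for fixed $\delta t$, Euler applied to $-c[p]^3$ overshoots once $c\,\delta t\,p_i^2>2$, so stability only holds on a bounded set. We argue by induction. Fix $R$ with $\{\|x\|+\|p\|\le L\}\subset\{\mathcal{L}\le R\}$, and let $K_L$ be a bounded set containing $\{\mathcal{L}\le R\}$, on which $\|p\|_\infty\le P$. Choose $\delta t^*$ so that $c\,\delta t^* P^2\le 1$ and $\gamma\delta t^*\le 1/2$. Then the scalar map $p_i\mapsto(1-\gamma\delta t)p_i-c\,\delta t\, p_i^3$ is a contraction-type map on $[-P,P]$. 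If Step 2 shows $\mathcal{L}$ is nonincreasing along the iterates as long as they remain in $\{\mathcal{L}\le R\}$, the induction closes.

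\textbf{Step 2: one-step decrease.} Expand $\mathcal{L}(x_{n+1},p_{n+1})-\mathcal{L}(x_n,p_n)$ using Taylor's formula for $f$ with the bound $M$, together with the updates \eqref{eq:CD_discr_Euler_p}--\eqref{eq:CD_discr_Euler_x}. The first-order term in $\delta t$ matches the continuous generator. It is
\begin{align*}
\delta t\Big[-\gamma\|p\|^2 - c\|p\|_4^4 + \varepsilon\|p\|^2 - \varepsilon\, x\cdot\nabla f(x) - \varepsilon\gamma\, x\cdot p - \varepsilon c\, x\cdot[p]^3\Big],
\end{align*}
since the $p\cdot\nabla f$ contributions from $\dot f$ and $\tfrac12\|p\|^2$ cancel. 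We bound each term:
\begin{itemize}
\item Condition~\eqref{eq:condition_f_exp_cv} (which holds with the Hessian bounds) gives $x\cdot\nabla f\ge a f+b\|x\|^2$.
\item Young's inequality handles $x\cdot p$.
\item The cross term $x\cdot[p]^3$ is absorbed by $\tfrac12 c\|p\|_4^4+\varepsilon^2 C\|x\|^2\|p\|_\infty^2$, using $\|p\|_\infty\le P$ on $K_L$.
\end{itemize}
With $\varepsilon$ small, depending on $P$ (hence on $L$), the bracket is at most $-2\kappa\mathcal{L}$. The remainder is $O(\delta t^2)$ times a polynomial in $\|x\|,\|p\|$. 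It collects terms such as $\tfrac M2\|\delta t\,p_n\|^2$, $\tfrac12\|\delta t(\gamma p+c[p]^3+\nabla f)\|^2$ and $\varepsilon\,\delta t^2\, p\cdot(\ldots)$. All of these are quadratic in $(x,p)$ up to factors bounded on $K_L$, so on $K_L$ the remainder is at most $C_L\delta t^2\mathcal{L}$.

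\textbf{Step 3: conclusion.} Step 2 gives $\mathcal{L}_{n+1}\le(1-2\kappa\delta t+C_L\delta t^2)\mathcal{L}_n\le \rme^{-\kappa\delta t}\mathcal{L}_n$ for $\delta t<\delta t^*$ small enough. This also closes the induction of Step 1. Iterating and using the equivalence of $\mathcal{L}$ with $f+\|p\|^2$ yields the claim.

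The main obstacle is the interplay between Steps 1 and 2. The remainder constants and the choice of $\varepsilon$ both depend on a bound for $\|p\|_\infty$, which is only available after the decrease has been established. So the sublevel set, then $\varepsilon$, then $\delta t^*$ must be chosen in that order without circularity. The cubic cross term $\varepsilon c\, x\cdot[p]^3$ is the delicate piece; if the estimate above is too lossy, I would instead add to $\mathcal{L}$ a correction $\tfrac{\varepsilon c}{4}\sum_i x_i^2p_i^2$-type term to cancel it.
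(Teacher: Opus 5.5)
Your proposal is correct and follows essentially the same route as the paper. Both arguments use a perturbed energy $f+\tfrac12\|p\|^2+\varepsilon\,x\cdot p$ as the Lyapunov function; the paper also adds $\varepsilon\|x\|^2$, which is not needed here since $m>0$. Both secure, by induction on a sublevel set, the a priori bound on $p$ that controls the cubic terms, and both conclude with a one-step estimate of the form $(1-\nu\delta t+C\delta t^2)$.

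The differences are cosmetic. You absorb $\varepsilon c\,x\cdot[p]^3$ into the cubic dissipation $c\|p\|_4^4$, whereas the paper bounds it via $\|[p]^3\|\le R^2\|p\|$ and the linear friction. The circularity you flag disappears once you note that the equivalence constants between $\mathcal{L}$ and $\|x\|^2+\|p\|^2$ are uniform for $\varepsilon\le\sqrt{m}/2$, so $P$ can be fixed from $L$ before $\varepsilon$ is chosen.
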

\begin{proof}
    See Appendix \ref{app:cd_convergence_disc}
\end{proof}
\section{Numerical experiments} \label{sec:num_exp}
Our code is available on 
\href{https://github.com/rajit906/higher-order-damping-in-ml-v0}{GitHub}.
\subsection{Main Benchmark Results}
\label{subsec:losses}
To evaluate the performance of our proposed optimizers, we conduct a series of experiments across different machine learning tasks, including image classification on CIFAR-10 \cite{krizhevsky2009learning} using ResNet-18 and TinyViT \cite{tiny_vit}, language classification with DistilBERT (66M) for SST-2 GLUE \cite{wang2018glue} and QNLI finetuning, and language generation via GPT2-Nano (0.85M) \cite{karpathy2022nanoGPT} as well as a custom GPT2-XS (45M). We downsized GPT2-S for computational efficiency by reducing the depth to 6 layers, the number of heads to 8 and the embedding dimension to 512. We refer to this as GPT2-XS. For language modeling experiments, we utilize a batch size of 16, following evidence that small batches can achieve performance comparable to larger scales \cite{marek2025small}, while TinyViT and ResNet-18 utilize a batch size of 128. We include comparisons against Adam and mSGD (standard implementations). We allocate equal resources to all optimizers during hyperparameter search. For a clean and simpler comparison, we omit common regularization and acceleration techniques such as learning rate scheduling, weight decay, and gradient accumulation. ResNet-18 results are reported in Appendix \ref{app:resnet18}, as all optimizers exhibited very similar performance in that setting. CADAM results are also discussed in Appendix \ref{app:cadam}, as we did not observe a significant improvement in performance over Adam across the benchmarks considered. A possible explanation is that cubic damping provides limited additional benefit when per-parameter learning rates are already being used.  

Hyperparameter optimization for all schemes was conducted using Optuna Bayesian search with a fixed budget of 80 trials per experiment, with specific search ranges detailed in Appendix \ref{app:hyperparams}. Our evaluation includes two distinct sweeps for the proposed optimizers; one with $\gamma > 0$ and one with $\gamma = 0$, selecting the top-performing configuration for each. Notably, CD and iKFAD exhibit similar performance (see Section \ref{subsec:ablation}). Figure~\ref{fig:losses_all} compares training and test losses across the main benchmarks. 
\begin{figure*}[t]
    \centering
    \includegraphics[width=\textwidth]{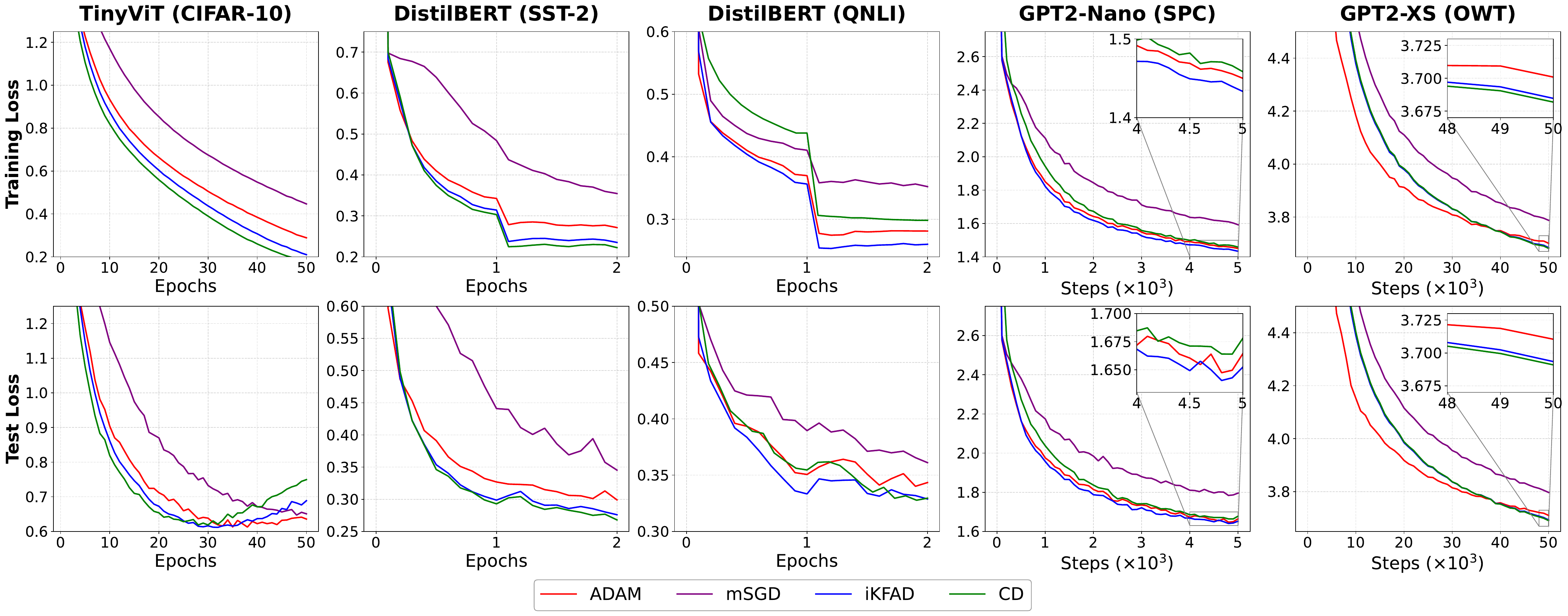}
    \caption{\small \textbf{Training and test losses} for iKFAD, CD, Adam, and mSGD across ResNet-18, DistilBERT, and GPT2. All curves are averaged over 10 random seeds. Standard deviations are omitted for readability. Across the transformer-based tasks considered, a clear performance gap is observed between Adam and mSGD. Both iKFAD and CD substantially reduce this gap despite not using per-parameter learning rates.}
    \label{fig:losses_all}
\end{figure*}
\paragraph{TinyViT.}  iKFAD and CD outperform Adam. They both reduce training loss faster and reach a slightly lower test loss. mSGD converges much more slowly, although it eventually approaches a similar test loss. We also note that the TinyViT accuracies obtained are competitive with those reported in \cite{tiny_vit} (without the use of distillation).

\paragraph{DistilBERT.} There is a substantial performance gap between Adam and mSGD across both the SST-2 and QNLI tasks. On the SST-2 pretraining task, iKFAD and CD outperform Adam in both training and test loss. For the QNLI finetuning task, while a large gap persists between Adam and mSGD in training loss, the discrepancy in test loss is notably smaller. iKFAD reaches a lower test loss more rapidly, whereas CD initially follows Adam before eventually matching iKFAD's test loss. We observe a sharp decline in training loss at the start of the second epoch, most noticeably in QNLI, which is not mirrored in the validation loss and suggests the onset of overfitting.

\paragraph{GPT2.} We evaluated GPT2-Nano on Shakespeare and GPT2-XS on OpenWebText, and observe the same pattern across both benchmarks. The pronounced Adam-mSGD gap is visible in both scenarios. 

In GPT2-Nano, iKFAD, CD and Adam exhibit very similar behaviour. For GPT2-XS, the Adam loss decreases slightly faster during early training, but CD and iKFAD eventually reach final test losses similar to that of Adam. The training and test loss trajectories of iKFAD and CD are nearly identical. This similarity is consistent with iKFAD operating near steady state ($\dot{\xi} \approx 0$) for this benchmark.

Across all benchmarks considered, iKFAD, CD, and CADAM consistently outperform mSGD and achieve performance competitive with Adam, while in some experiments attaining slightly lower training and test losses. Table \ref{tab:best_loss} summarizes the best losses averaged over ten runs in each experiment along with the corresponding standard deviations. The accuracies for relevant experiments can be found in Figure \ref{fig:accuracy_all}. The tuned hyperparameters $\rho$ in iKFAD and $c$ in CD exhibit a wide range in magnitude across tasks indicating a strong dependence on gradient scales in each setting. This was not explored, but it could be remedied by normalizing gradients between training iterations.

\begin{figure}[H]
    \centering    \includegraphics[width=0.9\linewidth]{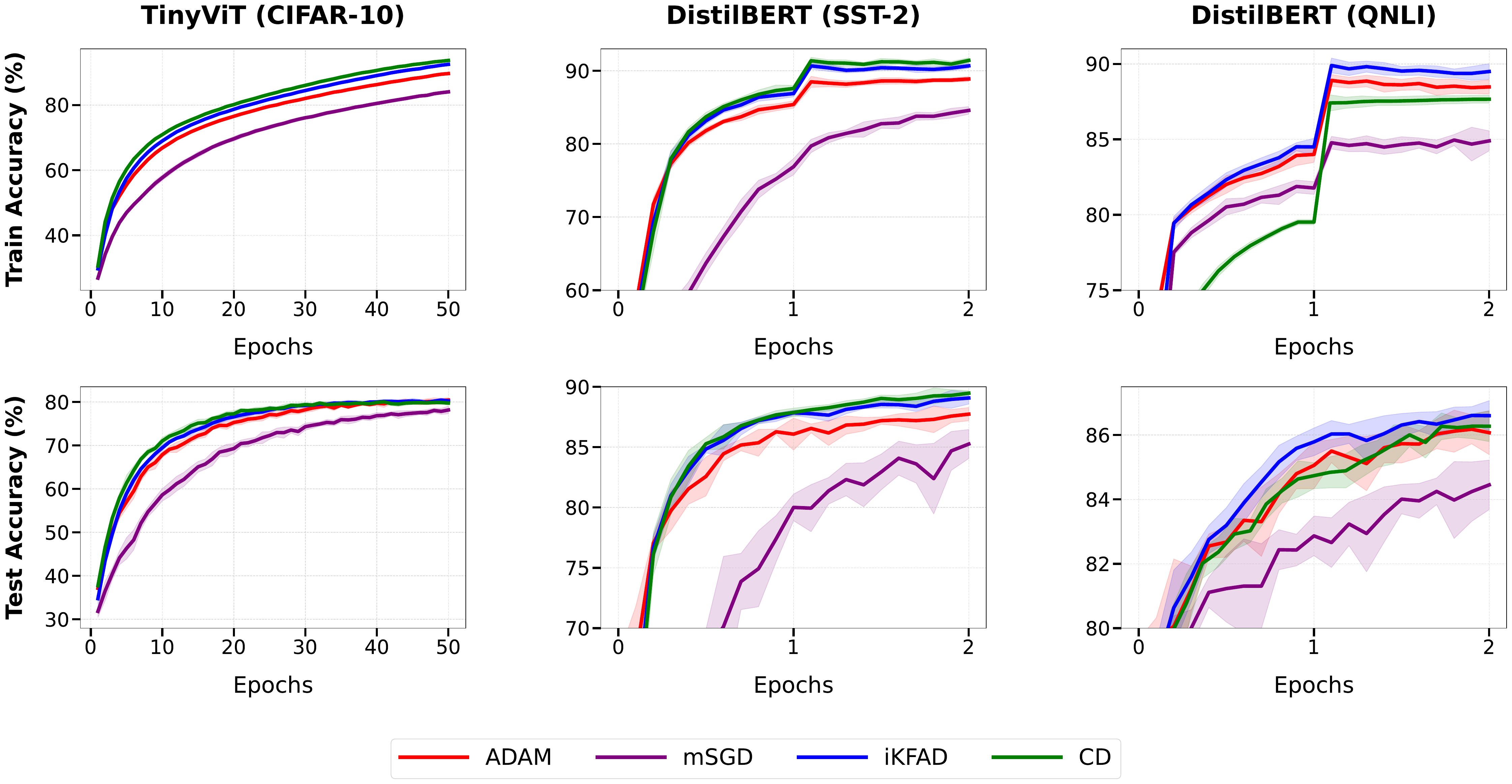}
    \caption{\small Corresponding training and test accuracies for the experiments in Figure \ref{fig:losses_all}. Standard deviations are included.}
    \label{fig:accuracy_all}
\end{figure}
The GPT2-Nano and GPT2-XS models used in the experiments of Figure \ref{fig:losses_all} are downsized models with 0.85M and 45M parameters respectively. To assess whether the observed performance of our methods persists at a larger transformer scale, we additionally train a GPT2-S model (123M) parameters on OpenWebText. As shown in Figure~\ref{fig:owt_gpt2s_best_hparams}, iKFAD and CD remain competitive with Adam and outperform mSGD on this task.
\begin{figure}[t]
    \centering
    \includegraphics[width=0.72\linewidth]{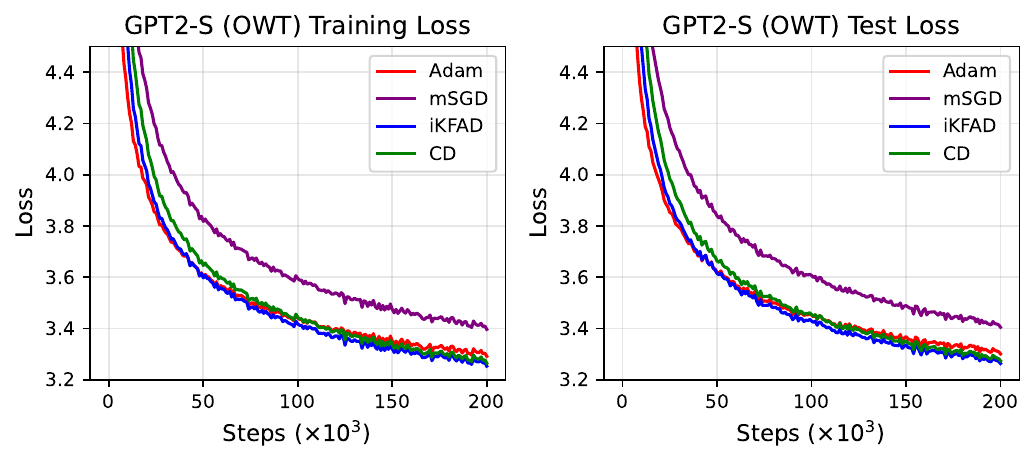}
    \caption{\small \textbf{Training and test losses} for iKFAD, CD, Adam, and mSGD on GPT2-S (123M parameters) trained on OpenWebText. The curves correspond to a single random seed due to computational constraints. Our methods iKFAD and CD remain competitive with Adam and achieve lower losses than mSGD.}
    \label{fig:owt_gpt2s_best_hparams}
\end{figure}
\subsection{Ablation on Linear Damping}
\label{subsec:ablation}
\subsubsection*{Removing linear damping $\gamma=0$.}
To study the role of the linear damping term $\gamma$, we repeat the experiments in Figure \ref{fig:losses_all} with $\gamma=0$ and present the results in Figure \ref{fig:gamma0}. This isolates the effect of adaptive friction and cubic damping. We find that while CADAM tends not to perform very well without linear friction (see Appendix \ref{app:cadam} Figure \ref{fig:output_cadam0}), CD and iKFAD maintain their good performance even when $\gamma=0$. This is expected because after hyperparameter optimization in the $\gamma>0$ setting, the optimal $\gamma$ values were typically near the lower end of the search range for CD and iKFAD. Hence, we can set $\gamma=0$, which reduces the hyperparameter count for both CD and iKFAD to two and three hyperparameters respectively (matching mSGD and Adam). We note that the hyperparameter search space for sweeps with $\gamma=0$ searches over one dimension less than if we set $\gamma > 0$. This offers a possible explanation as to why $\gamma=0$ sweeps arrive at a better hyperparameter configuration.
The results of CADAM are presented in Appendix \ref{app:cadam}. We observe that CADAM tends to rely more heavily on linear damping (see Figure \ref{fig:output_cadam0} and Table \ref{table:all_opt_params} where optimal $\gamma$ values corresponding to CADAM are significantly higher than for iKFAD and CD).

\begin{figure*}[t]
    \centering
    \includegraphics[width=\textwidth]{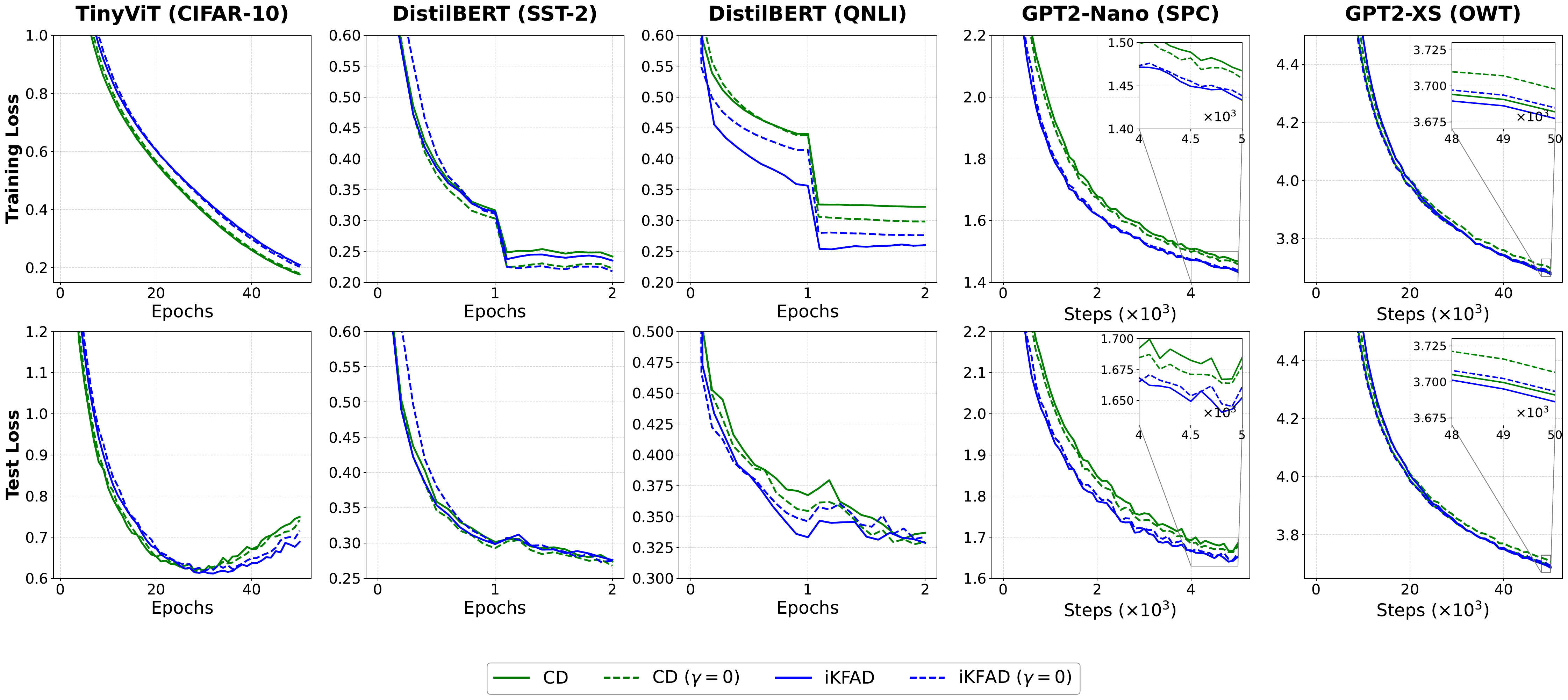}
    \caption{\textbf{Removing linear damping ($\gamma=0$).} Sweeping CD and iKFAD hyperparameters with $\gamma=0$ versus $\gamma\in[10^{-6},10]$ shows little sensitivity to $\gamma$. This suggests $\gamma$ can be set to zero in practice, reducing the number of hyperparameters by one. The best CD and iKFAD results from the current figure (either $\gamma=0$ or $\gamma > 0$) were selected to present in Figure \ref{fig:losses_all}.}
    \label{fig:gamma0}
\end{figure*}

\begin{table}
\centering
\caption{Best Test Loss results ($\mu \pm \sigma$).}
\label{tab:best_loss}
\resizebox{\columnwidth}{!}{%
\begin{tabular}{lcccc}
\toprule
\textbf{Model (Dataset)} & \textbf{Adam} & \textbf{mSGD} & \textbf{CD} & \textbf{iKFAD} \\
\midrule
TinyViT (CIFAR-10) & $0.613 \pm 0.018$ & $0.647 \pm 0.012$ & $0.619 \pm 0.016$ & {\boldmath $0.612 \pm 0.014$} \\
DistilBERT (SST-2) & $0.299 \pm 0.012$ & $0.345 \pm 0.022$ & {\boldmath $0.268 \pm 0.009$} & $0.276 \pm 0.011$ \\
DistilBERT (QNLI) & $0.340 \pm 0.012$ & $0.361 \pm 0.016$ & {\boldmath $0.328 \pm 0.010$} & $0.329 \pm 0.013$ \\
GPT2-Nano (SPC) & $1.647 \pm 0.010$ & $1.784 \pm 0.011$ & $1.664 \pm 0.008$ & {\boldmath $1.641 \pm 0.006$} \\
GPT2-XS (OWT) & $3.710 \pm 0.011$ & $3.797 \pm 0.008$ & {\boldmath $3.691 \pm 0.016$} & $3.693 \pm 0.018$ \\
GPT2-S (OWT) & $3.3008$ & $3.4047$ & $3.2736$ & {\boldmath $3.2633$} \\
\bottomrule
\end{tabular}%
}
\end{table}

\subsubsection*{Varying $\gamma$ and $\delta t$.}
To assess the robustness of iKFAD and CD with respect to the learning rate $\delta t$ and linear friction $\gamma$, we evaluated both optimizers across the NanoGPT (SPC), DistilBERT (SST2), and TinyViT (CIFAR-10) tasks. Note that while we vary $\gamma$ and $\delta t$, the rest of the hyperparameters are kept fixed at some reasonable values (the best setting for each experiment from Section \ref{sec:num_exp}). We compare against mSGD and LDHD. For mSGD, we map $\gamma$ to the momentum parameter $\mu$ using the correspondence derived in Appendix \ref{app:ldhd}. LDHD corresponds to the same continuous dynamics as mSGD, but while mSGD uses an Euler discretization, LDHD uses a splitting discretization. Figure \ref{fig:gammah} shows that iKFAD and CD maintain substantially higher robustness and accuracy over wide ranges of $(\gamma, \delta t)$ compared to mSGD and LDHD. Both iKFAD and CD also exhibit very similar behaviour in terms of test metrics. This indicates that iKFAD is close to equilibrium $\dot{\xi} = 0$ for the majority of training. This is expected, as iKFAD and CD are closely related: in both methods the amount of damping depends on kinetic energy. In iKFAD, the adaptive friction variable $\xi$ is an exponentially weighted average of past kinetic energies ( $[p]^2$), so damping depends on a short history of the dynamics. In CD, the damping is instantaneous. The cubic term can be written as $-c[p]^2 \star p$, i.e., the damping coefficient scales with the current kinetic energy. We note that both optimizers remain effective even for very small $\gamma$, demonstrating that their adaptive friction and cubic damping mechanisms can compensate for minimal linear damping. We also observe that choices of $c$ close to the heuristic correspondence $c\approx 1/(\alpha\rho)$ perform well in some experiments, such as NanoGPT, which reflects the aforementioned connection between the two methods. As a side note, mSGD and LDHD share the same underlying continuous dynamics, but their stability regions differ under the discretizations used, as demonstrated in Figure \ref{fig:gammah}.

\begin{figure}[H]
    \centering
    \includegraphics[width=0.85\textwidth]{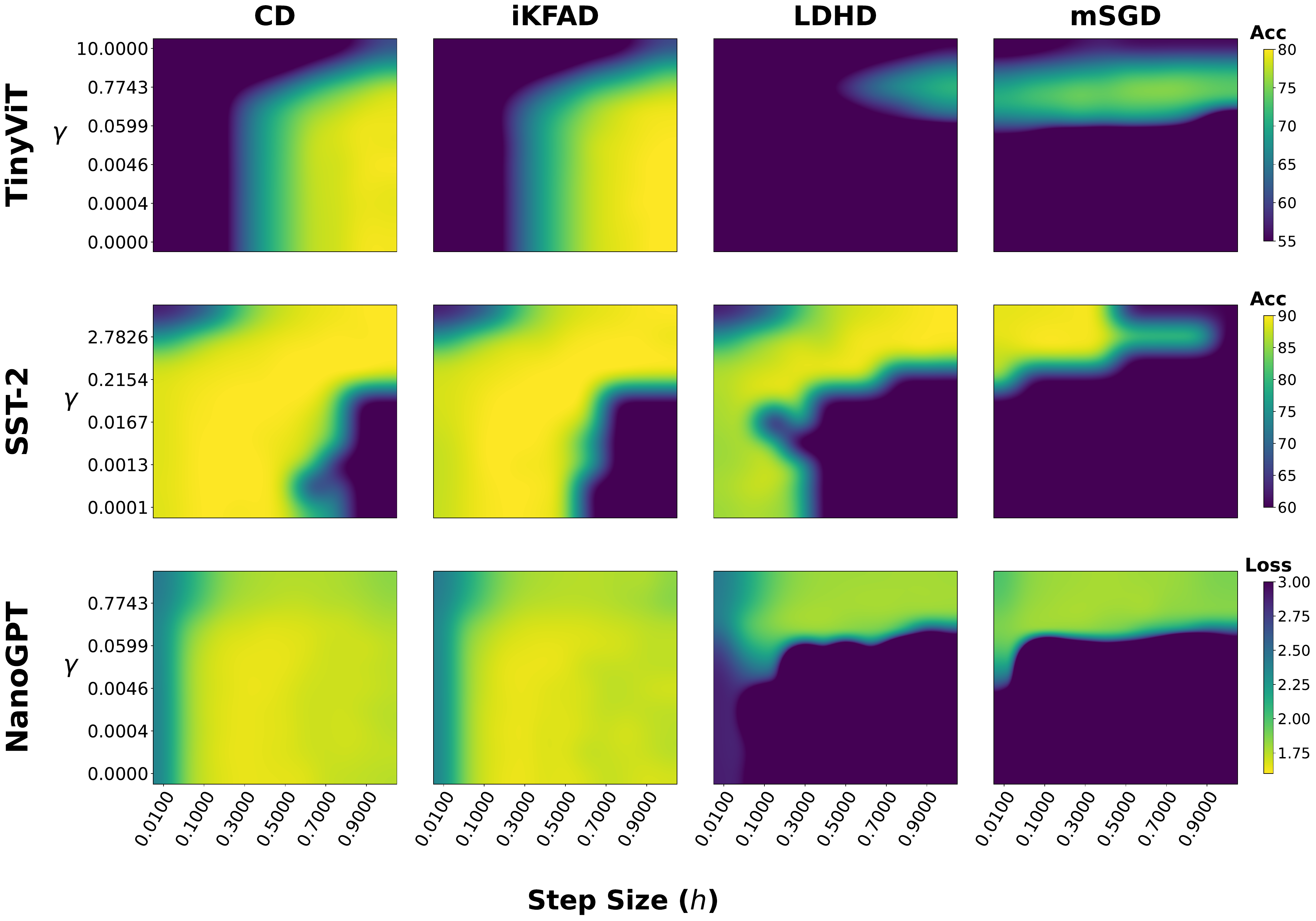}
    \caption{$\gamma-\delta t$ grid. Test accuracies/losses for iKFAD, CD, LDHD, and mSGD on various experiments as a function of learning rate $\delta t$ and linear friction $\gamma$. Higher color intensity indicates better performance. $\gamma$ for mSGD is obtained by the correspondence derived in Appendix \ref{app:ldhd}. LDHD uses splitting for discretization.}
    \label{fig:gammah}
\end{figure}

\subsection{iKFAD versus KFAD: Importance of Individual Frictions}

To demonstrate the significance of individual friction parameters, we perform a comparison between KFAD and iKFAD on TinyViT and NanoGPT. KFAD corresponds to the original FAD formulation \cite{fad_2023} which  employs a single adaptive global friction coefficient. We observe that although KFAD outperforms mSGD, it performs worse than iKFAD and CD (see Figure \ref{fig:KFAD_vs_iKFAD_mSGD} and Table \ref{tab:best_loss_KFAD_vs_iKFAD}), indicating that parameter-wise friction adaptation is important for large machine learning tasks, especially for transformer architectures.

Table \ref{tab:best_loss_KFAD_vs_iKFAD} summarizes the best test losses (mean and standard deviation, averaged over ten runs). KFAD hyperparameters were optimized using the same Bayesian search sweep budget and same hyperparameter ranges as iKFAD (see Table \ref{table:all_opt_params}). Note that while for Tiny-ViT, the best KFAD and iKFAD losses are close, iKFAD reaches this loss value substantially faster (see Figure \ref{fig:KFAD_vs_iKFAD_mSGD}).

\begin{figure}[t]
    \centering
    \includegraphics[width=0.85\linewidth]{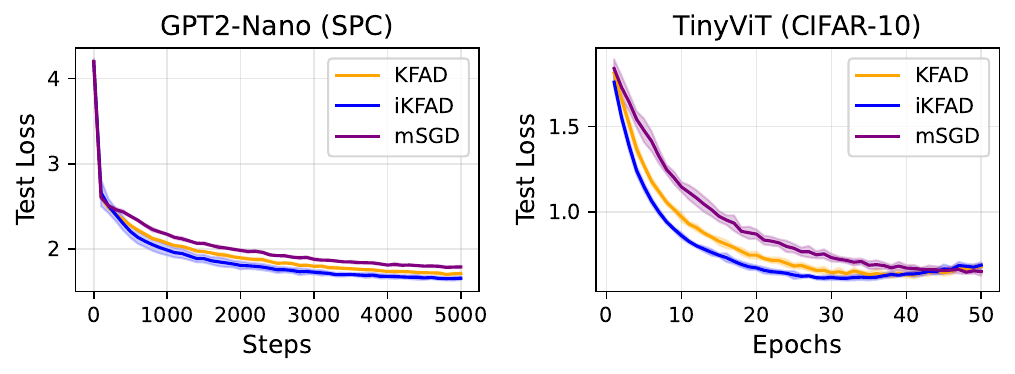}
    \caption{Training and test losses for KFAD, iKFAD and mSGD.}
    \label{fig:KFAD_vs_iKFAD_mSGD}
\end{figure}

\begin{table}
\centering
\caption{Best Test Loss results ($\mu \pm \sigma$).}
\label{tab:best_loss_KFAD_vs_iKFAD}
\resizebox{0.6\columnwidth}{!}{%
\begin{tabular}{lcccc}
\toprule
\textbf{Model (Dataset)} & \textbf{iKFAD} & \textbf{KFAD}  \\
\midrule
TinyViT (CIFAR-10) & {\boldmath $0.612 \pm 0.014$} & $0.6424 \pm 0.014$\\
GPT2-Nano (SPC)   & {\boldmath$1.641 \pm 0.006$} & $1.7363  \pm  0.018$ \\
\bottomrule
\end{tabular}%
}
\end{table}

These results show that while the global adaptive friction method KFAD improves over mSGD, introducing individual adaptive friction parameters leads to further significant improvement, with iKFAD achieving performance much closer to Adam as seen in previous experiments.
\section{Conclusion}
In this work, we introduce Individual Kinetic Friction Adaptive Descent (iKFAD), Cubically Damped mSGD (CD), and Cubically Damped Adam (CADAM), three optimizers motivated by a continuous-time view of momentum-based optimization as a dissipative dynamical system, in which friction regulates kinetic energy. The connection between adaptive friction and cubic damping shows that both iKFAD and CD can be understood as nonlinear damping mechanisms with kinetic energy-dependent friction coefficients. Our theoretical analysis establishes exponential convergence for both continuous-time dynamics of iKFAD and CD as well as the corresponding discrete-time schemes.

Across several transformer-based vision and language benchmarks, including TinyViT, DistilBERT, and GPT2 experiments, our methods achieve performance competitive with and in some cases, better than Adam, while substantially reducing the Adam–mSGD gap on the tasks considered. In particular, iKFAD and CD achieve this without relying on per-parameter adaptive learning rates. Notably, CD (with $\gamma=0$) maintains the same memory footprint and number of hyperparameters as standard mSGD while achieving performance competitive with Adam on several transformer benchmarks.
\paragraph{Limitations and Future Work.}
The computational cost of hyperparameter tuning limited our study to at most 80 trials per experiment. Also, since our benchmarks were limited to models under 100M parameters, evaluating these schemes on large-scale LLMs remains future work. On the theory side, we have not yet obtained exponential convergence guarantees for CADAM similar to those proved for iKFAD and CD.
\section*{Acknowledgements}
Part of this research was supported by the MAC-MIGS Centre for Doctoral Training (AK) and the ProbAI Hub (AK and RR). The work of AK was also supported by IBM Research. The work of GS was supported by Hi! PARIS and ANR/France 2030 program (ANR-23-IACL-0005). We would like to thank Cameron Barker for his useful insights on configurations for language model training. The authors acknowledge the use of resources provided by the Isambard-AI National AI Research Resource (AIRR) \cite{mcintoshsmith2024isambardaileadershipclasssupercomputer}. Isambard-AI is operated by the University of Bristol and is funded by the UK Government’s Department for Science, Innovation and Technology (DSIT) via UK Research and Innovation; and the Science and Technology Facilities Council [ST/AIRR/I-A-I/u6ih].

\bibliographystyle{plain}
\bibliography{References/references}
\appendix
\section{Equivalence between mSGD and LDHD}\label{app:ldhd}

In this section we will show how the mSGD discrete equations \eqref{eq:discr_mgd_p}, \eqref{eq:discr_mgd_x} can be obtained as an Euler discretisation of linearly dissipative Hamiltonian dynamics (LDHD) \eqref{eq:ldhd_x}, \eqref{eq:ldhd_p}.  We start with the system of ODEs corresponding to LDHD 
\begin{align*}
   \dot{x} &= p, \\
   \dot{p} &= -\nabla f(x) - \gamma p. 
\end{align*}

Applying Euler discretization, with time step $\delta t$, we obtain
\begin{equation*}
\begin{aligned}
   p_{n+1} &= p_n + \delta t (-\nabla f(x_n) - \gamma p_n), \\
   x_{n+1} &= x_n + \delta t p_{n+1},
\end{aligned} \quad \implies \quad
\begin{aligned}
   p_{n+1} &= (1-\delta t \gamma) p_n - \delta t \nabla f(x_n), \\
   x_{n+1} &= x_n + \delta t p_{n+1}.
\end{aligned}
\end{equation*}

Dividing the momentum equation by the learning rate $\delta t$, we obtain
\begin{equation*}
\begin{aligned}
  \frac{p_{n+1}}{\delta t} &= (1 - \delta t \gamma) \frac{p_{n}}{\delta t} - \nabla f(x_n), \\
   x_{n+1} &= x_n + \delta t^2 \frac{p_{n+1}}{\delta t},
\end{aligned} \quad \overset{\text{$\tilde{p} = p/\delta t$}}{\implies} \quad 
\begin{aligned}
  \tilde{p}_{n+1} &=(1-\delta t \gamma) \tilde{p}_n - \nabla f(x_n), \\
   x_{n+1} &= x_n + \delta t^2 \tilde{p}_{n+1}. 
\end{aligned}
\end{equation*}

Finally, after setting $\tilde{\delta t} = \delta t^2$ and $\mu=1- \gamma \sqrt{\tilde{\delta t}}$, we have

\begin{equation*}
\begin{aligned}
  \tilde{p}_{n+1} &= \big(1- \gamma \sqrt{\tilde{\delta t}}\big) \tilde{p}_n - \nabla f(x_n), \\
   x_{n+1} &= x_n + \tilde{\delta t} \tilde{p}_{n+1},
\end{aligned} \quad \overset{\text{$\Bar{p} = -\tilde{p}$}}{\implies} \quad
\begin{aligned}
  \Bar{p}_{n+1} &= \mu \Bar{p}_n + \nabla f(x_n), \\
   x_{n+1} &= x_n - \tilde{\delta t}  \Bar{p}_{n+1},
\end{aligned}
\end{equation*}
which is exactly the system of discrete momentum gradient descent equations \eqref{eq:discr_mgd_p}, \eqref{eq:discr_mgd_x}. Although this equivalence is correct, there are pairings $(\gamma,\delta t)$ for which it would be invalid because $\mu > 0$ \textit{i.e.}, $\gamma \delta t > 1$.

\section{Update Rules for Splitting}
\label{app:split_updates}

To derive practical discrete-time algorithms from the continuous-time dynamics, we use an operator-splitting approach. The idea is to decompose a complex system of ordinary differential equations (ODEs) into a sum of simpler sub-dynamics, for which explicit sub-step updates can be derived. Rather than integrating the full coupled system at once, which typically admits no analytical solution, we sequentially apply these updates over short time intervals and compose them to obtain a single update step.

\begin{table}[H]
\centering
\caption{Splitting of ODE dynamics for iKFAD, CD, and CADAM.}
\begin{tabular}{c|c}
\toprule
\textbf{iKFAD} & 
$\begin{pmatrix}\dot{x} \\ \dot{p} \\ \dot{\xi} \end{pmatrix} =
\underbrace{\begin{pmatrix} p \\ 0 \\ 0 \end{pmatrix}}_{\text{A}} +
\underbrace{\begin{pmatrix} 0 \\ -\nabla f(x) \\ 0 \end{pmatrix}}_{\text{B}} +
\underbrace{\begin{pmatrix} 0 \\ - \xi \star p \\ \frac{[p]^2}{\rho} \end{pmatrix}}_{\text{C}} +
\underbrace{\begin{pmatrix} 0 \\ -\gamma p \\ -\alpha \xi \end{pmatrix}}_{\text{D}}$
\\
\midrule
\textbf{CD} & 
$\begin{pmatrix}\dot{x} \\ \dot{p} \end{pmatrix} =
\underbrace{\begin{pmatrix} p \\ 0 \end{pmatrix}}_{\text{A}} +
\underbrace{\begin{pmatrix} 0 \\ -\nabla f(x) \end{pmatrix}}_{\text{B}} +
\underbrace{\begin{pmatrix} 0 \\ -c [p]^3 \end{pmatrix}}_{\text{C'}} +
\underbrace{\begin{pmatrix} 0 \\ -\gamma p \end{pmatrix}}_{\text{D'}}$
\\
\midrule
\textbf{CADAM} & 
$\begin{pmatrix}\dot{x} \\ \dot{p} \\ \dot{\zeta} \end{pmatrix} =
\underbrace{\begin{pmatrix} \frac{p}{\sqrt{\zeta + \epsilon}} \\ 0 \\ 0 \end{pmatrix}}_{\text{A}'} +
\underbrace{\begin{pmatrix} 0 \\ -\nabla f(x) \\ 0 \end{pmatrix}}_{\text{B}} +
\underbrace{\begin{pmatrix} 0 \\ -c [p]^3 \\ 0 \end{pmatrix}}_{\text{C'}} +
\underbrace{\begin{pmatrix} 0 \\ -\gamma p \\ 0 \end{pmatrix}}_{\text{D'}} +
\underbrace{\begin{pmatrix} 0 \\ 0 \\ [\nabla f(x)]^2 - \alpha \zeta \end{pmatrix}}_{\text{E}}$
\\
\bottomrule
\end{tabular}
\label{tab:ode_splittings}
\end{table}

Concretely, consider an ODE of the form $\dot{z} = \sum_k F_k(z)$, where each vector field $F_k$ represents a distinct part of the dynamics, such as transport, gradient forcing, damping, or adaptive scaling. Operator splitting approximates the flow of the full system over a time step $\delta t$ by composing sub-step updates associated with each $F_k$. When these sub-steps have closed-form solutions, this gives explicit update rules without requiring numerical ODE solvers. For background on splitting methods, see \cite{Leimkuhler_Reich_2005}.

 Table~\ref{tab:ode_splittings} lists the resulting decomposition into sub-operators (labeled A–E), while Table~\ref{tab:splitting_updates_concise} reports the update rules used for each sub-step over a time interval $\delta t$. While steps A, A', B, C', D, D' and E admit analytical updates, the update for step C is obtained by observing that the quantity $p_i^2 + \rho \xi_i^2$ is invariant \cite{fad_2023}. The discrete-time optimizers studied in the main text are obtained by composing these analytical updates in a prescribed order.

The discrete-time optimizers are constructed using first-order Lie-Trotter operator splitting. Specifically, iKFAD uses the CDBA composition, CD follows a C'D'BA sequence and CADAM uses a C'D'BAE composition. Higher-order symmetric splittings, such as the second-order Strang splitting, could also be used, but these first-order compositions give simple explicit schemes with low computational cost per iteration. In each case, the full update $z_{n+1} = \Phi_{\delta t}(z_n)$ is obtained by the sequential application of the sub-step updates detailed in Table~\ref{tab:splitting_updates_concise}.

\begin{table}[ht]
\centering
\caption{Analytical updates for unique splitting steps.}
\begin{tabular}{c|l}
\toprule
\textbf{Step} & \textbf{Analytical update} \\
\midrule
A & $x_{n+1} = x_n + \delta t\, p_n$  \\
\midrule
A' & $x_{n+1} = x_n + \delta t\, \frac{p_n}{\sqrt{\zeta_n + \epsilon}}$ \\
\midrule
B & $p_{n+1} = p_n - \delta t\, \nabla f(x_n)$ \\
\midrule
C & 
\begin{tabular}[t]{@{}l@{}}
$p_{n+1} = \mathrm{e}^{- \Xi_n \delta t} p_n$, \\ 
$\tilde{\xi}_{n+1} = \sqrt{[\xi_n]^2 + \frac{(I-e^{-2\Xi_n \delta t})}{\rho}[p_n]^2}$
\end{tabular} \\
\midrule
C' & $p_{n+1} = \frac{p_n}{\sqrt{1 + 2 c [p_n]^2 \delta t}}$ \\
\midrule
D & 
\begin{tabular}[t]{@{}l@{}}
$p_{n+1} = \mathrm{e}^{-\gamma \delta t} p_n$, \\ 
$\xi_{n+1} = \mathrm{e}^{-\alpha \delta t} \tilde{\xi}_{n+1}$
\end{tabular} \\
\midrule
D' & $p_{n+1} = \mathrm{e}^{-\gamma \delta t} p_n$ \\
\midrule
E & $\zeta_{n+1} = \mathrm{e}{-\alpha \delta t} \zeta_n + \frac{(1-e^{-\alpha \delta t})}{\alpha}[\nabla f(x_n)]^2$ \\
\bottomrule
\end{tabular}
\label{tab:splitting_updates_concise}
\end{table}
The C' update in Table~\ref{tab:splitting_updates_concise} follows by solving the scalar equation $\dot{p}_i = - c p_i^3$ component-wise. Separating variables and integrating yields
\begin{align*}
    &\int_{p_i(t)}^{p_i(t+\delta t)}\dfrac{dq_i}{q_i^3} = - c \int_t^{t+\delta t} \,ds \Rightarrow  \left[-\dfrac{1}{2q_i^2}\right]_{p_i(t)}^{p_i(t+\delta t)}=-c\,\delta t \Rightarrow   p_i(t+\delta t) = \dfrac{p_i(t)}{\sqrt{1+2cp_i(t)^2\delta t }}.
\end{align*}
\section{Comparison of Known Momentum Schedules to our Adaptive Schedule}
\label{app:sustkever}
Both \cite{sutskever2013importance} and \cite{nesterov1983method} propose gradually increasing the momentum parameter $\mu$ for non-strongly-convex functions, which is typical of neural network loss landscapes. The schedules are given by
\begin{equation}
\label{eq:sch_hinton}
\begin{aligned}
\mu_t &= \min\Bigl(
1 - 2^{-1 - \log_2(\lfloor \tfrac{t}{250} \rfloor + 1)},
\ \mu_{\max}
\Bigr), \\
\mu_t &= 1 - \frac{3}{t+5},
\end{aligned}
\end{equation}
respectively. iKFAD naturally reproduces this behaviour. Since an increase in $\mu$ corresponds to a decrease in $\gamma$ (\eqref{eq:mu_gamma_correspondence}), the gradual reduction of friction observed in Figure~\ref{fig:ksis_fmnist_thinned} aligns with the progressive momentum increase proposed in \cite{nesterov1983method} and \cite{sutskever2013importance}. As optimization progresses toward the equilibrium $(\nabla f(x^*), p^*) = (0, 0)$, the system’s kinetic energy decays, leading to a corresponding decrease in the adaptive friction $\xi$. Intuitively, higher friction (lower momentum) at the beginning of training allows careful exploration of the loss landscape, while lower friction later facilitates traversal of flat regions and local minima. Unlike static or monotonic schedules \cite{sutskever2013importance, nesterov1983method}, our thermostated mechanism adapts friction dynamically and individually per parameter, preventing excessive buildup of momentum.
\begin{figure}[H] 
\centering
\includegraphics[width=0.33\linewidth]{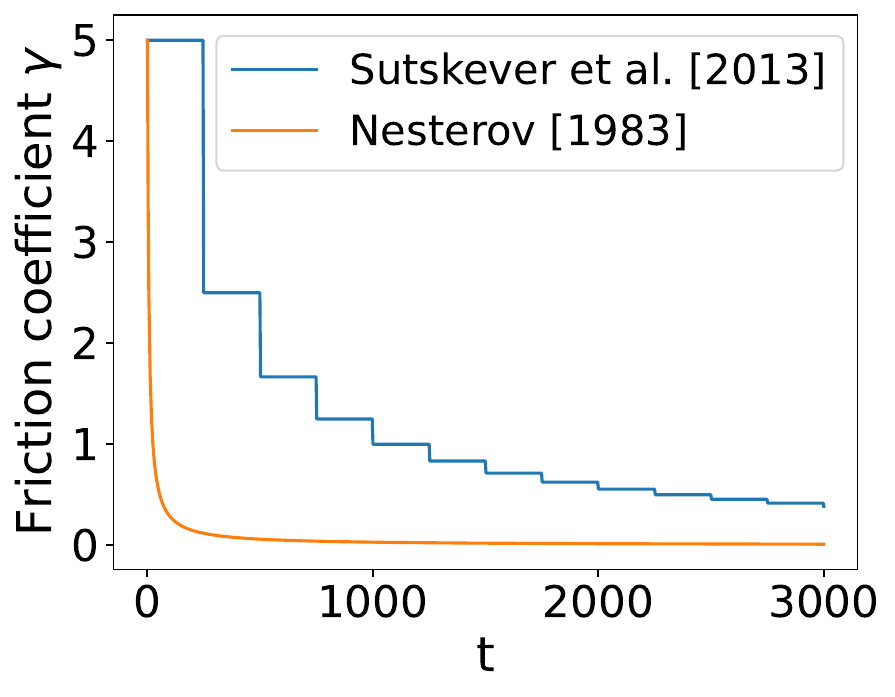}
\includegraphics[width=0.35\linewidth]{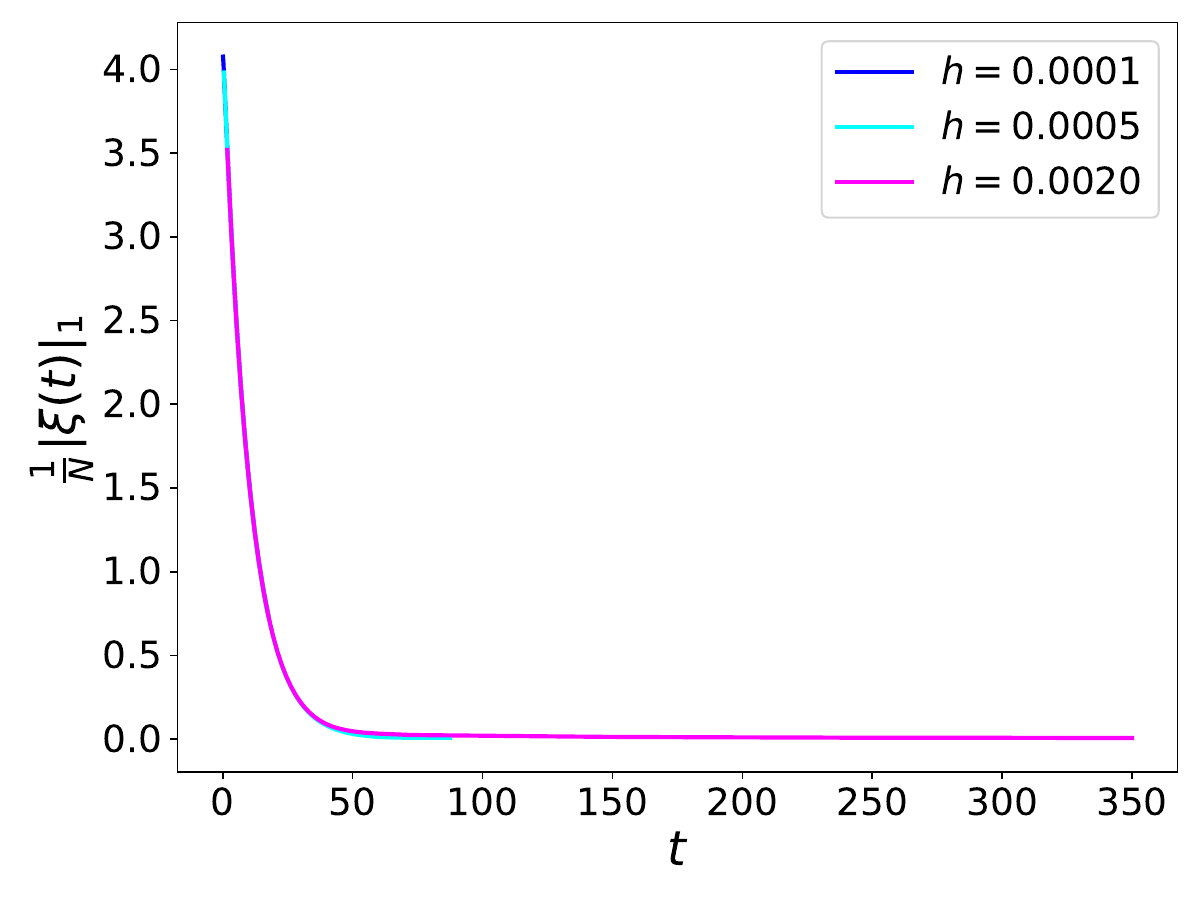}
\caption{Left: Friction schedules corresponding to \eqref{eq:sch_hinton} via the mapping in \eqref{eq:mu_gamma_correspondence} ($h=0.01$). Right: Mean $L^1$-norm absolute component value averaged over the number of network parameters) of the adaptive friction $\xi$ in iKFAD (with $\gamma=0$) during ResNet18 training on FashionMNIST for three learning rates. The decreasing trend of $\xi$ mirrors the momentum annealing behaviour in classical schedules.}
\label{fig:ksis_fmnist_thinned}
\end{figure}

\section{ResNet-18 on CIFAR-10 Results}
\label{app:resnet18}
\begin{figure}[H]
    \centering    \includegraphics[width=0.8\linewidth]{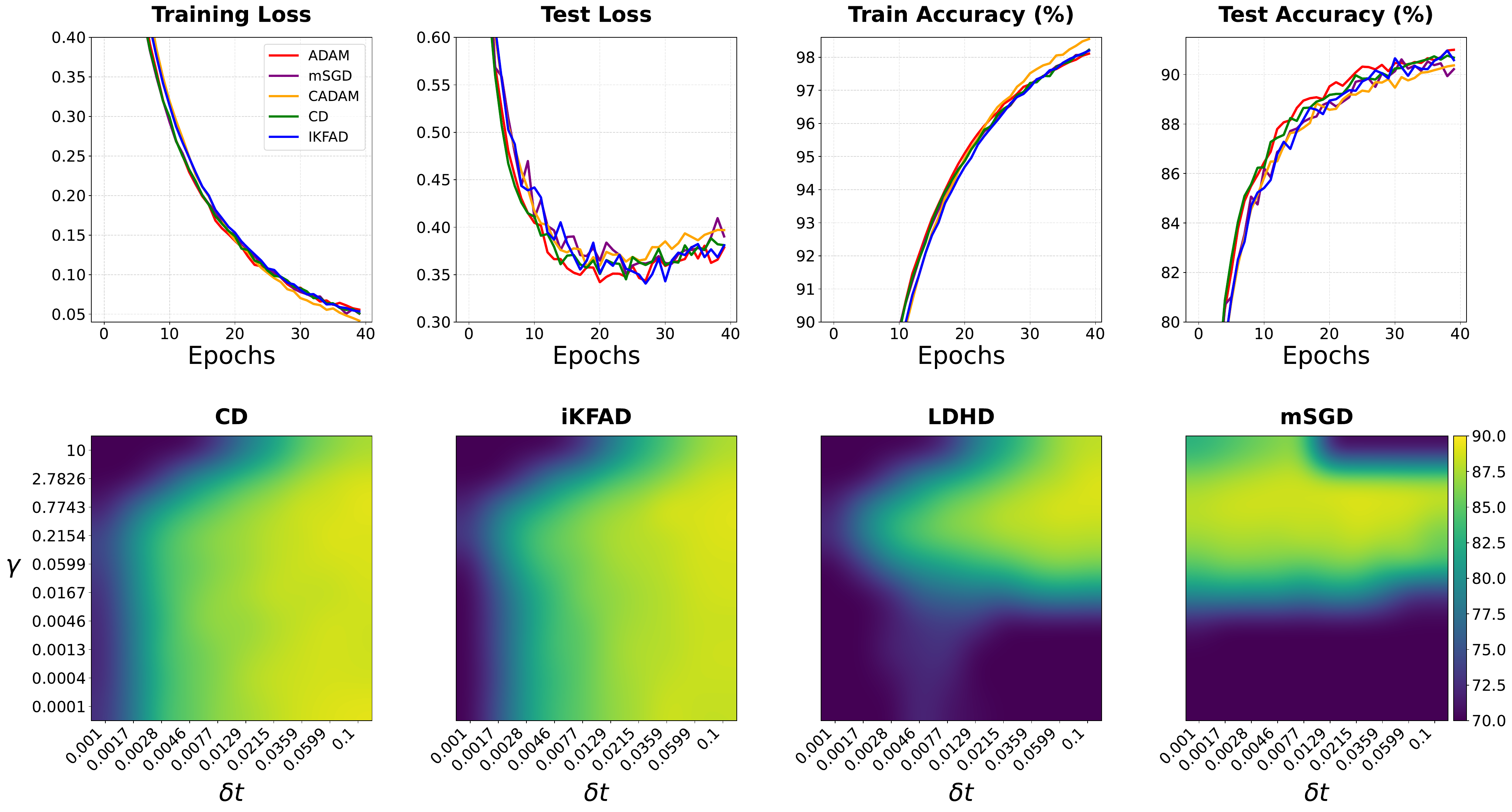}
    \caption{Complete Results on ResNet-18. First row shows the train and test accuracy and losses. The second row reports accuracy for different combinations of learning rate and linear damping coefficient.}
    \label{fig:resnet18_results}
\end{figure}
For ResNet-18, we train on CIFAR-10 with a batch size of 128. After hyperparameter tuning, the performance of all optimizers is comparable. This is consistent with previous empirical observations that, after sufficient hyperparameter tuning, standard optimizers often achieve similar performance on ResNet and VGG architectures \cite{choi2020empiricalcomparisonsoptimizersdeep}. However, iKFAD and CD appear to be more robust with respect to the choice of linear friction $\gamma$ and step size $\delta t$, as can be seen in the second row of Figure~\ref{fig:resnet18_results}. The comparison between LDHD and mSGD highlights the effect of discretization. Both methods correspond to the same underlying continuous dynamics and only differ in the way they are discretized. It appears that they are mostly stable in the same regions, but mSGD performs worse in the high $\gamma$, high $\delta t$ regime.

\section{CADAM Results}
\label{app:cadam}
This section extends the results of Figure \ref{fig:losses_all} by including the corresponding CADAM results and examines the effect of removing the linear damping term in CADAM. Like Adam, CADAM maintains three state variables, but introduces an extra hyperparameter, the cubic damping coefficient $c$. In Figure \ref{fig:output_cadam0}, we study the effect of removing the linear damping mechanism and find that doing so leads to a substantial decline in performance across all benchmarks considered.These results suggest that the linear damping term plays an important stabilizing role in CADAM. When $\gamma=0$, the method relies entirely on cubic damping to control the momentum variables, which appears to be insufficient for these benchmarks. In particular, the GPT2-Nano training loss diverges for $\gamma=0$.

When linear damping is included, CADAM exhibits substantially improved performance. Figure~\ref{fig:output_cadam} extends Figure~\ref{fig:losses_all} by adding the corresponding CADAM results and standard deviations across random seeds. We observe that CADAM performance is comparable to Adam on several of the studied benchmarks, as shown in Figure \ref{fig:output_cadam}. However, CADAM does not provide a consistent advantage over Adam across the tasks considered. In particular, on TinyViT, CADAM converges more slowly than Adam (although final losses are very close) whereas for the DistilBERT model trained on SST-2, CADAM achieves a lower average training and test loss than Adam. For the remaining tasks, CADAM performance is broadly comparable that of Adam.

Overall, our results suggest that augmenting the Adam dynamics with a cubic damping term provides limited additional benefit in our setting. One potential explanation for this could be that adaptive friction and cubic damping play a role similar to Adam's adaptive learning rates. As discussed in Section~\ref{sec:methodology}, both Adam and iKFAD can be interpreted in continuous time as regulating the momentum variables on a per-parameter basis, although through different equations. Moreover, the cubic damping term can be viewed as involving a nonlinear, momentum-dependent friction coefficient $g(p)$ within the general framework of Equation~\eqref{eq:general_nonlinear_damping}. Since Adam already includes coordinate-wise adaptive scaling through $\zeta$, adding cubic damping may introduce a partially redundant form of adaptivity, which could explain why CADAM does not consistently improve over Adam in our experiments.
\begin{figure}[H]
    \centering    \includegraphics[width=\linewidth]{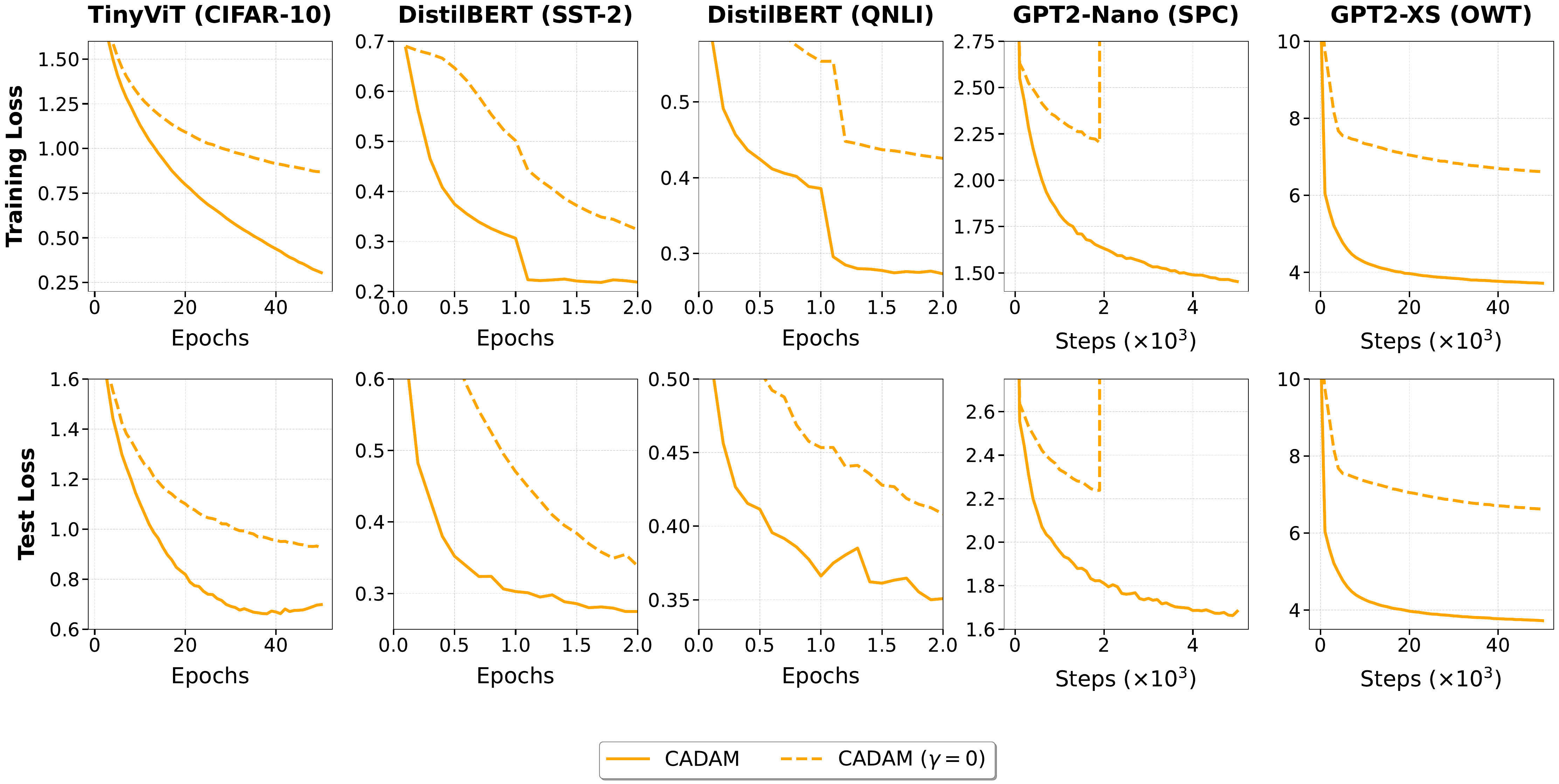}
       \caption{\textbf{CADAM with and without linear damping.} Training and test losses for CADAM with $\gamma=0$ and with $\gamma \in [10^{-6},10]$ across the benchmark tasks. Removing the linear damping term leads to worse performance across the benchmarks considered, indicating that linear damping plays an important role in CADAM.}
    \label{fig:output_cadam0}
\end{figure}
\begin{figure}[H]
    \centering    \includegraphics[width=\linewidth]{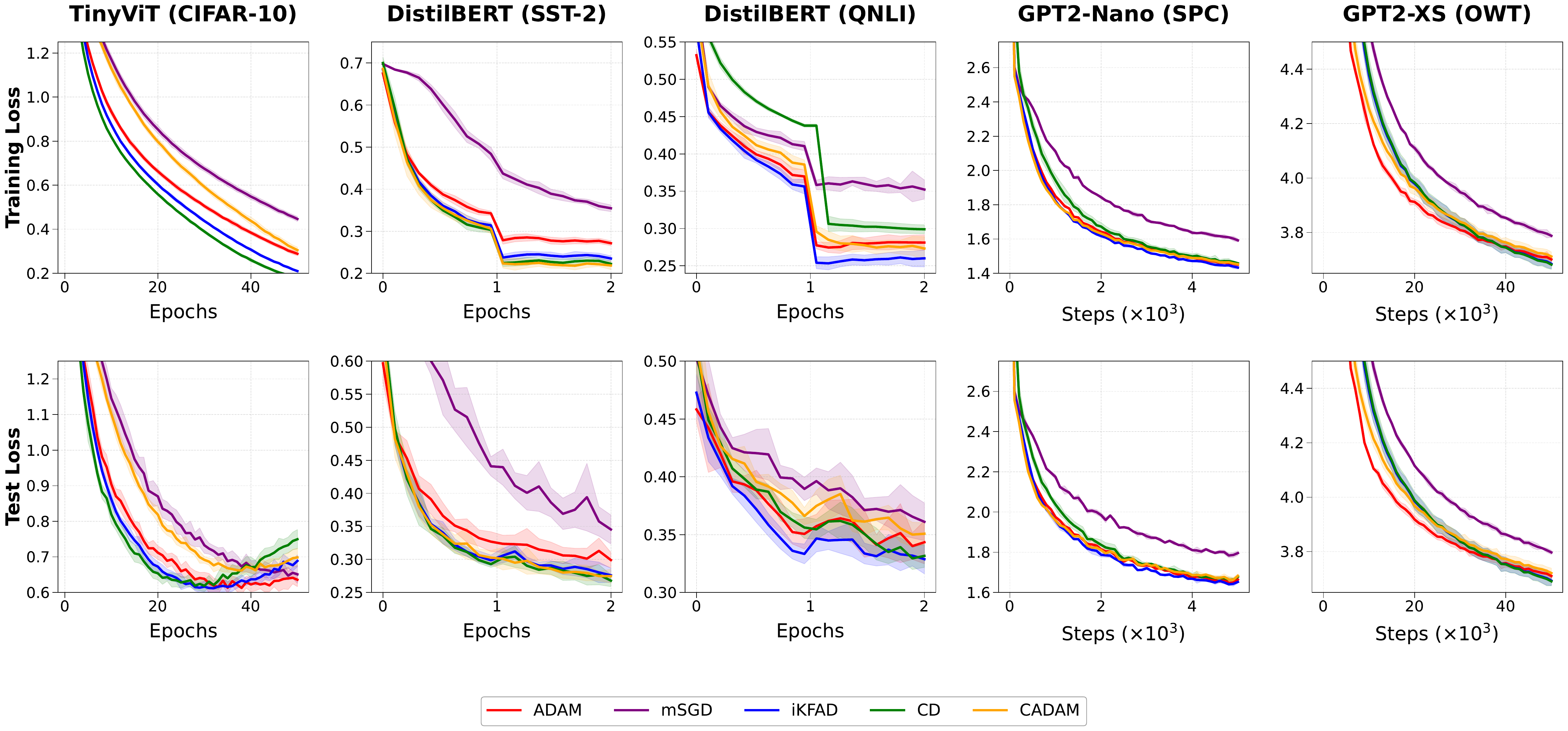}
    \caption{\textbf{Training and test losses including CADAM.} Training and test loss curves for CADAM, Adam, mSGD, iKFAD and CD across benchmarks. Curves are averaged over random seeds, with shaded regions indicating standard deviations. CADAM achieves performance broadly comparable to Adam on several tasks, but does not provide a consistent improvement over Adam in these experiments.}
    \label{fig:output_cadam}
\end{figure}
\section{Hyperparameter Values for Optuna Searches} 

Across models and tasks, the best-performing hyperparameter scales for $\rho$ (iKFAD) and $c$ (CD) varied by several orders of magnitude. This variation is likely related to the scale of the momentum variables during training, since these coefficients control the strength of the nonlinear damping terms. Normalizing the momentum variables across iterations could potentially reduce this sensitivity to the magnitude of $p$, but we leave this direction for future work.

\label{app:hyperparams}
Experimental configurations: 
\begin{itemize}
    \item \textbf{ResNet18-CIFAR-10}: batch size 128, 40 epochs, $\gamma\in[10^{-9},10]$, $(\rho,c,\alpha)\in[10^{-9},10^{10}]$. $h \in [10^{-6}, 10^{-2}]$ for CADAM and $h \in [10^{-6}, 10^{-1}]$ for iKFAD and CD. The ranges here are quite vast as this was the very first experiment.
    \item \textbf{TinyViT - CIFAR-10}: batch size 128, 50 epochs. $\gamma\in[10^{-5},10]$, $\rho\in[10^{-8},10]$, $c \in [10^{4}, 10^{10}]$. $h \in [10^{-6}, 10^{-2}]$ for CADAM and $h \in [10^{-4}, 3 \times 10^{-1}]$ for iKFAD and CD.
    \item \textbf{DistilBERT - SST-2}: batch size 16, 2 epochs, $\gamma\in[10^{-6},1]$ for CADAM/iKFAD and $\gamma = 0$ for CD, $\rho\in[10^{-8},10]$, $\alpha\in[0.001,1]$. 
    \item \textbf{DistilBERT - QNLI}: batch size 16, 2 epochs, $\gamma\in[10^{-6},1]$ for CADAM/iKFAD and $\gamma = 0$ CD, $c\in[0.01,10^{13}]$, $\alpha\in[0.001,10]$, $\rho \in [10^{-8}, 10]$.
    \item \textbf{GPT2-Nano - Shakespeare}: batch size 16, 5000 steps. $\gamma \in [10^{-5},10]$, $h \in [10^{-5}, 5 \times 10^{-1}]$ for CD/iKFAD and $h \in [10^{-6}, 10^{-2}]$ for CADAM, $c \in [10^{-1}, 10^{6}]$, $\alpha \in [10^{-3}, 10]$ for CADAM and $\alpha \in [10^{-5}, 10]$ for iKFAD. $\rho \in [10^{-8}, 10]$
    \item \textbf{GPT2-XS - OpenWebText}: batch size 16, 50000 steps. $h \in [10^{-6}, 10^{-2}]$ for CADAM and $h \in [10^{-4}, 10^{-1}]$ for CD/iKFAD. $\gamma \in [10^{-5},10]$. $\alpha \in [10^{-3},1]$. $c \in [10^{4}, 10^{10}]$. $\rho \in [10^{-8}, 10]$.
    \item \textbf{GPT2-S - OpenWebText}: batch size 16, 200000 steps. Same hyperparameter ranges as GPT2-XS.
\end{itemize}

For all experiments, hyperparameters for Adam and mSGD were swept over the following ranges: For Adam, we used $h \in [10^{-6}, 10^{-2}]$, $\beta_1 \in [0.85, 0.999]$ and $\beta_2 \in [0.85, 0.999]$. For mSGD, we used  $h \in [10^{-6}, 10^{-1}]$ for mSGD and $\mu \in [0.5, 0.99]$.
\newpage
\begin{table}[H]
\centering
\small
\caption{Optimized hyperparameters for all loss curves in Figure \ref{fig:losses_all}.
}
\label{table:all_opt_params}
\renewcommand{\arraystretch}{1.15}
\begin{tabular}{@{}lcccccccc@{}}
\toprule
 & $\delta t$ & $\gamma$ & $\alpha$ & $\rho$ & $c$ & $\beta_1$ & $\beta_2$ & $\mu$ \\
\midrule

\multicolumn{9}{c}{\textbf{ResNet-18}} \\
\midrule
iKFAD & 0.0973 & 0.40 & 0.402 & 1.87 & - & - & - & - \\
CD & 0.0448 & 0.17 & - & - & 56814.47 & - & - & - \\
CADAM & 0.0037 & 3.55 & 0.005 & - & 82552.90 & - & - & - \\
mSGD & 0.0260 & - & - & - & - & - & - & 0.86 \\
Adam & 0.0008 & - & - & - & - & 0.97 & 0.99 & - \\
\midrule

\multicolumn{9}{c}{\textbf{TinyViT}} \\
\midrule
iKFAD & 0.08991 & 0.04537 & 0.10110 & $6.29 \times 10^{-6}$ & - & - & - & - \\
CD & 0.19983 & 0.00221 & - & - & $6.55 \times 10^{5}$ & - & - & - \\
CADAM & 0.00197 & 4.46870 & 24314309 & - & 0.91377 & - & - & - \\
mSGD & 0.02133 & - & - & - & - & - & - & 0.87439 \\
Adam & 0.00055 & - & - & - & - & 0.86082 & 0.88521 & - \\
\midrule

\multicolumn{9}{c}{\textbf{SST-2} (Pretrain)} \\
\midrule
iKFAD & 0.02704 & $1.28 \times 10^{-6}$ & 0.0465 & $3.61 \times 10^{-7}$ & - & - & - & - \\
CD & 0.0363 & 0 & - & - & $9.78 \times 10^{6}$ & - & - & - \\
CADAM & 0.00035 & 9.2188 & 0.03233 & - & 0.0237 & - & - & - \\
mSGD & 0.00237 & - & - & - & - & - & - & 0.8335 \\
Adam & $3.82 \times 10^{-5}$ & - & - & - & - & 0.9187 & 0.98254 & - \\
\midrule

\multicolumn{9}{c}{\textbf{QNLI} (Finetune)} \\
\midrule
iKFAD & 0.0413 & $1.36 \times 10^{-8}$ & 2.80 & 6.71 & - & - & - & - \\
CD & 0.03144 & 0 & - & - & $1.90 \times 10^{8}$ & - & - & - \\
CADAM & 0.0002 & 9.66 & 0.0026 & - & 22754.57 & - & - & - \\
mSGD & 0.0015 & - & - & - & - & - & - & 0.895 \\
Adam & $3.52 \times 10^{-5}$ & - & - & - & - & 0.855 & 0.933 & - \\
\midrule

\multicolumn{9}{c}{\textbf{GPT2-XS} (OWT)} \\
\midrule
iKFAD 
& 0.09956 & 0. & 0.04756 & $1.04 \times 10^{-5}$  & - & - & - 
& -
\\

CD 
& 0.09975 & 0.00138 & - & - & $1.43 \times 10^{6}$ & - & - & - 
\\

CADAM 
& 0.00678 & 7.53239 & 0.44020 & - & $3.11 \times 10^{6}$ & - & - & - 
\\

mSGD 
& 0.05362 & - & - & - & - & - & - & 0.95277 
\\

Adam & 0.00065 & - & - & - & - & 0.89447 & 0.99454 & - 
\\
\midrule

\multicolumn{9}{c}{\textbf{GPT2-S} (OWT)} \\
\midrule
iKFAD 
& 0.0937 & 0. & 0.1933 & $4.2726 \times 10^{-7}$  & - & - & - 
& -
\\

CD 
& 0.0629 & 0. & - & - & $11.8506 \times 10^{6}$ & - & - & - 
\\

CADAM 
& - & - & - & - & - & - & - & - 
\\

mSGD 
& 0.0768 & - & - & - & - & - & - & 0.8746
\\

Adam & $1.0478 \times 10^{-4}$ & - & - & - & - & 0.9001 & 0.9971 & - 
\\
\midrule

\multicolumn{9}{c}{\textbf{NanoGPT} (Shakespeare)} \\
\midrule
iKFAD & 0.39055 & $2.58 \times 10^{-5}$ & 0.03474 & 0.00017 & - & - & - & - \\
CD & 0.42614 & 0 & - & - & $1.95 \times 10^{5}$ & - & - & - \\
CADAM & 0.00828 & 9.47376 & 8.82877 & - & 0.95201 & - & - & - \\
mSGD & 0.09791 & - & - & - & - & - & - & 0.90054 \\
Adam & 0.00168 & - & - & - & - & 0.88757 & 0.92653 & - \\
\midrule
\bottomrule

\end{tabular}
\end{table}
\section{Ablation of $\gamma=0$}
For the $\gamma=0$ ablation studies, we used the same hyperparameter search ranges as in the previous section, but fixed $\gamma$ to zero. In some cases, the best runs with $\gamma=0$ performed better than the corresponding best best runs with $\gamma > 0$. These runs were therefore used in Figure \ref{fig:losses_all} as the overall best. For CD and iKFAD, the best values of $\delta t$, $\alpha$, and $\rho/c$ are generally of comparable scale across the $\gamma>0$ and $\gamma=0$ search settings, as shown in Table~\ref{table:gamma_split_comparison}.

\begin{table*}[ht]
\centering
\small
\setlength{\tabcolsep}{3.5pt}
\renewcommand{\arraystretch}{1.2}
\caption{Comparison of optimal hyperparameters. Left: Unconstrained ($\gamma > 0$). Right: Constrained ($\gamma = 0$). Note: The $\rho/c$ column denotes $\rho$ for iKFAD and $c$ for CD/CADAM.}
\label{table:gamma_split_comparison}
\begin{tabular}{@{}ll cccc ccc@{}}
\toprule
& & \multicolumn{4}{c}{\textbf{Unconstrained} ($\gamma > 0$)} & \multicolumn{3}{c}{\textbf{Constrained} ($\gamma = 0$)} \\
\cmidrule(lr){3-6} \cmidrule(l){7-9}
\textbf{Dataset} & \textbf{Method} & $\delta t$ & $\gamma$ & $\alpha$ & $\rho/c$ & $\delta t$ & $\alpha$ & $\rho/c$ \\
\midrule

\multirow{3}{*}{\textbf{TinyViT}} 
 & CD    & 0.200 & 0.0022 & -- & $6.55 \times 10^{5}$ & 0.117 & -- & $5.35 \times 10^{5}$ \\
 & iKFAD & 0.090 & 0.045 & 0.101 & $6.29 \times 10^{-6}$ & 0.073 & 0.090 & $1.26 \times 10^{-5}$ \\
 & CADAM & 0.0020 & 4.47 & 0.914 & $2.43 \times 10^{7}$ & $8.17 \times 10^{-4}$ & 0.173 & $8.84 \times 10^{9}$ \\
\midrule

\multirow{3}{*}{\textbf{SST-2}} 
 & CD    & 0.0422 & 0.052 & -- & $3.34 \times 10^{7}$ & 0.0363 & -- & $9.78 \times 10^{6}$ \\
 & iKFAD & 0.0270 & $1.28 \times 10^{-6}$ & 0.047 & $3.61 \times 10^{-7}$ & 0.0165 & 0.057 & $9.89 \times 10^{-6}$ \\
 & CADAM & $3.49 \times 10^{-4}$ & 9.22 & 0.032 & 0.024 & $3.39 \times 10^{-5}$ & 0.021 & $3.39 \times 10^{5}$ \\
\midrule

\multirow{3}{*}{\textbf{QNLI}} 
 & CD    & 0.0789 & 2.78 & -- & $1.12 \times 10^{8}$ & 0.0314 & -- & $1.90 \times 10^{8}$ \\
 & iKFAD & 0.0413 & $1.36 \times 10^{-8}$ & 2.80 & $6.71 \times 10^{-9}$ & 0.0448 & 0.0017 & $2.77 \times 10^{-7}$ \\
 & CADAM & $1.60 \times 10^{-4}$ & 9.66 & 0.0026 & $2.28 \times 10^{4}$ & $1.74 \times 10^{-5}$ & 0.033 & $1.52 \times 10^{7}$ \\
\midrule

\multirow{3}{*}{\textbf{GPT2-XS}} 
 & CD    & 0.0997 & 0.0014 & -- & $1.43 \times 10^{6}$ & 0.0990 & -- & $1.37 \times 10^{6}$ \\
 & iKFAD & 0.0763 & 0.0009 & 0.6737 & $1.3857 \times 10^{-6}$ & 0.0996 & 0.0476 & $1.0429 \times 10^{-5}$ \\
 & CADAM & 0.0068 & 7.53 & 0.440 & $3.11 \times 10^{6}$ & $1.32 \times 10^{-5}$ & 0.491 & $1.43 \times 10^{8}$ \\
\midrule

\multirow{3}{*}{\textbf{NanoGPT}} 
 & CD    & 0.499 & 0.0038 & -- & $1.87 \times 10^{5}$ & 0.426 & -- & $1.95 \times 10^{5}$ \\
 & iKFAD & 0.391 & $2.58 \times 10^{-5}$ & 0.035 & $1.71 \times 10^{-4}$ & 0.494 & 2.20 & $4.55 \times 10^{-6}$ \\
 & CADAM & 0.0083 & 9.47 & 8.83 & 0.952 & 0.0052 & 0.012 & $7.02 \times 10^{5}$ \\

\bottomrule
\end{tabular}
\end{table*}
\section{iKFAD Convergence Analysis}
This section establishes the convergence properties of the proposed iKFAD optimizer. We focus our analysis on strictly convex objective functions (specifically $C^2$ functions  with Hessian eigenvalues bounded above and below by positive constants) to provide stability guarantees. We first prove that the continuous-time iKFAD dynamics converges at an exponential rate to the global minimum. We then show that this geometric convergence rate is preserved in a discrete-time setting, provided the step size $\delta t$ is sufficiently small. 

\label{app:ikfad_convergence}
\subsection{Dynamical Properties of iKFAD}
\label{app:ikfad_convergence_dynamics}
%
Equation \eqref{eq:iKFAD_xi} reads $\dot \xi = \frac{[p(s)]^2}{\rho} - \alpha \xi$. Therefore, 
$$\xi(t) = \mathrm{e}^{-\alpha t} \xi(0) + \int_0^t \mathrm{e}^{-\alpha(t-s)} \frac{[p(s)]^2}{\rho} ds.$$ 
This means that $\xi(t) \geq 0$ for all $t>0$, given $\xi(0) \geq 0$. Therefore $\xi$ can be seen as a friction coefficient, similarly to $\gamma$. 
We next show that under some conditions on~$f$ the dynamics of~\eqref{eq:ikfad} is well-posed on infinite time horizons.

\begin{lemma}
\label{lem:bounds}
Assume that~$f$ is smooth and~$f(x) \to +\infty$ as $\|x\| \to +\infty$.
Then, for any initial condition~$(x_0,p_0,\xi_0) \in \R^d \times \R^d \times \R^d_+$, the solution of~\eqref{eq:ikfad}  is well defined for all times~$t \geq 0$, and there exists~$R > 0$ such that 
\[
\forall t \geq 0, \qquad \|x(t)\| \leq R, \quad \|p(t)\| \leq R, 
\qquad \|\xi(t)\| \leq R.
\]
\end{lemma}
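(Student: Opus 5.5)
The plan is to find a Lyapunov-type function that is nonincreasing along trajectories and whose sublevel sets are bounded; this gives both global existence and the uniform bound $R$. The natural candidate comes from the invariant $p_i^2+\rho\xi_i^2$ of the C-part of the splitting. We set
\[
\mathcal{E}(x,p,\xi) = f(x) + \frac{1}{2}\|p\|^2 + \frac{\rho}{2}\|\xi\|^2 .
\]
Along \eqref{eq:ikfad}, the terms $\nabla f(x)\cdot p$ cancel. The coupling terms $-p\cdot(\xi\star p) = -\sum_i \xi_i p_i^2$ and $\rho\,\xi\cdot [p]^2/\rho = \sum_i \xi_i p_i^2$ also cancel. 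What remains is
\[
\frac{\mathrm{d}}{\mathrm{d}t}\mathcal{E} = -\gamma\|p\|^2 - \alpha\rho\|\xi\|^2 \leq 0 .
\]
Hence $\mathcal{E}(x(t),p(t),\xi(t)) \leq \mathcal{E}(x_0,p_0,\xi_0) =: E_0$ for as long as the solution exists.

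The steps are as follows.

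\textbf{Step 1: local existence.} Since $f$ is smooth, the vector field is locally Lipschitz, so Cauchy--Lipschitz gives a unique maximal solution on $[0,T^*)$. The positivity of $\xi$ on this interval follows from the Duhamel formula displayed just before the lemma.

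\textbf{Step 2: energy identity.} We carry out the computation above on $[0,T^*)$.

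\textbf{Step 3: uniform bounds.} Since $f$ is continuous and coercive, it is bounded below by $f_{\min}$ (it attains its minimum). The energy inequality then gives
\[
\frac{1}{2}\|p\|^2 + \frac{\rho}{2}\|\xi\|^2 \leq E_0 - f_{\min},
\]
which bounds $p$ and $\xi$. It also gives $f(x(t)) \leq E_0$. Coercivity means the sublevel set $\{f\leq E_0\}$ is bounded, so $x(t)$ stays in a fixed ball.

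\textbf{Step 4: global existence.} The trajectory remains in a compact set, so the standard blow-up alternative forces $T^*=+\infty$. We take $R$ to be the maximum of the three bounds.

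The only delicate point is checking the exact cancellation of the $\xi$-coupling. This requires the weight $\rho/2$ on $\|\xi\|^2$, matching the $1/\rho$ in \eqref{eq:iKFAD_xi}. Apart from that, the argument is routine.

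Note that the proof does not need $\gamma>0$ or $\xi\ge0$, since both dissipative terms are nonpositive regardless. We record the positivity of $\xi$ only because it is used later.
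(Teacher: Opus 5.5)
Your proof is correct and follows the paper's argument. Both use the energy $f - f(x^*) + \frac{1}{2}\|p\|^2 + \frac{\rho}{2}\|\xi\|^2$, the exact cancellation that gives the derivative $-\gamma\|p\|^2 - \alpha\rho\|\xi\|^2$, and boundedness of sublevel sets from coercivity, and your remark that the sign of $\xi$ is not needed there is right: the paper invokes $\xi \ge 0$ at that step, but the coupling terms cancel identically.
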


\begin{proof}
The Cauchy--Lipschitz theorem ensures the existence and uniqueness of a solution for a positive time. To show that the solution is global in time, we introduce the following Lyapunov function 
\[
\cG(x,p,\xi) = f(x)-f(x^*) + \frac{1}{2} \|p\|^2 + \frac{\rho}{2}\|\xi\|^2.
\]
The function $\sG(t) = \cG(x(t),p(t),\xi(t))$ satisfies
\[
\begin{aligned}
    \dot{\sG}(t) & = \nabla f(x(t)) \cdot \dot{x}(t) + p(t) \cdot \dot{p}(t)  + \rho \xi(t) \cdot \dot{\xi}(t) \\ 
    & = -\gamma \|p(t)\|^2 - (\xi(t) \star p(t)) \cdot p(t) + \xi(t) \cdot [p(t)]^2 - \alpha \rho \|\xi(t)\|^2\\ 
    & = -\gamma \|p(t)\|^2 - \alpha \rho \|\xi(t)\|^2 \leq 0,
\end{aligned}
\]
where we used that~$\xi(t) \geq 0$. This shows that~$\sG(t) \leq \sG(0)$, from which the result easily follows since~$f(x)-f(x^*) \geq 0, $ for all $ x \in \R^d$. 
\end{proof}
%
%

The next result, whose derivation is straightforward, characterizes equilibria of the dynamics.

\begin{proposition}
\label{prop:equilibria_ikfad}
The equilibria of the system \eqref{eq:ikfad}  correspond to 
$$(\dot{x}, \dot{p}, \dot{\xi}) 
 = 0 \Leftrightarrow (p^*, -\nabla f(x^*) , \xi^*) = 0,$$
i.e. they coincide with the physical equilibria.
 \end{proposition}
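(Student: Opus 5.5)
The plan is to set each right-hand side of \eqref{eq:ikfad} to zero and solve the resulting algebraic system. The converse direction is then a direct substitution.

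\emph{Forward direction.} Suppose $(x,p,\xi)$ is an equilibrium. From \eqref{eq:iKFAD_x}, $\dot x = p = 0$, so $p^*=0$. Substituting $p=0$ into \eqref{eq:iKFAD_xi} gives $0 = [p]^2/\rho - \alpha\xi = -\alpha \xi$. Since $\alpha>0$, this forces $\xi^*=0$. Finally, with $p=0$ and $\xi=0$, equation \eqref{eq:iKFAD_p} reduces to $0=-\nabla f(x^*) - \gamma\cdot 0 - 0\star 0 = -\nabla f(x^*)$. Hence $\nabla f(x^*)=0$, and the triple $(p^*,-\nabla f(x^*),\xi^*)$ vanishes. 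The order of substitution matters: $p$ must be eliminated first, so that the $\xi\star p$ term disappears and the $\xi$ equation becomes linear.

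\emph{Converse.} Suppose $p=0$, $\xi=0$ and $\nabla f(x)=0$. Inserting these into the three right-hand sides of \eqref{eq:ikfad} gives $(\dot x,\dot p,\dot \xi)=(0,0,0)$ immediately.

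\emph{Interpretation.} The phrase ``physical equilibria'' refers to the critical points of $f$ with zero momentum, together with zero adaptive friction. Under the hypotheses used later (strong convexity, or the condition \eqref{eq:condition_f_exp_cv_ikfad_1}), $\nabla f(x)=0$ holds only at the unique minimizer $x^*$. So the equilibrium is the single point $(x^*,0,0)$; this is worth noting for the subsequent convergence proofs.

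No step is a genuine obstacle. The only point requiring care is using $\alpha>0$ to deduce $\xi^*=0$; if $\alpha=0$ were allowed, any $\xi$ would be an equilibrium value.
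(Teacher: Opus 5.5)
Your proposal is correct and matches the derivation the paper has in mind. The paper omits this proof as straightforward, but its discrete-equilibrium proposition uses the same order: $p=0$ from the position equation, then $\xi=0$ using $\alpha>0$, then $\nabla f=0$ from the momentum equation, with the converse by direct substitution. One small wording fix: setting $p=0$ removes the source term $[p]^2/\rho$ from the $\xi$ equation, which is already linear in $\xi$; the $\xi\star p$ term vanishes in the $p$ equation, not the $\xi$ equation.
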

%
%
\subsection{Exponential Convergence of the Continuous iKFAD Dynamics}
\label{app:ikfad_convergence_exp}
%
Next, we show exponential convergence of the iKFAD dynamics \eqref{eq:ikfad}  to the equilibria of Proposition~\ref{prop:equilibria_ikfad}.
%
%
\ikfadconttheorem*

\begin{proof}
Consider~$\varepsilon>0$ and introduce the following Lyapunov function (which is, up to the additional term~$\|\xi\|^2$, a  common choice for stochastic Langevin dynamics~\cite{mattingly2002ergodicity}, also considered for dissipated Hamiltonian dynamics~\cite{mo2022}):
\begin{equation}
\label{eq:Lyapunov_cWe}    
\cWe(x,p,\xi) = f(x)-f(x^*) + \frac{1}{2} \|p\|^2 + \frac{\rho}{2}\|\xi\|^2 + \varepsilon (x-x^*)\cdot p + \varepsilon \|x-x^*\|^2.
\end{equation}
We first derive a lower bound for $\cWe(x,p,\xi)$ which ensures positivity of the Lyapunov function:
%
%
\begin{align*}
    \cWe(x,p,\xi)  &=  f(x)- f(x^*) + \frac{\rho}{2}\|\xi\|^2 + \frac{1}{4} \|p\|^2 + \Bigg\|\frac{p}{2} + \varepsilon(x-x^*) \Bigg\|^2 + \varepsilon\big(1 - \varepsilon\big)  \|x-x^*\|^2.
\end{align*}
Assuming~$\varepsilon \in (0,1/2]$, we then have the following  lower bound for the Lyapunov function \eqref{eq:Lyapunov_cWe}:
\begin{equation}
\label{eq:lower_bound_sWe}
\cWe(x,p,\xi) \geq f(x)-f(x^*) + \frac14 \|p\|^2 + \frac{\rho}{2} \|\xi\|^2 + \frac{\varepsilon}{2} \|x-x^*\|^2.
\end{equation}
We can similarly derive an upper bound for $\cWe(x,p,\xi) $ as follows: 
%
%
%
\begin{align*}
    \cWe(x,p,\xi) &=  f(x)- f(x^*)  + \frac{\rho}{2} \|\xi\|^2+ \frac{3}{4} \|p\|^2 - \Bigg\|\frac{p}{4} - \varepsilon (x-x^*)\Bigg\|^2  + \varepsilon \big(1+\varepsilon\big) \|x-x^*\|^2.
\end{align*}
As we have assumed~$\varepsilon \in (0,1/2]$, we then have the following upper bound:

\begin{equation}
\label{eq:upper_bound_sWe_ikfad}
\cWe(x,p,\xi) \leq f(x)-f(x^*) + \frac34 \|p\|^2 + \frac{\rho}{2} \|\xi\|^2 + \frac{3\varepsilon}{2} \|x-x^*\|^2.
\end{equation}
We can then upper bound the time derivative of $\sWe(t) = \cWe(x(t),p(t),\xi(t))$ as follows:
\[
\begin{aligned}
    \dot{\sWe}(t) & = \nabla f(x(t)) \cdot \dot{x}(t) + p(t) \cdot \dot{p}(t)  + \rho \xi(t) \cdot \dot{\xi}(t) + \varepsilon p(t)\cdot \dot{x}(t) \\
    & \qquad + \varepsilon \dot{p}(t) \cdot (x(t)-x^*) + 2\varepsilon \dot{x}(t) \cdot (x(t)-x^*)\\ 
    & = -(\gamma-\varepsilon) \|p(t)\|^2 - \alpha \rho \|\xi(t)\|^2 - \varepsilon (x(t)-x^*)\cdot \nabla f(x(t)) \\
    & \qquad  - \varepsilon  (\gamma-2) (x(t)-x^*) \cdot p(t) - \varepsilon (x(t)-x^*) \cdot (\xi(t) \star p(t)) \\ 
    %
    %
    %
    %
    %
    & \leq -(\gamma-\varepsilon) \|p(t)\|^2 - \alpha \rho \|\xi(t)\|^2 - \varepsilon (x(t)-x^*)\cdot \nabla f(x(t)) \\
    & \qquad + \varepsilon \left|\gamma-2\right| \left( \eta \|x(t)-x^*\|^2 + \frac{1}{4\eta} \|p(t)\|^2\right) +  \varepsilon \left(\Delta \|x(t)-x^*\|^2 + \frac{R^2}{4\Delta} \|p(t)\|^2\right),
\end{aligned}
\]
where we used weighted Young's inequalities as well as Lemma~\ref{lem:bounds}. Specifically, we used that 
$\|\xi(t)\| \leq R$ so that $\xi_i(t)^2 \leq R^2$ in particular, and therefore $\|\xi(t) \star p(t)\|^2 = \sum_{i=1}^d{(\xi_i(t) p_i(t))^2} \leq R^2 \|p(t)\|^2$.
Using assumption \eqref{eq:condition_f_exp_cv_ikfad_1}, we have
\[
\begin{aligned}
    \dot{\sWe}(t) & \leq -\Bigg[\gamma-\varepsilon \Bigg(1-\frac{R^2}{4 \Delta} - \frac{\left|\gamma-2\right|}{4 \eta} \Bigg) \Bigg] \|p(t)\|^2 - \alpha \rho \|\xi(t)\|^2 -a \varepsilon (f(x)-f(x^*)) \\
    & \qquad -  \varepsilon \bigg( b  - \left|\gamma-2\right| \eta - \Delta \bigg) \|x(t)-x^*\|^2.
\end{aligned}
\]
We set $\eta = \frac{b}{4|\gamma-2|}$ and $\Delta = b/4$, upon which, we obtain
\[
\begin{aligned}
    \dot{\sWe}(t) & \leq -\Bigg[\gamma-\varepsilon \Bigg(1-\frac{R^2+\left(\gamma-2\right)^2}{b} \Bigg) \Bigg] \|p(t)\|^2 - \alpha \rho \|\xi(t)\|^2 -a \varepsilon (f(x)-f(x^*)) \\
    & \qquad - \frac{\varepsilon b}{2}  \|x(t)-x^*\|^2.
\end{aligned}
\]
We choose a sufficiently small $\varepsilon>0$, so that $\gamma-\varepsilon\left(1-\frac{R^2+(\gamma-2)^2}{b}\right)>0$. Using \eqref{eq:upper_bound_sWe_ikfad}, we obtain the following upper bound for the time derivative of the Lyapunov function \eqref{eq:Lyapunov_cWe}    
\[
\begin{aligned}
    \dot{\sWe}(t) & \leq - \min \Bigg\{ \frac{4}{3}\Bigg[\gamma-\varepsilon \Bigg(1-\frac{R^2+\left(\gamma-2\right)^2}{b} \Bigg) \Bigg] , 2 \alpha, a \varepsilon , \frac{b}{3} \Bigg\} \sWe(t).
\end{aligned}
\]
The above upper bound gives us the desired exponential convergence result through the use of Gr{\"o}nwall's inequality. 
\end{proof}
%
%
%
%

\subsection{Discrete Convergence Analysis}
\label{app:ikfad_convergence_disc}

Consider the following splitting of the continuous dynamics \eqref{eq:ikfad} 
\begin{equation}\label{eq:iKFAD_splitting_alt}
\begin{pmatrix}\dot{x} \\ \dot{p} \\ \dot{\xi} \end{pmatrix}
 =\underbrace{\begin{pmatrix} p\\0 \\0 \end{pmatrix}}_{\text{A}} + 
  \underbrace{\begin{pmatrix}0\\-\nabla f(x) \\ 0 \end{pmatrix}}_{\text{B}} +
  \underbrace{\begin{pmatrix}0\\- \xi \star p \\ \frac{[p]^2}{ \rho } 
  \end{pmatrix}}_{\text{C'}} +
  \underbrace{\begin{pmatrix}0\\- \gamma p \\ - \alpha \xi \end{pmatrix}}_{\text{D}}, 
\end{equation}
and the integration scheme $C'DBA$.
For step $C'$, similarly to the analysis performed in \cite{fad_2023}, we observe that $p_i^2 + \rho \xi_i^2$ is an invariant quantity. Using this, 
and defining $\Xi = \text{diag}(\xi)$, 
%
the integration scheme $C'DBA$ can be written as follows:
\begin{align}
    \text{Step} \quad C' \quad \quad  \tilde{p}_{n+{1/2}} & = \mathrm{e}^{- \Xi_n \delta t} p_n \label{eq:1}\\
    \tilde{\xi}_{n+1} &= \sqrt{[\xi_n]^2 + \frac{(I-e^{-2\Xi_n \Delta t})}{\rho}[p_n]^2} \label{eq:2} \\
    \text{Step} \quad D \quad \quad \quad \tilde{p}_{n+1} &= \mathrm{e}^{-\gamma \delta t} \tilde{p}_{n+1/2} \label{eq:3}\\
    \xi_{n+1} & = \mathrm{e}^{-\alpha \delta t} \tilde{\xi}_{n+1} \label{eq:4}\\
    \text{Step} \quad B \quad \quad \quad p_{n+1} & = \tilde{p}_{n+1} - \delta t \nabla f(x_n) \label{eq:5}\\
    \text{Step} \quad A \quad \quad \quad x_{n+1} & = x_n + \delta t p_{n+1} \label{eq:6}
\end{align}
where for the $C'$ step, we first fix $\xi$ and analytically solve  $\dot{p_i}= -\xi_{n,i} p_i \Rightarrow p_{n+1,i}=p_{n,i} \mathrm{e}^{-\xi_{n,i} t}$, which, in vector form, corresponds to Equation \eqref{eq:1}. We then update $\xi$ by observing that for the $C'$ step 
$\dfrac{d}{dt}\left(p_i^2 + \rho \xi_i^2\right)= 2 p_i \dot{p}_i+2\rho \xi_i \dot{\xi}_i= 2 p_i (-\xi_i  p_i)+2 \rho \xi_i\dfrac{p_i^2}{\rho}=0.$
We therefore have $p_{n+1,i}^2 + \rho \xi_{n+1,i}^2 = p_{n,i}^2 + \rho \xi_{n,i}^2$ so that $\xi_{n+1,i}^2= \xi_{n,i}^2 + \dfrac{p_{n,i}^2-p_{n+1,i}^2}{\rho} .$ This, written in vector form, yields the update rule for $\xi$ in Equation \eqref{eq:2}.
Upon defining $\beta_{n,\delta t} = \mathrm{e}^{-(\gamma I + \Xi_n)\delta t}$, we can rewrite equations (\ref{eq:1})-(\ref{eq:6}) as 
\begin{subequations} \label{eq:ikfad_discrete_eq_restated}
\begin{align}
    p_{n+1}   &= \beta_{n,\delta t} p_n -\delta t \nabla f(x_n)\label{eq:ikfad_discrete_eq_restated_a}, \\
    x_{n+1}   &= x_n + \delta t \beta_{n,\delta t} p_n - \delta t^2 \nabla f(x_n), \label{eq:ikfad_discrete_eq_restated_b}  \\
    \xi_{n+1} &= \mathrm{e}^{-\alpha \delta t} \sqrt{[\xi_n]^2 + \frac{(I - \mathrm{e}^{-2 \Xi_n \delta t})[p_n]^2}{\rho}}. \label{eq:ikfad_discrete_eq_restated_c} 
\end{align}
\end{subequations}
Equations \eqref{eq:ikfad_discrete_eq_restated} correspond to the system of equations \eqref{eq:discr_ikfad}.
We now show that every equilibrium point of the discrete dynamics is also an equilibrium point of the continuous dynamics, thus the discretization does not introduce any ``artificial'' equilibrium points. Conversely, we also show that every equilibrium of the continuous dynamics is also an equilibrium of the discrete dynamics.
\begin{proposition}
    A state $s=(x,p,\xi)$ is an equilibrium point of the discrete iKFAD dynamics given by equations (\ref{eq:discr_ikfad_p})-(\ref{eq:discr_ikfad_xi}) if and only if it is an equilibrium of the continuous iKFAD dynamics \eqref{eq:ikfad} .
\end{proposition}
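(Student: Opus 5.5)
We prove the two implications separately. Throughout, a fixed point means $(x_{n+1},p_{n+1},\xi_{n+1})=(x_n,p_n,\xi_n)=(x,p,\xi)$ under \eqref{eq:discr_ikfad_p}--\eqref{eq:discr_ikfad_xi}, with $\xi\in\R_+^d$ and $\Dt>0$. By Proposition~\ref{prop:equilibria_ikfad}, the continuous equilibria are exactly the states with $p=0$, $\xi=0$ and $\nabla f(x)=0$.

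\emph{Continuous $\Rightarrow$ discrete.} Plug $p=0$, $\xi=0$, $\nabla f(x)=0$ into \eqref{eq:discr_ikfad}.
\begin{itemize}
\item Since $\Xi=0$, we get $\beta_{n,\Dt}=\rme^{-\gamma\Dt}I$, so \eqref{eq:discr_ikfad_p} gives $p_{n+1}=0$.
\item Equation \eqref{eq:discr_ikfad_x} gives $x_{n+1}=x$.
\item Equation \eqref{eq:discr_ikfad_xi} gives $\xi_{n+1}=\rme^{-\alpha\Dt}(0+0)^{1/2}=0$.
\end{itemize}
This direction is immediate.

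\emph{Discrete $\Rightarrow$ continuous.*} We use the equations in the order $x$, $p$, $\xi$.

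First, subtract \eqref{eq:discr_ikfad_p} multiplied by $\Dt$ from \eqref{eq:discr_ikfad_x}. This gives $x_{n+1}=x_n+\Dt\,p_{n+1}$, which is just step A. At a fixed point $x_{n+1}=x_n$, so $\Dt\,p=0$ and hence $p=0$.

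Next, put $p=0$ into \eqref{eq:discr_ikfad_p}. This yields $0=-\Dt\nabla f(x)$, so $\nabla f(x)=0$.

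Finally, put $p=0$ into \eqref{eq:discr_ikfad_xi}. Because $\xi\ge0$, we have $([\xi]^2)^{1/2}=\xi$ componentwise, so the fixed-point equation becomes $\xi=\rme^{-\alpha\Dt}\xi$. Since $\alpha\Dt>0$, this forces $\xi=0$.

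Therefore $(p,\nabla f(x),\xi)=0$, which by Proposition~\ref{prop:equilibria_ikfad} is a continuous equilibrium.

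The only place needing care is the $\xi$ equation. We must use that $\xi$ stays in $\R_+^d$ so that the square root simplifies to $\xi$ itself. Nonnegativity is preserved by \eqref{eq:discr_ikfad_xi}, which is the discrete analogue of the positivity remark in Section~\ref{app:ikfad_convergence_dynamics}. We also rely on $\alpha>0$ and $\Dt>0$, so that $\rme^{-\alpha\Dt}<1$. Everything else is direct substitution. The order of the argument matters: we must obtain $p=0$ before reading off $\nabla f(x)=0$.
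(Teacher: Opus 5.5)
Your proof is correct and follows essentially the same route as the paper. You recover the step-A identity $x_{n+1}=x_n+\Dt\,p_{n+1}$ to force $p=0$, then read off $\nabla f(x)=0$ from the momentum update and $\xi=\rme^{-\alpha\Dt}\xi\Rightarrow\xi=0$ from the friction update, and you get the converse by direct substitution; the differences are cosmetic (you derive the step-A identity from the combined equations instead of citing it, and you state the nonnegativity of $\xi$ behind $([\xi]^2)^{1/2}=\xi$, which the paper leaves implicit).
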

\begin{proof}
    Let $s^*$ be an equilibrium of the continuous dynamics as described in  Proposition \ref{prop:equilibria_ikfad} and let $s_n=(x_n,p_n,\xi_n)$ be an equilibrium point of the discrete dynamics (\ref{eq:discr_ikfad_p})-(\ref{eq:discr_ikfad_xi}). This implies that if we take a step of the discrete dynamics starting from $s_n$, we will remain at equilibrium, that is $s_{n+1} = s_n$. Using equation (\ref{eq:6}) and setting $x_{n+1} = x_n$ gives us $p_{n+1} = 0$ which implies $p_n = p_{n+1} = 0$. Substituting $p_n = 0$ into Equation \eqref{eq:ikfad_discrete_eq_restated_c} and using $\xi_{n+1}=\xi_n$ yields $\xi_n = \mathrm{e}^{-\alpha \delta t} \xi_n$, so that $\xi_n =0$ as $\alpha \delta t > 0$. Finally, for $p_{n+1} = p_n = 0$, equation (\ref{eq:discr_ikfad_p}) yields $\nabla f(x_n) = 0$. 

    We next show that any equilibrium point of the continuous dynamics is also an equilibrium of the discrete dynamics. Consider an equilibrium $s^* = (x^*, p^*, \xi^*) =(x^*, 0, 0) $ such that $\nabla f(x^*) = 0$. Let $s_n = s^*$ and let us take a step following the discrete dynamics as given by equations (\ref{eq:discr_ikfad_p})-(\ref{eq:discr_ikfad_xi}). It is straightforward to see that $z_n = (x^*, 0,0 )$ leads to $s_{n+1} = s_n = s^*$. 
\end{proof}
%
%
%
Next, we establish convergence to the minimizer for the discrete-time dynamics in Equations~\eqref{eq:ikfad_discrete_eq_restated}.
\ikfaddiscrtheorem*
\begin{proof}
This proof follows the same steps as the convergence analysis of the discretized FAD dynamics presented in \cite{fad_2023}, the only difference being the fact that $\xi$ in the iKFAD dynamics is a vector, rather than a scalar (as was the case for FAD), and that the matrix $A(x)$ involved in the FAD dynamics is now taken to be the identity $A(x)=I$. 
Without loss of generality (upon translating and adding a constant to $f(x)$) we can assume that $f(x^*) = 0$ and $x^*=0$. We consider the Lyapunov function 
    \begin{equation}\label{eq:Lyapunov_ikfad_discr}    
    \mathcal{W}(x,p,\xi) = f(x) + \frac{1}{2} \|p\|^2 + A_{\delta t} \rho \|\xi\|^2 + B_{\delta t} \|x\|^2 + C_{\delta t} x \cdot p,
\end{equation}
which is the Lyapunov function of Equation~\eqref{eq:Lyapunov_cWe}, with coefficients $A_{\delta t},B_{\delta t},C_{\delta t}$ to be determined. 
We can rewrite Equation~\eqref{eq:Lyapunov_ikfad_discr} as 
\begin{align*}
    \mathcal{W}(x,p,\xi)
    &= f(x) + A_{\delta t} \rho \|\xi\|^2 + 
    \begin{bmatrix}
        x & p
    \end{bmatrix}
    \begin{bmatrix}
        B_{\delta t} & C_{\delta t}/2\\
        C_{\delta t}/2 & 1/2
    \end{bmatrix}
    \begin{bmatrix}
         x\\
         p
    \end{bmatrix}.
\end{align*}
To ensure positivity of $\mathcal{W}(x,p,\xi)$, we need $B_{\delta t} > 0$ and $\det\left(\begin{bmatrix}
        B_{\delta t} & C_{\delta t}/2\\
        C_{\delta t}/2 & 1/2
    \end{bmatrix}\right) > 0,$
which translates into $B_{\delta t}  > C_{\delta t}^2/2$. We also want $A_{\delta t} >0$.
We start by expressing each of the terms in the Lyapunov function at step $n+1$ as a function of corresponding terms at step $n$ and deriving upper bounds where possible. 
Using the fact that $\nabla^2 f(x) \leq M$ we obtain the following upper bound for $f(x_{n+1})$:
\begin{align*}
    f(x_{n+1}) & \leq f(x_n) + \nabla f(x_n) \cdot (x_{n+1} - x_n) + \frac{M}{2} \|x_{n+1}-x_n\|^2 \\
    & \leq f(x_n) + \delta t \beta_{n, \delta t} p_n \cdot \nabla f(x_n) + C \delta t^2 (\|p_n\|^2 + \|\nabla f(x_n)\|^2). 
\end{align*}
Next,
\begin{align*}
    \|x_{n+1}\|^2 &= \|x_n + \delta t \beta_{n,\delta t} p_n- \delta t^2 \nabla f(x_n)\|^2 \\
    & \leq \|x_n\|^2  + 2 \delta t \beta_{n,\delta t}x_n \cdot p_n - 2 \delta t^2 x_n \cdot \nabla f(x_n) + C \delta t^2 (\|p_n\|^2 + \|\nabla f(x_n)\|^2).
\end{align*}
We also have
\begin{align*}
    \|p_{n+1}\|^2 &= \|\beta_{n,\delta t} p_n- \delta t \nabla f(x_n)\|^2 \\
    & = p_n \cdot \beta_{n,2\delta t} p_n - 2 \delta t \beta_{n,\delta t} p_n \cdot \nabla f(x_n) + \delta t^2  \|\nabla f(x_n)\|^2.
\end{align*}
Moreover,
\begin{align*}
    x_{n+1} \cdot p_{n+1} &=\beta_{n,\delta t} p_n \cdot x_n - \delta t x_n \cdot \nabla f(x_n) +\delta t \|\beta_{n,\delta t} p_n\|^2 \\
    &\quad- 2 \delta t^2 \beta_{n,\delta t} p_n \cdot \nabla f(x_n) + \delta t^3 \|\nabla f(x_n)\|^2 \\
    & \leq x_n \cdot p_n - \delta t x_n \cdot \nabla f(x_n) - (I- \beta_{n,\delta t}) x_n \cdot p_n \\
    & \quad +\delta t p_n \cdot \beta_{n,2\delta t} p_n + C \delta t^2 (\| p_n\|^2 + \|\nabla f(x_n)\|^2).
\end{align*}
%
%
%
Finally, we have 
\begin{align*}
    \rho \|\xi_{n+1}\|^2 & = \mathrm{e}^{-2 \alpha \delta t} [p_n^T (I-\mathrm{e}^{-2\Xi_n \delta t})p_n + \rho \|\xi_n\|^2]. 
\end{align*}
Given the above, we have the following upper bound for the Lyapunov function at step $n+1$:
%
%
\[
\begin{aligned}
  \mathcal{W}(x_{n+1},p_{n+1},\xi_{n+1}) & \leq \mathcal{W}(x_n,p_n,\xi_n) - A_{\delta t} \left(1-\rme^{-2\alpha\delta t}\right) \rho \|\xi_n\|^2 \\
  & \quad + \left[\left(2 B_{\delta t} \delta t\beta_{n,\delta t} -C_{\delta t}(\Id-\beta_{n,\delta t})\right) p_n\right] \cdot x_n - (C_{\delta t}+2B_{\delta t}\delta t)\delta t\, x_n \cdot \nabla f(x_n) \\
  & \quad - \left[\frac12 - A_{\delta t} \rme^{-2\alpha\delta t} \right]  \|p_n\|^2 + \left[\frac{\rme^{-2\gamma \delta t}}{2}-A_{\delta t} \rme^{-2\alpha \delta t}+ C_{\delta t} \delta t \rme^{-2\gamma \delta t}\right] p_n^T \rme^{-2\Xi_n \delta t}p_n \\
  & \quad  + C \delta t^2 \left(\|p_n\|^2 + \|\nabla f(x_n)\|^2 \right).
\end{aligned}
\]
%
%
%
We choose $A_{\delta t} = \frac{1}{2} \mathrm{e}^{2(\alpha - \gamma)\delta t}$, so that $\frac{\mathrm{e}^{-2\gamma\delta t}}{2}-A_{\delta t}\mathrm{e}^{-2\alpha\delta t}=0.$ Using $p_n^T\mathrm{e}^{-2\Xi_n\delta t}p_n\leq \|p_n\|^2$, we obtain

\[
\begin{aligned}
  \mathcal{W}(x_{n+1},p_{n+1},\xi_{n+1}) & \leq \mathcal{W}(x_n,p_n,\xi_n) - \frac{\mathrm{e}^{2(\alpha - \gamma)\delta t} }{2} \left(1-\rme^{-2\alpha\delta t}\right) \rho \|\xi_n\|^2 \\
  & \quad + \left[\left(2 B_{\delta t} \delta t\beta_{n,\delta t} - C_{\delta t}(\Id-\beta_{n,\delta t})\right) p_n\right] \cdot x_n - (C_{\delta t}+2B_{\delta t}\delta t)\delta t\, x_n \cdot \nabla f(x_n) \\
  & \quad - \left[\frac{1-\rme^{-2\gamma\delta t}}{2} - C_{\delta t} \delta t \rme^{-2\gamma \delta t}\right]\|p_n\|^2 + C \delta t^2 \left(\|p_n\|^2 + \|\nabla f(x_n)\|^2 \right). \\
\end{aligned}
\]
Setting $B_{\delta t}=C_{\delta t}=\varepsilon >0$, 
choosing $\varepsilon, \delta t$ to be sufficiently small and using that $x \cdot \nabla f(x) \geq 0$, we obtain 
\[
\begin{aligned}
  \mathcal{W}(x_{n+1},p_{n+1},\xi_{n+1}) & \leq \mathcal{W}(x_n,p_n,\xi_n) -\dfrac{\alpha \rho \delta t}{2} \|\xi_n\|^2 - \dfrac{\gamma \delta t}{2} \|p_n\|^2 -  \varepsilon \delta t\ x_n \cdot \nabla f(x_n) \\
  & \quad + \varepsilon \left(\left[2 \delta t\beta_{n,\delta t} -(\Id-\beta_{n,\delta t})\right] p_n\right) \cdot x_n  + C \delta t^2 \left(\|p_n\|^2 + \|\nabla f(x_n)\|^2 \right).
\end{aligned}
\]
The assumption $0 < m < \nabla^2 f(x) \leq M$ together with the equalities $f(0)=0$ and $\nabla f(0)=0$ implies through Taylor expansions with integral remainder that~$f(x) \geq m \|x\|^2 / 2$ and~$\|\nabla f(x)\| \leq M \|x\|$. In view of these two inequalities, there exists~$K \in \R_+$ such that, uniformly in~$\varepsilon \in (0,1/2]$,
\[
\|p\|^2 + \|\nabla f(x)\|^2 \leq K \cW(x,p,\xi).
\]
Since $B_{\delta t} > C_{\delta t}^2/2$ and $A_{\delta t} >0$, we have $A_{\delta t }\rho \|\xi_n\|^2 \leq \mathcal{W}(x_n,p_n,\xi_n)$ so that $\|\xi_n\| \leq \sqrt{\mathcal{W}(x_n,p_n,\xi_n)/(A_{\delta t }\rho )}$. Let us fix $R>0$ and define $\mathscr{W}_R=A_{\delta t }\rho R^2$. Then, whenever $\mathcal{W}(x_n,p_n,\xi_n) \leq \mathscr{W}_R$ holds, we can deduce $\|\xi_n\| \leq R$ and therefore  $$\|I - \beta_{n,\delta t}\| = \|I - \mathrm{e}^{-(\gamma I + \Xi_n)\delta t}\|\leq 1- \rme^{-(\gamma + R)\delta t}.$$ We now conclude the proof by induction. Assume $\mathcal{W}(x_n,p_n,\xi_n) \leq \mathscr{W}_R$. Using the above bound on $\|I - \beta_{n,\delta t}\| $, we obtain
\[
\begin{aligned}  
    \mathcal{W}(x_{n+1},p_{n+1},\xi_{n+1}) 
  & \leq (1 + C K \delta t^2)\mathcal{W}(x_n,p_n,\xi_n)  -\dfrac{\alpha \rho \delta t}{2}  \|\xi_n\|^2 - \dfrac{\gamma \delta t}{2} \|p_n\|^2 -  \varepsilon \delta t\ x_n \cdot \nabla f(x_n) \\
    & \quad + \varepsilon \left(2 \delta t +1- \rme^{-(\gamma+R)\delta t} \right) \|p_n\|  \|x_n\| \\
    & \leq (1 + C K \delta t^2)\mathcal{W}(x_n,p_n,\xi_n) -\dfrac{\alpha \rho \delta t}{2}  \|\xi_n\|^2 \\
    & \quad- \varepsilon \delta t \Bigg(x_n \cdot \nabla f(x_n) -\eta \frac{|2 \delta t +1- \rme^{-(\gamma+R) \delta t }|}{\delta t}  \|x_n\|^2\Bigg) \\
    & \quad - \delta t  \Bigg(\dfrac{\gamma}{2} - \frac{\varepsilon}{4 \eta} \frac{|2 \delta t +1- \rme^{-(\gamma+R) \delta t }|}{\delta t}  \Bigg) \|p_n\|^2. 
\end{aligned}
\]
%
%
%
Note that we can then use the strong convexity inequality to write $-x_n \cdot \nabla f(x_n) \leq -f(x_n) -\dfrac{m}{2}\|x_n\|^2 \leq -m \|x_n\|^2$, upon which, the third term in the above inequality can be bounded as 
$$
- \varepsilon \delta t \Bigg(x_n \cdot \nabla f(x_n) -\eta \frac{|2 \delta t +1- \rme^{-(\gamma+R) \delta t }|}{\delta t}  \|x_n\|^2\Bigg) \leq - \varepsilon \delta t \Bigg( m  -\eta \frac{|2 \delta t +1- \rme^{-(\gamma+R) \delta t }|}{\delta t}\Bigg)\|x_n\|^2.$$
The constant $\eta$ then needs to be chosen sufficiently small so that $ m  -\eta \dfrac{|2 \delta t +1- \rme^{-(\gamma+R) \delta t }|}{\delta t} >0$. Next, we must choose a sufficiently small $\varepsilon$ in order to ensure positivity of the term $\dfrac{\gamma}{2}- \dfrac{\varepsilon}{4 \eta} \dfrac{|2 \delta t +1- \rme^{-(\gamma+R) \delta t }|}{\delta t}>0$.  We can then group terms of order $\delta t$, upon which we get a term of the form $O(\delta t) \big(\|x_n\|^2 + \|p_n\|^2 + \|\xi_n\|^2\big).$
From the definition of the Lyapunov function and upon completing the squares we can obtain an inequality of the form $$\mathcal{W}(x_n,p_n,\xi_n) \leq C \big(\|x_n\|^2 + \|p_n\|^2 + \|\xi_n\|^2\big).$$  The above calculations imply that there exists $\kappa>0$ such that
\[
\begin{aligned}  
    \mathcal{W}(x_{n+1},p_{n+1},\xi_{n+1}) & \leq (1 - \kappa \delta t + C K \delta t^2)\mathcal{W}(x_n,p_n,\xi_n). 
\end{aligned}
\]
For a sufficiently small step-size $\delta t$, so that $ 0< 1 - \kappa \delta t + C K \delta t^2 < \rme^{-\kappa \delta t/2}$, the right hand side of the above inequality can be upper bounded by $\mathscr{W}_R$, which proves that the induction hypothesis also holds for $n+1$. This allows us to conclude to the claimed exponential convergence result. 
\end{proof}

\section{CD Convergence Analysis}
This section establishes the convergence properties of the proposed CD optimizer. We first prove that the continuous-time CD dynamics converge at an exponential rate to the global minimum. We then show that this exponential (geometric) rate is preserved in discrete time, provided the step size $\delta t$ is sufficiently small.
\label{app:cd_convergence}
\subsection{Dynamical Properties of CD}
\label{app:cd_convergence_dynamics}
In this section we study the dynamical properties of the system \eqref{eq:cd}. We propose a Lyapunov function to prove exponential convergence of the dynamics under the assumption that $f$ is strongly convex.
\\
%
A first result is that the dynamics is well posed on infinite time horizons under some conditions on~$f$.

\begin{lemma}
\label{lem:bounds_cubic_damp}
    Assume that $f$ is a smooth function, such that $f(x) \to \infty$ as $\|x\| \to +\infty$. Then, for any initial condition $(x_0, p_0) \in \R^d \times \R^d$, the solution of \eqref{eq:cd} is well defined for all times $t \geq 0$ and there exists $R \in \R_+$ such that $\|x(t)\| \leq R$ and $\|p(t)\| \leq R$, for any $t \geq 0$.
\end{lemma}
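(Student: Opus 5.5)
We follow the same strategy as in the proof of Lemma~\ref{lem:bounds} for iKFAD. First, the vector field $(x,p) \mapsto (p, -\nabla f(x) - \gamma p - c[p]^3)$ is smooth, hence locally Lipschitz. The Cauchy--Lipschitz theorem therefore gives a unique maximal solution on some interval $[0,T^*)$. It then suffices to show that $(x(t),p(t))$ stays in a bounded set on $[0,T^*)$. The blow-up alternative forces $T^* = +\infty$, and the same bound yields the constant $R$.

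To obtain this bound, we introduce the energy (Hamiltonian) Lyapunov function
\[
\cG(x,p) = f(x) - f(x^*) + \frac12 \|p\|^2,
\]
and set $\sG(t) = \cG(x(t),p(t))$. If no global minimizer $x^*$ is available, we replace $f(x^*)$ by $\inf f$, which is finite because $f$ is continuous and coercive. Differentiating along the flow, the terms $\nabla f(x)\cdot p$ cancel, which gives
\[
\dot{\sG}(t) = -\gamma \|p(t)\|^2 - c \sum_{i=1}^d p_i(t)^4 \leq 0,
\]
since $\gamma, c > 0$. The key point is that the cubic term $-c\,[p]^3 \cdot p = -c\sum_i p_i^4$ has a sign, because the power is odd. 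Hence $\sG(t) \leq \sG(0)$ for all $t \in [0,T^*)$.

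From this we get $\frac12\|p(t)\|^2 \leq \sG(0)$, because $f - f(x^*) \geq 0$, and therefore $\|p(t)\| \leq \sqrt{2\sG(0)}$. We also get $f(x(t)) \leq f(x^*) + \sG(0)$. The sublevel set $\{f \leq f(x^*) + \sG(0)\}$ is bounded by the coercivity assumption $f(x)\to+\infty$, so $x(t)$ stays in a ball of some radius $R_x$. Taking $R = \max(R_x, \sqrt{2\sG(0)})$ gives the claim.

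No step is a genuine obstacle. The only point needing care is to use coercivity correctly: we must translate the energy bound into boundedness of the sublevel set, and through it rule out finite-time blow-up.
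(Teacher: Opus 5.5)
Your proposal is correct and follows the paper's proof essentially verbatim: local existence via Cauchy--Lipschitz, then the energy $f(x)-f(x^*)+\frac12\|p\|^2$, whose time derivative $-\gamma\|p\|^2 - c\sum_i p_i^4$ is nonpositive, so the energy stays below its initial value. You also spell out the steps the paper leaves implicit, namely that coercivity makes sublevel sets bounded and that the blow-up alternative then gives global existence; your hedge about a missing minimizer is unnecessary, since a continuous coercive function on $\R^d$ always attains its minimum.
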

\begin{proof} 
    Existence and uniqueness of a local in time solution of \eqref{eq:cd} follow from the Cauchy-Lipschitz theorem. To prove that the solution is global in time we define the following Lyapunov function
    \[
    \mathcal{G}(x,p,\xi) = f(x)-f(x^*) + \frac12 \|p\|^2.
    \]
    A simple computation shows that the function $\mathscr{G}(t) = \mathcal{G}(x(t),p(t),\xi(t))$ satisfies
    \[
    \begin{aligned}
        \dot{\mathscr{G}}(t) & = \nabla f(x(t)) \cdot \dot{x}(t) + p(t) \cdot \dot{p}(t)\\ 
        & = -\gamma \|p(t)\|^2 - c p(t) \cdot [p(t)]^3 \leq 0,
    \end{aligned}
    \]
    since $\gamma, c \geq 0$. This means that~$\mathscr{G}(t) \leq \mathscr{G}(0)$, from which the result easily follows since~$f-f(x^*)$ and $\|p\|^2$ are non-negative.
\end{proof}

The next proposition, whose proof is immediate, characterizes the equilibria of the dynamics.

\begin{proposition}
    The equilibria of the system \eqref{eq:cd} coincide with the physical equilibria, i.e.
    $$(\dot{x}, \dot{p}) = 0 \Leftrightarrow (p^*, \nabla f(x^*)) = 0.$$
\end{proposition}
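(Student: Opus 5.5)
The claim is that $(\dot x,\dot p)=0$ if and only if $p=0$ and $\nabla f(x)=0$. I will prove it by direct substitution into \eqref{eq:cd}, handling the two directions separately.

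\textbf{Forward direction.} Suppose $(x,p)$ is a stationary point, so $\dot x = 0$ and $\dot p = 0$.

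1. Equation \eqref{eq:cd:a} gives $\dot x = p$, so $\dot x = 0$ forces $p = 0$.

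2. Substituting $p = 0$ into \eqref{eq:cd:b} kills both damping terms, since $\gamma p = 0$ and $c[p]^3 = 0$ (element-wise cubing maps $0$ to $0$).

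3. The condition $\dot p = 0$ then reduces to $-\nabla f(x) = 0$, i.e. $\nabla f(x) = 0$.

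So $(p,\nabla f(x)) = (0,0)$.

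\textbf{Converse direction.} Suppose $p = 0$ and $\nabla f(x) = 0$.

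1. From \eqref{eq:cd:a}, $\dot x = p = 0$.

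2. From \eqref{eq:cd:b}, $\dot p = -\nabla f(x) - \gamma p - c[p]^3 = 0 - 0 - 0 = 0$.

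Hence $(\dot x,\dot p) = 0$ and the point is an equilibrium.

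There is no real obstacle here. The only point worth noting is that the nonlinear damping term $-c[p]^3$ vanishes exactly when $p = 0$, so it introduces no spurious equilibria. Under the strong convexity assumptions used later (Theorems~\ref{thm:cd_continuous_theorem} and~\ref{thm:cd_discr_theorem}), $\nabla f(x) = 0$ holds only at $x^*$, so the unique equilibrium is $(x^*,0)$.
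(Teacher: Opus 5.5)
Your proof is correct and takes the same approach as the paper, which only calls the proof ``immediate'': $\dot x = p$ forces $p=0$, both damping terms then vanish so $\dot p = 0$ reduces to $\nabla f(x)=0$, and the converse is checked by direct substitution into \eqref{eq:cd}. Your closing remark is also accurate: under the theorems' convexity-type assumptions, $\nabla f(x)=0$ only at $x^*$, so the unique equilibrium is $(x^*,0)$.
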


\subsection{Exponential Convergence of the Continuous CD Dynamics}
\label{app:cd_convergence_exp}
%

%

\cdconttheorem*
\begin{proof}
Similarly to the steps followed for the proof of Theorem \ref{thm:ikfad_continuous_theorem}, we consider the Lyapunov function
\begin{equation}
\label{eq:Lyapunov_cWe_cd}    
\mathcal{W}_\varepsilon(x,p) = f(x)-f(x^*) + \frac12 \|p\|^2 + \varepsilon (x-x^*)\cdot p + \varepsilon \|x-x^*\|^2,
\end{equation}
for which the following  upper bound can be derived (for~$\varepsilon \in (0,1/2]$):
\begin{equation}
\label{eq:upper_bound_sWe}
\mathcal{W}_\varepsilon(x,p) \leq f(x)-f(x^*) + \frac34 \|p\|^2 + \frac{3\varepsilon}{2} \|x-x^*\|^2.
\end{equation}
%
We also have the corresponding lower bound
$$\mathcal{W}_\varepsilon(x,p) \geq f(x)-f(x^*)+\frac14\|p\|^2+\frac{\varepsilon}{2}\|x-x^*\|^2.$$%
Differentiating $\mathscr{W}_\varepsilon(t) = \mathcal{W}_\varepsilon(x(t),p(t))$ with respect to time, we obtain 
\[
\begin{aligned}
    \dot{\mathscr{W}_\varepsilon}(t) & = \nabla f(x(t)) \cdot \dot{x}(t) + p(t) \cdot \dot{p}(t)  + \varepsilon \dot{x}(t) \cdot p(t) \\
    & \qquad + \varepsilon (x(t)-x^*) \cdot \dot{p}(t)  + 2\varepsilon  (x(t)-x^*) \cdot \dot{x}(t) \\ 
    & = -(\gamma-\varepsilon) \|p(t)\|^2  - \varepsilon (x(t)-x^*)\cdot \nabla f(x(t)) \\
    & \qquad + \varepsilon \left(2-\gamma\right) (x(t)-x^*) \cdot p(t) - c \varepsilon (x(t)-x^*) \cdot [p(t)]^3 - c p(t) \cdot [p(t)]^3 \\
    &\leq  -(\gamma-\varepsilon) \|p(t)\|^2  - \varepsilon (x(t)-x^*)\cdot \nabla f(x(t)) \\
    & \qquad + \varepsilon \left|\gamma-2\right| \left(\Delta \|x(t)-x^*\|^2 + \frac{1}{4 \Delta} \|p(t)\|^2 \right) + c \varepsilon \left(\eta \|x(t)-x^*\|^2 + \frac{R^4}{4 \eta} \|p(t)\|^2 \right),
\end{aligned}
\]
where we used a Cauchy--Schwarz inequality to bound the last two terms, as well as Lemma~\ref{lem:bounds_cubic_damp} to bound the term $\|[p]^3\|^2$ as follows: $\|[p]^3\|^2 = \sum(p_i^3)^2 = \sum(p_i^2)^3 \leq (\sum p_i^2)^3 = (\|p\|^2)^3 \leq \|p\|^2 R^4$. We have also used the fact that $p \cdot p^3 \geq 0$.
\\

In view of Condition~\eqref{eq:condition_f_exp_cv}, we can write
%
%
\[
\begin{aligned}
    \dot{\mathscr{W}_\varepsilon}(t) & \leq -a \varepsilon (f(x(t))-f(x^*)) -\Bigg(\gamma-\varepsilon \bigg(1 + \frac{(\gamma-2)^2}{b} + c^2 \frac{R^4}{b} \bigg) \Bigg) \|p(t)\|^2 - \frac{b\varepsilon}{2} \|x(t) - x^*\|^2, 
\end{aligned}
\]
where we set $\Delta = \frac{b}{4|\gamma-2|}$ and $\eta = \frac{b}{4c}$. This bound for $\dot{\mathscr{W}_\varepsilon}(t)$, combined with the upper bound in equation (\ref{eq:upper_bound_sWe}), yields 
\[
\begin{aligned}
    \dot{\mathscr{W}_\varepsilon}(t) & \leq - \min \left\{ a \varepsilon, \frac{b}{3}, \frac{4}{3}\Bigg(\gamma-\varepsilon  \bigg(1 + \frac{(\gamma-2)^2}{b} + c^2 \frac{R^4}{b} \bigg)\Bigg) \right\} \mathscr{W}_\varepsilon. 
\end{aligned}
\]
For a sufficiently small $\varepsilon$ we can use Gronwall's inequality to conclude to an exponential convergence to $0$ of $\mathscr{W}_\varepsilon(t)$ and hence convergence of $f(x(t))-f(x^*) + \|p(t)\|^2$ thanks to the lower bound on the Lyaqpunov function.
\end{proof}

\subsection{Discrete Convergence Analysis}
\label{app:cd_convergence_disc}
%
We consider the following Euler discretization of the continuous CD equations~\eqref{eq:cd}, which is simpler to analyze than a splitting scheme:
\begin{align*} 
    p_{n+1} &= (1-\gamma \delta t) p_n - c \delta t [p_n]^3 - \delta t \nabla f(x_n), \\
    x_{n+1} &= x_n + \delta t p_{n}, 
\end{align*}
where we assume that $\gamma \delta t$ is sufficiently small.
The above discretization scheme coincides with Equations \eqref{eq:CD_discr_Euler_p}-\eqref{eq:CD_discr_Euler_x} and is restated here for completeness.
%
%
We next consider a perturbation of the Lyapunov function \eqref{eq:Lyapunov_cWe_cd}, namely
\begin{equation} \label{eq:discr_Lyapunov_CD}
    \mathcal{W}_{\delta t}(x_n, p_n) = f(x_n) + A_{\delta t} \frac{\|p_n\|^2}{2} + B_{\delta t}\|x_{n}\|^2 + C_{\delta t} x_n \cdot p_n,   
\end{equation}
where $A_{\delta t}>0$ and $B_{\delta t} > \dfrac{C_{\delta t}^2}{2 A_{\delta t}}.$ 

%
%
%
%
%
%
%
%
\cddiscrtheorem*
\begin{proof}
We begin the proof by upper bounding/expanding each of the terms in the Lyapunov function at step $n+1$. Since $0 \leq \nabla^2 f(x) \leq M$, we have
\begin{align*}
    f(x_{n+1}) \leq & f(x_n) + \delta t \nabla f(x_n)\cdot p_{n} + \frac{\delta t^2 M}{2}  \|p_{n}\|^2.
\end{align*}

We assume that $\gamma \delta t \leq 1$. We can bound terms of the form $\|[p]^3\|$ similarly to the continuous case, provided there exists $R>0$ such that $\|p\|<R$. Such a bound on $\|p\|$ can be  derived by induction. Since $B_{\delta t} > C_{\delta t}^2/(2A_{\delta t})$ and $A_{\delta t} >0$, the quadratic form in $(x,p)$ is positive definite and, together with $f(x)\geq \frac{m}{2} \|x\|^2$, this implies that there exists $c_0>0$ such that $c_0  \|p_n\|^2 \leq c_0 (\|x_n\|^2 + \|p_n\|^2) \leq \mathcal{W}(x_n,p_n)$. Hence, $\|p_n\| \leq \sqrt{\mathcal{W}(x_n,p_n)/c_0}$. Fix $R>0$ and set $\mathscr{W}_R=c_0 R^2$. Then, whenever $\mathcal{W}(x_n,p_n) \leq \mathscr{W}_R$ holds, we can deduce $\|p_n\| \leq R$. Fix $L>0$ such that $\mathcal{W}(x_0,p_0)\leq \mathscr{W}_R$ and proceed by induction assuming that $\mathcal{W}(x_n,p_n) \leq \mathscr{W}_R$. First,

\begin{align*}
   \|p_{n+1}\|^2 =& \|(1-\gamma \delta t) p_n - c \delta t [p_n]^3 - \delta t \nabla f(x_n)\|^2 \\
   =& \Big \|(1-\gamma \delta t) p_n - c \delta t [p_n]^3 \Big \|^2 - 2 \delta t\Big[(1-\gamma \delta t) p_n - c \delta t [p_n]^3 \Big] \cdot \nabla f(x_n) + \| \nabla f(x_n)\|^2 \\
   =&  \|(1-\gamma \delta t) p_n \|^2 - 2 c (1-\gamma \delta t) \delta t p_n \cdot [p_n]^3 +\|c \delta t [p_n]^3 \|^2 \\
   &- 2 \Big[(1-\gamma \delta t) p_n - c \delta t [p_n]^3 \Big] \delta t \nabla f(x_n) +\|\delta t \nabla f(x_n)\|^2 \\
   \leq&  (1-\gamma \delta t)^2\| p_n \|^2 + c^2 \delta t^2 R^4 \|p_n \|^2 \\
   & - 2 \delta t (1-\gamma \delta t) p_n \cdot \nabla f(x_n)+ 2c \delta t^2 [p_n]^3 \cdot \nabla f(x_n)  + \delta t^2 \|\nabla f(x_n)\|^2 \\
   \leq&  \| p_n \|^2-2\gamma \delta t \| p_n \|^2+ \gamma^2 \delta t^2 \| p_n \|^2 + c^2 \delta t^2 R^4 \|p_n \|^2 \\
   & - 2 \delta t (1-\gamma \delta t) p_n \cdot \nabla f(x_n)+ 2c \delta t^2 \Big(\frac{1}{4\theta}\|[p_n]^3\|^2 + \theta \|\nabla f(x_n)\|^2\Big)  + \delta t^2 \|\nabla f(x_n)\|^2 \\
   \leq&  \| p_n \|^2-2\gamma \delta t \| p_n \|^2+ \gamma^2 \delta t^2 \| p_n \|^2 + c^2 \delta t^2 R^4 \|p_n \|^2 \\
   & - 2 \delta t (1-\gamma \delta t) p_n \cdot \nabla f(x_n)+ 2c \delta t^2 \Big(\frac{R^4}{4\theta}\|p_n\|^2 + \theta \|\nabla f(x_n)\|^2\Big)  + \delta t^2 \|\nabla f(x_n)\|^2 \\
   =&  \| p_n \|^2 - 2\gamma \delta t \| p_n \|^2+ \gamma^2 \delta t^2 \| p_n \|^2 + c^2 \delta t^2 R^4 \|p_n \|^2 \\
   &- 2 \delta t p_n \cdot \nabla f(x_n) +  2 \gamma \delta t^2 p_n \cdot \nabla f(x_n) +  \frac{c R^4}{2\theta}\delta t^2\|p_n\|^2 + 2c\delta t^2\theta \|\nabla f(x_n)\|^2   + \delta t^2 \|\nabla f(x_n)\|^2 \\
   \leq&  \| p_n \|^2 - 2\gamma \delta t \| p_n \|^2 + \gamma^2 \delta t^2 \| p_n \|^2 + c^2 R^4 \delta t^2 \|p_n \|^2 - 2 \delta t p_n \cdot \nabla f(x_n)\\
   &+  2 \gamma \delta t^2 \Bigg(\frac{1}{4 \xi}\|p_n\|^2 + \xi \|\nabla f(x_n)\|^2\Bigg) +  \frac{c R^4}{2\theta}\delta t^2\|p_n\|^2 + 2 c \theta \delta t^2 \|\nabla f(x_n)\|^2   + \delta t^2 \|\nabla f(x_n)\|^2.
   %
\end{align*}
Therefore
\begin{align*}
     \|p_{n+1}\|^2 &\leq \| p_n \|^2 - 2 \gamma \delta t \| p_n \|^2 - 2\delta t p_n \cdot \nabla f(x_n)+ Q\delta t^2 \Big(\|p_n\|^2+ \|\nabla f(x_n)\|^2\Big). 
\end{align*}
%
%
We also have
%
\begin{align*}
    \|x_{n+1}\|^2  =&  \|x_n\|^2 + 2 \delta t x_n \cdot p_n +  \delta t^2 \| p_n \|^2.
\end{align*}
Finally, 
\begin{align*}
    x_{n+1} \cdot p_{n+1} =& \Big(x_n + \delta t p_{n}\Big) \cdot \Big((1-\gamma \delta t) p_n - c \delta t [p_n]^3 - \delta t \nabla f(x_n)\Big)\\
    =& (1-\gamma \delta t) x_n \cdot p_n - c \delta t x_n \cdot [p_n]^3 - \delta t x_n \cdot \nabla f(x_n)  \\
    & \quad + (1-\gamma \delta t) \delta t p_{n} \cdot p_n - c \delta t^2 p_{n} \cdot [p_n]^3 - \delta t^2 p_{n} \cdot \nabla f(x_n) \\
    %
    %
    %
    %
    \leq & x_n \cdot p_n-\gamma \delta t x_n \cdot p_n  + c \delta t |x_n \cdot [p_n]^3| - \delta t x_n \cdot \nabla f(x_n)  \\
    & \quad + \delta t \|p_{n}\|^2-\gamma \delta t^2 \|p_{n}\|^2  + \delta t^2 \Big(\frac{1}{4\kappa}\|p_n\|^2 + \kappa \|\nabla f(x_n)\|^2\Big) \\ 
    \leq & x_n \cdot p_n-\gamma \delta t x_n \cdot p_n + c \delta t |x_n \cdot [p_n]^3|  - \delta t x_n \cdot \nabla f(x_n)  \\
    &   + \delta t \|p_{n}\|^2  + \delta t^2 \Big(\frac{1}{4\kappa}\|p_n\|^2 + \kappa \|\nabla f(x_n)\|^2\Big).
\end{align*}

We thus have 
\begin{align*}
    \mathcal{W}(x_{n+1}, p_{n+1}) \leq & f(x_n) + \delta t \nabla f(x_n)\cdot p_{n} \\
     &+ A_{\delta t} \frac{\| p_n \|^2}{2}  -A_{\delta t} \gamma \delta t \| p_n \|^2 -A_{\delta t} \delta t \nabla f(x_n)  \cdot p_n\\
     &+  B_{\delta t}  \|x_n\|^2 + 2 B_{\delta t} \delta t\|x_n\| \|p_n\| \\ 
     &+  C_{\delta t} x_n \cdot p_n +C_{\delta t} (\gamma +c R^2) \delta t \|x_n\|\|p_n\|  -C_{\delta t} \delta t x_n \cdot \nabla f(x_n) + C_{\delta t} \delta t \|p_{n}\|^2 \\
    &+  Q \delta t^2\Big(\|p_n\|^2+\|\nabla f(x_n)\|^2\Big) \\
    &= \mathcal{W}(x_n,p_n) + (1-A_{\delta t}) \delta t \nabla f(x_n)\cdot p_{n}  \\
     &-A_{\delta t} \gamma \delta t \| p_n \|^2 +  2 B_{\delta t} \delta t \|x_n\| \|p_n\|  \\
     &+C_{\delta t} (\gamma+c R^2) \delta t \|x_n\|\|p_n\|  -C_{\delta t} \delta t x_n \cdot \nabla f(x_n) + C_{\delta t} \delta t \|p_{n}\|^2 \\
    &+Q \delta t^2\Big(\|p_n\|^2+\|\nabla f(x_n)\|^2\Big)  \\
    &\leq \mathcal{W}(x_n,p_n) -C_{\delta t} \delta t x_n \cdot \nabla f(x_n) + (1-A_{\delta t}) \delta t \nabla f(x_n)\cdot p_{n}  \\
    &-(A_{\delta t} \gamma-C_{\delta t}) \delta t \| p_n \|^2 + \Big[2 B_{\delta t}+C_{\delta t} (\gamma+c R^2)\Big] \delta t \|x_n\| \|p_n\|  \\
     &+Q \delta t^2\Big(\|p_n\|^2+\|\nabla f(x_n)\|^2\Big).
\end{align*}
In order to eliminate the term $\nabla f(x_n) \cdot p_n$, we set $A_{\delta t}=1$, and we also demand $C_{\delta t} < \gamma$, upon which we have 
\begin{align*}
    \mathcal{W}(x_{n+1},p_{n+1}) \leq & \mathcal{W}(x_n,p_n)  -C_{\delta t} \delta t x_n \cdot \nabla f(x_n)-(\gamma-C_{\delta t}) \delta t \| p_n \|^2  \\
    +& \Big[2 B_{\delta t}+C_{\delta t} (\gamma+c R^2)\Big] \delta t \|x_n\|\|p_n\| +  Q \delta t^2\Big(\|p_n\|^2+\|\nabla f(x_n)\|^2\Big).
\end{align*}
Assuming that $x^* = 0,  f(x^*)=0$ (where $\nabla f(x^*)=0$)  and using the strong convexity  and Lipschitz continuity assumption  we have 
$f(x) \geq m\frac{\|x\|^2}{2}$ and $\|\nabla f(x)\|^2 \leq M^2 \|x\|^2$. These inequalities imply that there exists $K \in \R_+$ such that 
$$\|p\|^2 + \|\nabla f(x)\|^2 \leq K \mathcal{W}(x,p)$$
for $B_{\delta t} > C_{\delta t}^2/2$. This allows us to upper bound $\mathcal{W}(x_{n+1},p_{n+1})$ as follows;
\begin{align*}
    \mathcal{W}(x_{n+1},p_{n+1}) \leq & (1+Q K \delta t^2)\mathcal{W}(x_n,p_n)  -C_{\delta t} \delta t x_n \cdot \nabla f(x_n) \\
    -&(\gamma-C_{\delta t}) \delta t \| p_n \|^2  +\Big[2 B_{\delta t}+C_{\delta t} (\gamma+c R^2)\Big] \delta t \|x_n\|\|p_n\|.  
\end{align*}
Finally, using the strong convexity assumption and a Cauchy-Schwarz inequality we obtain
\begin{align*} 
    \mathcal{W}(x_{n+1},p_{n+1}) \leq & (1+Q K \delta t^2)\mathcal{W}(x_n,p_n) -a C_{\delta t} \delta t f(x_n) \\
    &- \Bigg(C_{\delta t} b  - \frac{\lambda^2}{2} \Big(2 B_{\delta t}+C_{\delta t} (\gamma+c R^2)\Big)  \Bigg)\delta t \|x_n\|^2 \\
    &- \Bigg( (\gamma-C_{\delta t}) - \Big(2 B_{\delta t}+C_{\delta t} (\gamma+c R^2)\Big)\frac{1}{2 \lambda^2} \Bigg)\delta t \| p_n \|^2.
\end{align*}
Setting $B_{\delta t} = C_{\delta t} = \varepsilon$ and $\lambda^2 = \sqrt{\varepsilon}$, we obtain

\begin{align*} 
    \mathcal{W}(x_{n+1},p_{n+1}) \leq & (1+Q K \delta t^2)\mathcal{W}(x_n,p_n) -a \varepsilon \delta t f(x_n) \\
    &- \Bigg(\varepsilon b  - \frac{\sqrt{\varepsilon}}{2} \Big(2 \varepsilon+\varepsilon (\gamma+c R^2)\Big)  \Bigg)\delta t \|x_n\|^2 \\
    &- \Bigg( (\gamma-\varepsilon) - \Big(2 \varepsilon+\varepsilon (\gamma+c R^2)\Big)\dfrac{1}{2 \sqrt{\varepsilon}} \Bigg)\delta t \| p_n \|^2  \\
    =&  (1+Q K \delta t^2)\mathcal{W}(x_n,p_n) -a \varepsilon \delta t f(x_n) \\
    & - \left(\varepsilon b  - \varepsilon^{3/2}\left( 1+ \dfrac{1}{2}\gamma + \dfrac{1}{2}c R^2\right)  \right)\delta t \|x_n\|^2 \\
    &- \left(\gamma-\varepsilon-\sqrt{\varepsilon}\left(1+\dfrac{\gamma}{2}+\dfrac{cR^2}{2}\right)\right)\delta t\|p_n\|^2 \\
    & \leq  (1 - \nu \delta t +Q K \delta t^2) \mathcal{W}(x_n,p_n).
\end{align*}
Note we need to choose a sufficiently small $\varepsilon>0$ in order for 
the coefficient of the $\|x_n\|^2$ and $\|p_n\|^2$ terms to remain negative.
The above implies that there exists $\nu > 0$ such that 
\begin{equation*}
       \mathcal{W}(x_{n+1},p_{n+1}) \leq  (1 - \nu \delta t +Q K \delta t^2) \mathcal{W}(x_n,p_n).
\end{equation*}
Upon choosing a sufficiently small step size $\delta t$, so that $1 - \nu \delta t +Q K \delta t^2 \leq 1$ we obtain $\mathcal{W}(x_{n+1},p_{n+1}) \leq \mathcal{W}(x_{n},p_{n}) \leq \mathscr{W}_R$. This closes the induction argument. The same result also allows us to conclude to the exponential convergence of the discrete dynamics. 
\end{proof}
%
%
\subsection{Convergence of Discretized CD Dynamics with Stochastic Gradients}
\label{app:cd_convergence_sg}

We now show convergence of the discrete CD dynamics in the stochastic gradient setting. We include a stochastic-gradient version only for CD, since the noise shows up only in the gradient term and the proof is very similar to the deterministic case. We do not present a similar proof for iKFAD as the extra variable $\xi$ depends on the (noisy) momentum and feeds back into the momentum update, which in turn adds extra cross-terms to control. We leave the stochastic iKFAD proof for future work.

We use an Euler discretization similarly to \eqref{eq:CD_discr_Euler_p} - \eqref{eq:CD_discr_Euler_x}, where the update of the momenta is performed based on the stochastic gradient.
\begin{align*} 
    p_{n+1} &= (1-\gamma \delta t) p_n - c \delta t [p_n]^3 - \delta t \hat{G}_{\omega}(x_n), \\
    x_{n+1} &= x_n + \delta t p_{n},
\end{align*}
where we assume that $\gamma \delta t$ is sufficiently small. We denote by $\hat{G}_{\omega}(x)$ the stochastic gradient, where $\mathbb{E}[\hat{G}_{\omega}(x)] = \nabla f(x)$.
\begin{assumption}
The variance of the stochastic gradients is uniformly bounded, that is there exists $D > 0 $ such that, for all $ x \in \R^d, \mathbb{E}\|\hat{G}_{\omega}(x) - \nabla f(x)\|^2 \leq D$.
\end{assumption}

\begin{proposition}
    Assume that $f \in C^2$ and there exist $ m,M \in \R_+ $ such that $ m \leq \nabla^2 f(x) \leq M $ for any $ x \in \R^d$. Fix $L >0$. Then, for any initial condition $(x_0,p_0)$ such that $$\|x_0\| + \|p_0\| \leq L, $$ there exists $\delta t^* >0, r > 0$ and $C >0$ for which  
    $$ \forall n\geq 0 \text{ and } \forall \delta t \in (0,\delta t^*), \quad \mathbb{E}[f(x_n) - f(x^*)] + \mathbb{E}\|p_n\|^2 \leq C \mathrm{e}^{-\kappa n \delta t}.$$
\end{proposition}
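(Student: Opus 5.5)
We would reuse the Lyapunov function $\mathcal{W}_{\delta t}$ of \eqref{eq:discr_Lyapunov_CD} with $A_{\delta t}=1$ and $B_{\delta t}=C_{\delta t}=\varepsilon$. The plan is to redo the one-step estimate of Theorem~\ref{thm:cd_discr_theorem} conditionally on $\mathcal{F}_n=\sigma(x_0,p_0,\omega_0,\dots,\omega_{n-1})$. We normalize $x^*=0$ and $f(x^*)=0$, and write $\hat G_\omega(x_n)=\nabla f(x_n)+\zeta_n$ with $\mathbb{E}[\zeta_n\mid\mathcal{F}_n]=0$ and $\mathbb{E}[\|\zeta_n\|^2\mid\mathcal{F}_n]\le D$. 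Since $x_{n+1}=x_n+\delta t\,p_n$ does not involve the noise, the terms $f(x_{n+1})$ and $\|x_{n+1}\|^2$ are bounded exactly as in the deterministic proof. The noise enters only through $p_{n+1}$. In $\|p_{n+1}\|^2$ and in $x_{n+1}\cdot p_{n+1}$ the cross terms are linear in $\zeta_n$, multiplied by $\mathcal{F}_n$-measurable vectors, so they vanish under $\mathbb{E}[\cdot\mid\mathcal{F}_n]$. The only surviving contribution is $\delta t^2\,\mathbb{E}\|\zeta_n\|^2\le D\,\delta t^2$, and the $x_{n+1}\cdot p_{n+1}$ term adds nothing further in expectation. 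We therefore aim for
\[
\mathbb{E}\big[\mathcal{W}(x_{n+1},p_{n+1})\mid\mathcal{F}_n\big]\le (1-\nu\delta t+QK\delta t^2)\,\mathcal{W}(x_n,p_n)+D\,\delta t^2 .
\]
Here $\nu$, $Q$ and $K$ are the constants of Theorem~\ref{thm:cd_discr_theorem}, obtained with the same choices of $\varepsilon$ and $\lambda$. Taking full expectations and iterating the resulting linear recursion gives
\[
\mathbb{E}\,\mathcal{W}_n\le \rme^{-\nu n\delta t/2}\,\mathcal{W}_0+\frac{2D\,\delta t}{\nu}
\]
for $\delta t$ small. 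The lower bound $\mathcal{W}\ge f+c_0\|p\|^2$ then transfers this to $\mathbb{E}[f(x_n)]+\mathbb{E}\|p_n\|^2$.

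\textbf{Main obstacle: the a priori bound on $p_n$.} The deterministic proof used the induction hypothesis $\mathcal{W}_n\le\mathscr{W}_R$ to get $\|p_n\|\le R$. That bound is what controls $\|[p_n]^3\|\le R^2\|p_n\|$, the cross term $c\,\delta t\,x_n\cdot[p_n]^3$, and $c^2\delta t^2\|[p_n]^3\|^2$. With unbounded noise this fails pathwise: a single large $\zeta_n$ can push $\|p_{n+1}\|$ beyond $R$. Once $c\,\delta t\,p_i^2>2$, the explicit Euler cubic step itself amplifies $p$, so no pathwise induction is available. We would try three fixes, in order.
\begin{enumerate}
\item Strengthen the assumption to almost surely bounded noise, $\|\zeta_n\|\le\sqrt{D}$. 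With $\delta t\le\delta t^*(R,D)$ the deterministic induction then goes through pathwise: a one-step increase of $\mathcal{W}$ by $O(\delta t\sqrt{D}R)$ is dominated by the dissipation whenever $\mathcal{W}_n$ is near $\mathscr{W}_R$, provided $\mathscr{W}_R$ is chosen larger than the noise floor.
\item Alternatively, keep the second-moment assumption and work with the stopping time $\tau=\inf\{n:\mathcal{W}_n>\mathscr{W}_R\}$ and the stopped process $\mathcal{W}_{n\wedge\tau}$. We would show the recursion above holds for it, and bound $\mathbb{P}(\tau<\infty)$ by Doob's maximal inequality for the supermartingale-like quantity. This yields a high-probability statement.
\item Replace the Euler cubic step by the exact C' update $p/\sqrt{1+2c\delta t[p]^2}$, which is a contraction in $p$ and removes the amplification regime altogether.
\end{enumerate}

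\textbf{Caveat on the conclusion.} With a merely bounded variance $D>0$, the argument produces exponential decay only down to an $O(D\,\delta t)$ floor. A pure bound $C\rme^{-\kappa n\delta t}$ cannot hold in general: already for $c=0$ and a quadratic $f$, $\mathbb{E}\|p_n\|^2$ stays bounded away from zero. We would therefore either state the result as $C\rme^{-\kappa n\delta t}+C'D\,\delta t$, or obtain the clean form under the additional hypothesis that the variance vanishes at the minimizer, $\mathbb{E}\|\zeta_n\|^2\le D\|x_n\|^2$. In that case $D\delta t^2\|x_n\|^2\le D K\delta t^2\,\mathcal{W}_n$ is absorbed into the $QK\delta t^2$ term, and the deterministic conclusion carries over verbatim in expectation.
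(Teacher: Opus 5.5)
Your skeleton matches the paper's own sketch. It uses the same $\mathcal{W}$, takes conditional expectations term by term, notes that $x_{n+1}=x_n+\delta t\,p_n$ is noise-free so the noise enters only through $p_{n+1}$, and then follows ``the same steps'' as in Theorem~\ref{thm:cd_discr_theorem}. Both of your objections are correct, and they are exactly where that sketch is incomplete:
\begin{itemize}
\item The paper's bounds carry a $C\delta t^2 D$ term, so its steps only give $\mathbb{E}\,\mathcal{W}_{n+1}\le(1-\nu\delta t+QK\delta t^2)\,\mathbb{E}\,\mathcal{W}_n+CD\,\delta t^2$. That is decay to an $O(D\,\delta t)$ floor, not to zero.
\item The sketch keeps using $R$, e.g.\ in $(\gamma+cR^2)\,\delta t\,\|x_n\|\|p_n\|$, without any almost-sure bound on $p_n$.
\end{itemize}
Your counterexample does not need $c=0$. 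Write $p_{n+1}=q_n-\delta t\,\zeta_n$ with $q_n=(1-\gamma\delta t)p_n-c\,\delta t\,[p_n]^3-\delta t\nabla f(x_n)$, which is $\mathcal{F}_n$-measurable. Then $\mathbb{E}[\|p_{n+1}\|^2\mid\mathcal{F}_n]=\|q_n\|^2+\delta t^2\,\mathbb{E}[\|\zeta_n\|^2\mid\mathcal{F}_n]$. So noise with variance bounded below by $D_0>0$ gives $\mathbb{E}\|p_n\|^2\ge D_0\,\delta t^2$ for all $n\ge 1$ and every $c\ge 0$. The statement as written is false, and aiming at a corrected statement is the right move.

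The remaining gaps are in your repair.

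\textbf{Finite variance is not enough.} Under the bounded-variance assumption alone, even your floor version fails, already at $n=2$. Here $x_1$ is deterministic and $p_1=a_0-\delta t\,\zeta_0$ with $a_0$ deterministic. Hence $\mathbb{E}[\|p_2\|^2\mid\mathcal{F}_1]\ge\|(1-\gamma\delta t)p_1-c\,\delta t\,[p_1]^3-\delta t\nabla f(x_1)\|^2$. For $c>0$, its expectation is infinite as soon as $\zeta_0$ has finite variance but infinite sixth moment. So some change of hypothesis or of scheme, such as your fix 1 or fix 3, is necessary, not a convenience.

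\textbf{The multiplicative-variance variant.} The same example, with the noise at $x_0\neq x^*$ scaled by $\|x_0-x^*\|$, shows that this variant does not carry over ``verbatim in expectation''. It removes the floor. But the deterministic induction $\mathcal{W}_n\le\mathscr{W}_R\Rightarrow\|p_n\|\le R$ is pathwise, and it cannot be propagated through a conditional-expectation inequality.

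\textbf{Fix 2 only works on finite horizons.} Suppose the noise law has unbounded support uniformly in $x$, e.g.\ Gaussian with covariance bounded below. Then the one-step probability of leaving $\{\mathcal{W}\le\mathscr{W}_R\}$ is bounded below, so $\tau<\infty$ almost surely. Doob's inequality can then only control $\mathbb{P}(\tau\le N)$ for finite $N$.

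\textbf{Fix 1 is circular with the paper's constants.}
\begin{itemize}
\item Bounding $c\varepsilon\,x_n\cdot[p_n]^3$ by $c\varepsilon R^2\|x_n\|\|p_n\|$ forces $\varepsilon$ of order $\gamma b/(c^2R^4)$. Since $\mathscr{W}_R$ is of order $R^2$, this gives $\varepsilon\,\mathscr{W}_R=O(R^{-2})$.
\item Pathwise, $-\delta t\,q_n\cdot\zeta_n$ is not dominated by the momentum dissipation when $\|p_n\|$ is of order $\sqrt{D}/\gamma$. This leaves a net increase of order $\delta t\,D/\gamma$.
\item That increase must be beaten by the position dissipation, which is only of order $\varepsilon\,\delta t\,\mathscr{W}_R$ on the level set $\{\mathcal{W}=\mathscr{W}_R\}$.
\end{itemize}
So invariance of $\{\mathcal{W}\le\mathscr{W}_R\}$ requires $\varepsilon\,\mathscr{W}_R$ to exceed a multiple of $D/\gamma$, yet the left side decreases as $R$ grows. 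The argument therefore closes only for $D$ small relative to constants depending on $L$. For general bounded noise you need a sharper estimate of the cubic cross term. One option is to absorb it into the quartic dissipation $-c(1-\gamma\delta t)\,\delta t\sum_i p_{n,i}^4$ coming from $\frac12\|p_{n+1}\|^2$, which the paper discards.
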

We sketch the proof of this result. We again consider the Lyapunov function \eqref{eq:discr_Lyapunov_CD}
\begin{equation*}
    \mathcal{W}(x_n, p_n) = f(x_n) + A_{\delta t} \frac{\|p_n\|^2}{2} + B_{\delta t}\|x_{n}\|^2 + C_{\delta t} x_n \cdot p_n.  
\end{equation*}
We start by upper bounding the expectation of each of the terms in the discrete Lyapunov function. Let us emphasize that the expectations below are conditional on $x_n, p_n$. We follow the same procedure as in section \ref{app:cd_convergence_disc}.
As we have assumed that $0 < m < \nabla^2 f(x) \leq M$, we have
\begin{align*}
    f(x_{n+1}) \leq & f(x_n) + \delta t \nabla f(x_n)\cdot p_{n} + \frac{\delta t^2 M}{2}  \|p_{n}\|^2,
\end{align*}
so that
\begin{align*}
    \mathbb{E}[f(x_{n+1})] \leq & f(x_n) + \delta t \nabla f(x_n)\cdot p_{n} + \frac{\delta t^2 M}{2}  \|p_{n}\|^2.
\end{align*}
Next, we can bound the expectation of the squared norm of the momenta (see the proof of Theorem \ref{thm:cd_discr_theorem}, in Appendix \ref{app:cd_convergence_disc}) as:
\begin{align*}
       \|p_{n+1}\|^2 \leq&  \| p_n \|^2 - 2\gamma \delta t \| p_n \|^2 - 2 \delta t p_n \cdot \hat{G}_{\omega}(x_n)+ C\delta t^2 \Big(\|p_n\|^2+ \|\hat{G}_{\omega}(x_n)\|^2\Big), 
\end{align*}
so that
\begin{align*}
\mathbb{E}\left[\|\hat{G}_{\omega}(x_n) \|^2\right]&= \mathbb{E}\|\nabla f(x_n) + (\hat{G}_{\omega}(x_n) - \nabla f(x_n) )\|^2 \\
&= \|\nabla f(x_n)\|^2  + 2 \nabla f(x_n) \cdot \mathbb{E}\left[\hat{G}_{\omega}(x_n) - \nabla f(x_n)\right] + \mathbb{E}\|\hat{G}_{\omega}(x_n) - \nabla f(x_n)\|^2\\
&\leq \|\nabla f(x_n)\|^2 +D,
\end{align*}
since $\mathbb{E}\left[\hat{G}_{\omega}(x_n) - \nabla f(x_n)\right] = \mathbb{E}\left[\hat{G}_{\omega}(x_n)\right] - \nabla f(x_n) =0$.
Therefore,
%
%
%
%
%
%
%
%
%
\begin{align*}
       \mathbb{E}\|p_{n+1}\|^2 \leq &  \| p_n \|^2 - 2\gamma \delta t \| p_n \|^2 - 2 \delta t p_n \cdot \nabla f(x_n) + C \delta t^2 \Big(\|p_n\|^2+ \mathbb{E}\|\hat{G}_{\omega}(x_n) \|^2\Big)  \\
       \leq & \| p_n \|^2 - 2\gamma \delta t \| p_n \|^2 - 2 \delta t p_n \cdot \nabla f(x_n) + C \delta t^2 \Big(\|p_n\|^2+ \|\nabla f(x_n)\|^2 + D\Big). 
\end{align*}

Next, we have:
\begin{equation*}
    \mathbb{E}\|x_{n+1}\|^2 = \|x_n\|^2 + 2 \delta t x_n \cdot p_n + \delta t^2 \|p_n\|^2.
\end{equation*}
Finally
\begin{align*}
    x_{n+1} \cdot p_{n+1} \leq & x_n \cdot p_n + (\gamma +c R^2) \delta t \|x_n\|\|p_n\|  - \delta t x_n \cdot \hat{G}_{\omega}(x_n) +  \delta t \|p_{n}\|^2 \\
    +&  C \delta t^2\Big(\|p_n\|^2+\|\hat{G}_{\omega}(x_n)\|^2\Big),
\end{align*}
so that 
\begin{align*}
    \mathbb{E}[x_{n+1} \cdot p_{n+1}] \leq &  x_n \cdot p_n + (\gamma +c R^2) \delta t \|x_n\|\|p_n\|  - \delta t x_n \cdot \nabla f(x_n) +  \delta t \|p_{n}\|^2 \\
    +&  C \delta t^2\Big(\|p_n\|^2+\|\nabla f(x_n)\|^2 + D\Big).
\end{align*}
Having derived the above upper bounds, convergence can be shown following the same steps as in Section \ref{app:cd_convergence_disc}.

\end{document}